\newcommand{\thp}{\theta}
\newcommand{\thpone}{\theta^{(1)}}
\newcommand{\thptwo}{\theta^{(2)}}
\newcommand{\nset}{\mathbb{N}}
\newcommand{\nsetpos}{\mathbb{N}_{>0}}
\newcommand{\rset}{\mathbb{R}}
\newcommand{\rsetnonneg}{\mathbb{R}_+}
\newcommand{\rsetpos}{\mathbb{R}_+^*}
\newcommand{\1}[1]{\mathbbm{1}_{#1}}
\newcommand{\E}{\mathbb{E}}
\newcommand{\V}{\mathbb{V}}
\newcommand{\prob}{\mathbb{P}}
\newcommand{\repfunc}[1]{h_{#1}}
\newcommand{\grad}[1]{G_{#1}^\M}
\newcommand{\gradt}[1]{\bar{G}_{#1}^\M}
\newcommand{\set}[1]{\mathsf{#1}}
\newcommand{\alg}[1]{\mathcal{#1}}
\newcommand{\signmeas}[1]{\tilde{\mathsf{M}}(#1)}
\newcommand{\signmeasz}[1]{\tilde{\mathsf{M}}_0(#1)}
\newcommand{\signmeasp}[1]{\tilde{\mathsf{M}}^p(#1)}
\newcommand{\signmeaszp}[1]{\tilde{\mathsf{M}}_0^p(#1)}
\newcommand{\meas}[1]{\mathsf{M}(#1)}
\newcommand{\probmeas}[1]{\mathsf{M}_1(#1)}
\newcommand{\signmeasalg}[1]{\tilde{\mathcal{M}}(#1)}
\newcommand{\signmeaszalg}[1]{\tilde{\mathcal{M}}_0(#1)}
\newcommand{\signmeaspalg}[1]{\tilde{\mathcal{M}}^{\tensprod p}(#1)}
\newcommand{\signmeaszpalg}[1]{\tilde{\mathcal{M}}_0^{\tensprod p}(#1)}
\newcommand{\measalg}[1]{\mathcal{M}(#1)}
\newcommand{\probmeasalg}[1]{\mathcal{M}_1(#1)}
\newcommand{\parspace}{\mathsf{\Theta}}
\newcommand{\catdist}{\mathsf{cat}}
\newcommand{\bmf}[1]{\mathsf{F}(#1)}
\newcommand{\tensprod}{\varotimes}
\newcommand{\eqdef}{\vcentcolon=}
\newcommand{\ie}{\emph{i.e.}\xspace}
\newcommand{\eg}{\emph{e.g.}\xspace}
\newcommand{\colbo}{{COLBO}}
\newcommand{\iwae}{{\texttt{IWAE}}}
\newcommand{\vae}{{\texttt{VAE}}}
\newcommand{\siwae}{{\texttt{OSIWAE}}}
\newcommand{\smcsiwae}{{\texttt{SMC-OSIWAE}}}
\newcommand{\OSIWAE}{{\texttt{OSIWAE}}}
\newcommand{\RML}{\texttt{RML}}
\newcommand{\smc}{SMC}
\newcommand{\VSMC}{\texttt{VSMC}}
\newcommand{\OVSMC}{{\texttt{OVSMC}}}
\newcommand{\recf}[1]{\Phi_{#1}}
\newcommand{\rect}[1]{\Psi_{#1}}
\newcommand{\rectsmooth}[1]{\bar{\Psi}_{#1}}
\newcommand{\lip}[1]{\mathsf{Lip}(#1)}
\newcommand{\contr}{\ell}
\newcommand{\lyap}{\ell^\M}
\newcommand{\vfunc}[1]{V_{#1}^\M}
\newcommand{\likfunc}[1]{V_{#1}}
\newcommand{\gradlik}[1]{G_{#1}}
\newcommand{\norm}[1]{\left\lVert #1 \right\rVert}
\newcommand{\abs}[1]{\left\lvert #1 \right\rvert}
\newcommand{\tvnorm}[1]{\left\lVert #1 \right\rVert_{\mathsf{TV}}}
\newcommand{\tvnormsmall}[1]{\lVert #1 \rVert_{\mathsf{TV}}}
\newcommand{\hilbdist}{d}
\newcommand{\z}[2]{Z_{#1}^{#2}}
\newcommand{\epart}[2]{\xi_{#1}^{#2}}
\newcommand{\I}[2]{I_{#1}^{#2}}
\newcommand{\auxrv}{U}
\newcommand{\auxrvb}{u}
\newcommand{\tstat}[2]{\tau_{#1}^{#2}}
\newcommand{\wgt}[2]{\omega_{#1}^{#2}}
\newcommand{\wgtsum}[1]{\Omega_{#1}}
\newcommand{\wgtfunc}[1]{w_{#1}}
\newcommand{\tfunc}[2]{\varphi_{#2}^{#1}}
\newcommand{\res}{\rho^\mathsf{res}}
\newcommand{\bs}{\rho^\mathsf{bs}}
\newcommand{\mds}[2]{W_{#1}^{(#2)}}
\newcommand{\tz}[1]{\mathbf{T}_{#1}}
\newcommand{\tconst}[1]{\boldsymbol{\Upsilon}_{#1}}
\newcommand{\hidker}[1]{\mathbf{M}_{#1}}
\newcommand{\hidkertrue}{\mathbf{M}}
\newcommand{\hiddens}[1]{m_{#1}}
\newcommand{\hiddenstrue}{m}
\newcommand{\emker}[1]{\mathbf{G}_{#1}}
\newcommand{\emkertrue}{\mathbf{G}}
\newcommand{\skertrue}{\mathbf{S}}
\newcommand{\emdens}[1]{g_{#1}}
\newcommand{\emdenstrue}{g}
\newcommand{\propdens}[1]{r_{#1}}
\newcommand{\propker}[1]{\mathbf{R}_{#1}}
\newcommand{\kernel}[1]{\mathbf{#1}}
\newcommand{\lker}[2]{\mathbf{L}_{#1}\langle #2\rangle}
\newcommand{\lkert}[3]{\mathbf{L}_{#1}^{#2}\langle #3\rangle}
\newcommand{\lkertang}[2]{\tilde{\mathbf{L}}_{#1}\langle #2\rangle}
\newcommand{\gtilde}[2]{\tilde{\mathbf{F}}_{#1}^{#2}}
\newcommand{\htilde}[2]{\tilde{\mathbf{H}}_{#1}^{#2}}
\newcommand{\indmeas}{\nu}
\newcommand{\vardist}[2]{q_{#1}^{#2}}
\newcommand{\pfilt}[2]{p_{#1}^{#2}}
\newcommand{\filt}[2]{\phi_{#2}^{#1}}
\newcommand{\refm}{\lambda_{\alg{X}}}
\newcommand{\refg}{\lambda_{\alg{Y}}}
\newcommand{\tang}[2]{\psi_{#2}^{#1}}
\newcommand{\xinit}{\chi}
\newcommand{\mutilde}{\tilde{\boldsymbol{\mu}}}
\newcommand{\bq}{\tilde{b}}
\newcommand{\rhoq}{\varrho_\eq}
\newcommand{\cq}{c}
\newcommand{\cqt}{\tilde{c}}
\newcommand{\kq}{\kappa_1}
\newcommand{\kqt}{\kappa_2}
\newcommand{\eq}{\epsilon}
\newcommand{\bg}{\tilde{\kappa}_1}
\newcommand{\bgw}{\tilde{\kappa}_2}
\newcommand{\M}{M}
\newcommand{\lipz}[1]{L_{#1}}
\newcommand{\biasfunc}{\beta}
\newcommand{\crect}{c_{\rect{}}}
\newcommand{\ctang}{c_{\tang{}{}}}
\theoremstyle{plain}
\newtheorem{theorem}{Theorem}[section]
\newtheorem{proposition}[theorem]{Proposition}
\newtheorem{lemma}[theorem]{Lemma}
\newtheorem{corollary}[theorem]{Corollary}
\theoremstyle{definition}
\newtheorem{definition}[theorem]{Definition}
\newtheorem{assumption}[theorem]{Assumption}
\theoremstyle{remark}
\newtheorem{remark}[theorem]{Remark}
\begin{document}

\twocolumn[

\ourtitle{Recursive Learning of Asymptotic Variational Objectives}

\ourauthor{ Alessandro Mastrototaro \qquad Mathias Müller \qquad Jimmy Olsson}

\ouraddress{Department of Mathematics,
	KTH Royal Institute of Technology,
	Stockholm, Sweden
	\\\texttt{\{alemas, matmul, jimmyol\}@kth.se}}

\begin{abstract}
  General \emph{state-space models} (SSMs) are widely used in statistical machine learning and are among the most classical generative models for sequential time-series data. SSMs, comprising latent Markovian states, can be subjected to \emph{variational inference} (VI), but standard VI methods like the \emph{importance-weighted autoencoder} (IWAE) lack functionality for streaming data. To enable online VI in SSMs when the observations are received in real time, we propose maximising an IWAE-type variational lower bound on the asymptotic \emph{contrast function}, rather than the standard IWAE ELBO, using stochastic approximation. Unlike the \emph{recursive maximum likelihood }method, which directly maximises the asymptotic contrast, our approach, called online sequential IWAE (OSIWAE), allows for online learning of both model parameters and a Markovian recognition model for inferring latent states. By approximating filter state posteriors and their derivatives using sequential Monte Carlo (SMC) methods, we create a particle-based framework for online VI in SSMs. This approach is more theoretically well-founded than recently proposed \emph{online variational SMC} methods. We provide rigorous theoretical results on the learning objective and a numerical study demonstrating the method's efficiency in learning model parameters and particle proposal kernels.
\end{abstract}]

\section{Introduction}

The \emph{variational autoencoder} (\vae) \citep{kingma:welling:2014} is a foundational probabilistic method in machine learning, renowned for its capability to learn latent-variable models. Building on this framework, the \emph{importance-weighted autoencoder} (\iwae) \citep{burda:2016} was developed to enhance the performance of probabilistic inference by providing a tighter bound on the marginal likelihood compared to the standard {\vae} using a multi-sample objective. The {\vae} and the {\iwae} have found diverse applications across many fields, including, \eg, deep generative modelling, recommendation systems, biology, and image compression \citep{zhang:2020,bond:2021,gayoso2021joint,xu2022multi,daudel2023alpha,doucet:moulines:thin:2023}.
In the case where the observed data is modelled by \emph{state-space models} (SSMs), also known as general state-space \emph{hidden Markov models} (HMMs), which is the most classical probabilistic generative modelling framework for time series  \citep[see][]{cappe:moulines:ryden:2005}, \emph{variational sequential Monte Carlo} (\VSMC) \citep{maddison:2017,le:2018,naesseth:2018} can be seen as a further development of the {\iwae}, where the importance-sampling-based estimator of the likelihood used in the {\iwae} is replaced by the (still unbiased) likelihood estimator provided by \emph{sequential Monte Carlo} (\smc) \emph{methods}, also known as \emph{particle filters} \citep{doucet:defreitas:gordon:2001,chopin:papaspiliopoulos:2020}. {\VSMC} not only enables parameter estimation in SSMs, but also allows for acceleration of {\smc} by optimising the particle proposal distributions. Such adaptation, which enables reduced particle degeneracy and improved accuracy, is essential to make the {\smc} method better suited for complex and high-dimensional data-assimilation problems. An inherent, general problem of {\VSMC} is the difficulty to reparameterise the resampling operation of the particle filter and consequently to estimate the corresponding ELBO gradient with acceptable accuracy. This has prompted its creators to use an \emph{ad hoc} truncated version of the gradient, in which terms with high variance are simply excluded, leading to a bias that is difficult to control. Moreover, since the standard implementation of {\VSMC} is primarily designed for batch-processing scenarios, it is not suitable for processing streaming time-series data, which is the focus of this work. An attempt to adapt the {\VSMC} methodology to online scenarios was recently made by \cite{ovsmc}, but although this methodology has been shown to work well in some cases, it relies on a time-distributed stochastic gradient derived from its batch-oriented predecessors, and consequently suffers from the same lack of theoretical support. 

Thus, in this paper, we take a different approach and focus instead on maximising a lower bound on the \emph{asymptotic contrast function}, alternatively termed the \emph{log-likelihood rate}, given by the ergodic limit of the time-normalised log-likelihood function when the number of observations tends to infinity \citep[see, \emph{e.g.},][]{cappe:moulines:ryden:2005,tadic:doucet:2018}, in an online scenario in which the data is only made available sequentially in real time. More specifically, in the proposed algorithm, which we refer to as the \emph{online sequential importance-weighted autoencoder} (\siwae), a \emph{contrast lower bound} (\colbo), interpreted as an {\iwae}-type multi-sample variational objective, is maximised using a Robbins--Monro scheme with Markovian perturbations targeting the zeros of the {\colbo} gradient. This allows to learn, simultaneously, SSM parameters as well as a Markovian recognition model depending on the given data point and on the previous latent state. The latter can be used as an effective particle proposal or as a sequential encoder similar to the \emph{variational recurrent neural networks} introduced by \citet{chung:2015}. 
In support of {\siwae}, we present  theoretical results that establish the {\colbo} ergodic limit and furthermore provide $\mathcal{O}(M^{-1})$ bounds (where $M$ is the {\colbo} sample size) on the discrepancies between the {\colbo} and the asymptotic contrast function as well as between their gradients. 

Ideally, {\siwae} requires access to the flow of filter state posteriors and their derivatives, which are however intractable in general. By approximating these measures using particle filters, a practical version of the algorithm, referred to as {\smcsiwae}, is obtained. In particular, the filter derivative, or \emph{tangent filter}, requires the calculation of expectations of additive state functionals---a problem that has received much attention in the SMC literature \citep[see, \eg,][]{kitagawa:sato:2001,olsson:cappe:douc:moulines:2008,delmoral:doucet:singh:2010,olsson:westerborn:2017}. Hence, we propose an SMC-based {\siwae}, called {\smcsiwae}, incorporating the latest advancement in this line of research, namely the \emph{AdaSmooth} algorithm \citep{mastrototaro:olsson:alenlov:2021}, which allows for online approximation of the tangent filters with complexity and memory requirements that only grow linearly with the number of particles. Unlike {\OVSMC}, {\smcsiwae} has a solid theoretical underpinning. 
When it comes to learning the model parameters, {\smcsiwae} approaches the particle-based \emph{recursive maximum likelihood} \citep{legland:mevel:1997,poyiadjis:doucet:singh:2011,delmoral:doucet:singh:2015} with increasing $M$, but where a lower bound on the contrast function is maximised rather than the contrast function itself. On the other hand, when learning is restricted to proposal parameters only, {\smcsiwae} becomes very similar to {\OVSMC}.
In this way, {\smcsiwae} can be interpreted as a golden compromise between these two methods. Furthermore, the {\smcsiwae} updating formula sheds some light on the bias of {\OVSMC} (and thus of {\VSMC}), which can be expected to be significant when the observations are noisy relative to the latent state signal. This latter conclusion is also confirmed by our simulations.

Finally, we present numerical experiments to showcase the effectiveness of the algorithm in learning SSM parameters as well as close-to-optimal particle proposals. These experiments highlight {\siwae}'s advantages over its predecessors {\OVSMC} and particle-based {\RML}.

The paper is structured as follows. In Section~\ref{sec:background} we provide a brief overview of SSMs and introduce the concepts of the asymptotic contrast function and the {\RML} procedure. In Section~\ref{sec:method}, we introduce the general {\siwae} idea, and illustrate our theoretical results. In Section~\ref{sec:smc_algo} we describe the particle-based implementation of {\siwae}, {\smcsiwae}, and in Section~\ref{sec:experiments} we provide some some numerical illustrations of the latter. Details and proofs are found in the appendix. 

\section{Background}\label{sec:background}
\subsection{Model and notation}
An SSM is a bivariate Markov chain $(X_t,Y_t)_{t\ge 0}$ evolving on some measurable state space $(\set{X} \times \set{Y}, \alg{X}\tensprod\alg{Y})$, which is typically Euclidean and furnished with the Borel $\sigma$-field. Here the so-called \emph{state process} $(X_t)_{t \geq 0}$ is latent, or \emph{hidden}, and only partially observed through the \emph{observation process} $(Y_t)_{t \geq 0}$. For a given time horizon $T\in\nset$, the joint law of $(X_t, Y_t)_{t = 0}^T$ is   
$p_\thp(x_{0:T},y_{0:T})
	=\hiddens{0}(x_0)\emdens{\thp}(y_0\mid x_0)\prod_{t=1}^{T}\hiddens{\thp}(x_t\mid x_{t-1})\emdens{\thp}(y_t\mid x_t)$,
where $\hiddens{\thp}$ and $\emdens{\thp}$ are Markov transition densities w.r.t. some reference measures (typically Lebesgue) on $\alg{X}$ and $\alg{Y}$, respectively, and $\hiddens{0}$ is some probability density w.r.t. the same reference measure on $\alg{X}$. Here $x_{0:T} = (x_0, \ldots, x_T)$ is our generic notation for vectors and $\thp\in\parspace\subset\rset^p$,  $p\in\nsetpos$, is a parameter vector.  
Under 
this dynamics, the state process is itself Markov with transition density $\hiddens{\thp}$ and initial density $\hiddens{0}$. Furthermore, conditionally on $(X_t)_{t = 0}^T$, the observations $(Y_t)_{t = 0}^T$ are independent with marginals $\emdens{\thp}(\cdot \mid X_t)$, $0 \leq t \leq T$. 
In the practical application of SSMs, the latent states are usually inferred from the observations using the so-called \emph{joint-smoothing distributions} $\filt{\thp}{0:t}(x_{0:t}) \eqdef p_\thp(x_{0:t} \mid y_{0:t})$ or \emph{filter distributions} $\filt{\thp}{t}(x_t) \eqdef p_\thp(x_{t} \mid y_{0:t})$. 
These state posteriors are also of paramount importance when inferring the parameter $\thp$ using maximum likelihood estimation   \citep[see, \eg,][Chapters~10--11]{cappe:moulines:ryden:2005}.


\subsection{The asymptotic contrast function}
Given a batch $Y_{0:T}$ of observations,  
the maximum-likelihood estimator (MLE) of $\thp$ is the parameter $\hat{\thp} \in\parspace$ such that $\log p_{\hat{\thp}}(Y_{0:T})\ge \log p_{\thp}(Y_{0:T})$ for all $\thp\in\parspace$.
However, in this paper we focus on the more challenging situation where the data become available via a data stream $(Y_t)_{t \in \nset}$. The data is assumed to be generated by some SSM, which does not necessarily belong to the parametric family governed by $\thp$. In this case, it becomes increasingly costly to evaluate the log-likelihood, up to the point where reprocessing all the observations as soon as a new one is recorded becomes infeasible. 
We will therefore focus instead on maximising the so-called \emph{contrast function}  
$\contr : \parspace \ni \thp \mapsto  \lim_{t\to\infty} t^{-1}\log p_\thp(Y_{0:t})$ with respect to $\theta$. In order to establish the (a.s.) existence of this objective (see~Section~\ref{sec:theory}), one typically considers the \emph{extended Markov chain} $(X_{t+1},Y_{t+1},\filt{\thp}{t},\tang{\thp}{t})_{t\in\nset}$, where $\tang{\thp}{t}(x)=\nabla_\thp \filt{\thp}{t}(x)$ is the so-called \emph{tangent-filter} density. Indeed, the fact that this process is Markov follows from the Markovianity of the SSM and the existence of mappings $\recf{\thp}$ and $\rect{\thp}$ such that for all $t\in\nset$, $\filt{\thp}{t+1}=\recf{\thp}(\filt{\thp}{t},Y_{t+1})$ 
and $\tang{\thp}{t+1} = \rect{\thp}(\tang{\thp}{t}, \filt{\thp}{t}, Y_{t+1})$ (see Appendix~\ref{app:proofs} for details). This chain can be shown to be ergodic under certain mixing assumptions \citep[see, \eg,][]{legland:mevel:1997,douc:matias:2002,tadic:doucet:2005}, and we denote by $\Pi_\thp
$ its stationary distribution and by $\bar{\Pi}_\thp$ the marginal of $\Pi_\thp$ w.r.t. the observation and the filter (it should be remarked that the existence of $\Pi_\thp$ is a mathematically involved topic, and we refer to the previous references for discussions). Using the strong law of large numbers for Markov chains, this construction allows us to express the contrast function as an expectation under $\Pi_\thp$ according to 
\begin{multline} \label{eq:contrast:ergodic:limit}
\ell(\theta) = \lim_{t \to \infty}\frac{1}{t} \sum_{s = 0}^{t - 1} \log \iint g_\thp(Y_{s + 1} \mid x_{s + 1}) \\\times m_\thp(x_{s + 1} \mid x_s) \, dx_{s + 1} \, \filt{\thp}{s}(x_s)\, dx_s 
\\= \lim_{t \to \infty} \frac{1}{t}\sum_{s = 0}^{t - 1} \likfunc{\thp}(Y_{s + 1},\filt{\thp}{s}) 
= \iint \likfunc{\thp}(y,\filt{}{})\,\bar{\Pi}_\thp(dy,d\filt{}{})
\end{multline}
(a.s.), where we have defined 
\begin{equation}\label{eq:def_likfunc}
	\likfunc{\thp}(y,\filt{}{})\eqdef\log \iint \emdens{\thp}(y \mid x')\hiddens{\thp}(x' \mid x)\,dx' \,\filt{}{}(x)\, dx. 
\end{equation}
It has been demonstrated that if the data is assumed to be generated by some SSM in the parametric family of interest, specified by some `true' parameter $\thp^*\in\parspace$, then, under some identifiability assumptions, $\contr(\thp)$ is maximised by $\thp^*$ and the MLE tends 
tends almost surely to $\thp^*$ as $t$ tends to infinity (strong consistency); see, \eg, \citet[Theorems~1--2]{douc:matias:2002} or \citet[Section~12.4]{cappe:moulines:ryden:2005}. In {\RML}, the contrast function is maximised online using stochastic approximation \citep{robbins:monro:1951}. 
Arguing as in \eqref{eq:contrast:ergodic:limit}, it can be shown that 
$\nabla_{\thp}\contr(\thp)=\int \gradlik{\thp}(y,\filt{}{},\tang{}{})\,\tilde{\Pi}_\thp(dy,d\filt{}{},d\tang{}{})$,
where $\tilde{\Pi}_\thp
$ is the marginal of $\Pi_\thp$ w.r.t. the observation, the filter, and the tangent filter, and
\begin{multline}
	\gradlik{\thp}(y,\filt{}{},\tang{}{})\eqdef\frac{\iint\emdens{\thp}(y\mid x')\hiddens{\thp}(x' \mid x)\,dx' \tang{}{}(x)\,dx}{\iint \emdens{\thp}(y\mid x')\hiddens{\thp}(x' \mid x)\, dx' \filt{}{}(x)\,dx}
	\\+\frac{\iint\nabla_{\thp}\{\emdens{\thp}(y\mid x')\hiddens{\thp}(x'\mid x)\}\,dx'\filt{}{}(x)\,dx}{\iint \emdens{\thp}(y\mid x')\hiddens{\thp}(x' \mid x)\, dx' \filt{}{}(x)\,dx}.
\end{multline}
Now, letting $\nabla_{\thp}\contr(\thp)$ serve as the mean field of a stochastic approximation scheme with state-dependent Markov noise \citep[see, \emph{e.g.},][Case~2]{karimi:2019}, a recursive Robbins--Monro algorithm finding a stationary point of the constrast function is given by 
\begin{equation}\label{eq:rml_update}
	\thp_{t+1}\gets\thp_t+\gamma_{t+1}\gradlik{\thp_t}(y_{t+1},\filt{\thp_{0:t}}{t},\tang{\thp_{0:t}}{t}), \quad t \in \nset, 
\end{equation}
followed by the updates $\filt{\thp_{0:t+1}}{t+1}=\recf{\thp_{t+1}}(\filt{\thp_{0:t}}{t},y_{t+1})$ and $\tang{\thp_{0:t+1}}{t+1}= \rect{\thp_{t+1}}(\tang{\thp_{0:t}}{t}, \filt{\thp_{0.t}}{t}, y_{t+1})$, where $(\gamma_t)_{t \in \nsetpos}$ is a suitable-chosen sequence of step sizes. The recursion \eqref{eq:rml_update} is initialised by some guess $\thp_0$ with associated time-zero filter and tangent filter $\filt{\thp_0}{0}$ and $\tang{\thp_0}{0}$, respectively. Except in cases where the model is linear Gaussian, $\gradlik{\thp}$, $\recf{\thp}$, and $\rect{\thp}$ have no closed-form expressions, so the practical implementation of \eqref{eq:rml_update} generally requires these quantities to be approximated. For this purpose, SMC methods have proved particularly useful \citep[see, \eg,][]{poyiadjis:doucet:singh:2011, delmoral:doucet:singh:2015,olsson:westerborn:2018}, and we shall return to this in Section~\ref{sec:smc_algo}.


\section{The online sequential importance-weighted autoencoder (\siwae)}\label{sec:method}
In the following our goal is to determine a lower bound on the asymptotic contrast by following the principles of the {\iwae} and to design a stochastic-approximation scheme to maximise the same. By maximising a lower bound on the contrast function, we are able to learn not only the model parameters, but also a variational recognition model, which can be used, for example, as a particle proposal. This is not possible through standard {\RML}. Similar to {\RML}, our first algorithm, {\siwae}, outlined in Section~\ref{sec:method} will not be implementable in the general case. Therefore, in Section~\ref{sec:smc_algo} we will present a practical particle-based version, {\smcsiwae}.


\subsection{The {\siwae} algorithm}

We return to \eqref{eq:def_likfunc} and focus on the inner integral $\int \emdens{\thp}(y\mid x')\hiddens{\thp}(x'\mid x)\,dx'$ of $\likfunc{\thp}$, which represents the likelihood of a certain observation $y\in\set{Y}$ of the SSM given the latent state $x \in\set{X}$ at the previous time step. Given $x$ and $y$, we may be interested in inferring the latent state $x'$ at the next time step by determining the conditional distribution $p_\thp(x'\mid x,y)\propto \emdens{\thp}(y\mid x')\hiddens{\thp}(x'\mid x)$. This conditional distribution is of crucial importance in particle filters, since it corresponds to the \emph{locally optimal proposal} \citep[see, \eg,][for details]{cornebise:moulines:olsson:2008}. The optimal proposal allows the particles to be guided more efficiently than the naive \emph{bootstrap proposal} $\hiddens{\thp}(x' \mid x)$, which mutates the particles `blindly', without including information about the current observation \citep{gordon:salmond:smith:1993}. 
Alternatively, if the SSM is interpreted as a model for encoding a process on a high-dimensional space $\set{Y}$ into a space $\set{X}$ of significantly lower dimension, then $p_\thp(x' \mid x, y)$ provides a sequential encoder that takes into account both the observed data and the previously encoded state \citep[see, \eg,][]{chung:2015}. The optimal kernel can be determined in a closed form for only a few model types \citep[][Section~7.2.2.2]{doucet:godsill:andrieu:2000,cappe:moulines:ryden:2005}; thus, our 
goal is to learn an approximation $\propdens{\thp}(x' \mid x, y)$, referred to simply as the \emph{proposal}, of the optimal proposal using variational inference. 
Note that $\propdens{\thp}$ is parameterised by the same $\thp$ as the SSM. This notation covers both the case where the parameters of the proposal are a subset of the parameters of the SSM as well as the more interesting case where the proposal involves additional parameters. In the latter case, $\parspace$ is the product of a model and a proposal-only parameter space.

Now, let $\propdens{\thp}$ be such that for every $x\in\set{X}$ and $y\in\set{Y}$, 
$\{x' : \propdens{\thp}(x' \mid x, y)=0\} \subseteq \{x' : \hiddens{\thp}(x'\mid x)=0 \}$. 
Moreover, for a given probability density $\phi$ on $\set{X}$ and $y\in\set{Y}$, define the joint probability densities $\pfilt{\thp}{\filt{}{}}(x ,x', y)\eqdef\filt{}{}(x)\hiddens{\thp}(x'\mid x)\emdens{\thp}(y\mid x')$ and $\vardist{\thp}{\filt{}{}}(x,x'\mid y)\eqdef \filt{}{}(x)\propdens{\thp}(x'\mid x, y)$. Using this notation and definition \eqref{eq:def_likfunc}, write 
\begin{equation}
    \likfunc{\thp}(y,\filt{}{}) 
	=\log 
 \E_{\vardist{\thp}{\filt{}{}}(\cdot \mid y)}
 \left[\frac{\pfilt{\thp}{\filt{}{}}(X,X', y)}{\vardist{\thp}{\filt{}{}}(X,X'\mid y)}\right].
\end{equation}

From this expression we immediately see that we are in the framework of {\vae}s
\citep{kingma:welling:2014}, where $\pfilt{\thp}{\filt{}{}}$ and $\vardist{\thp}{\filt{}{}}$ are the \emph{generative} and  \emph{recognition models}, respectively, in the special case where the unobserved latent variable comprises two consecutive states. From here it is a short step to generalising the variational objective using the {\iwae} framework
\citep{burda:2016}
, yielding 
\begin{multline}
	\vfunc{\thp}(y,\filt{}{}) \\ \eqdef 
 \E_{\vardist{\thp}{\filt{}{}}(\cdot \mid y)^{\varotimes M}}
 \left[\log\left(\frac{1}{\M}\sum_{i=1}^{\M}\frac{\pfilt{\thp}{\filt{}{}}(X^i,X'^i, y)}{\vardist{\thp}{\filt{}{}}(X^i,X'^i\mid y)}\right)\right]
 \\
 \le \likfunc{\thp}(y,\filt{}{}),
\end{multline}
where $\M\in\nsetpos$ is a sample-size hyperparameter.
Here $(X^i, X'^i)_{i = 1}^\M$ are independent draws from $\vardist{\thp}{\filt{}{}}(\cdot \mid y)$. 
Thus, we may consider the asymptotic variational objective 
$\contr^\M : \Theta \ni \thp \mapsto \lim_{t \to \infty} t^{-1} \sum_{s = 0}^{t - 1} \vfunc{\thp}(Y_{s + 1},\filt{\thp}{s})$, whose (a.s.) existence is guaranteed by Proposition~\ref{prop:strong:limits} and which, since it  
bounds the contrast function from below, will be referred to as \emph{contrast lower bound} (\colbo). Note that the {\colbo} can be expressed as the ergodic limit $\contr^\M(\thp) = \int \vfunc{\thp}(y,\filt{}{})\,\bar{\Pi}_\thp(dy,d\filt{}{})$. 
Now, similarly to the {\RML}, we may proceed by constructing a stochastic-approximation scheme with state-dependent Markov noise targeting the zeros of $\nabla_{\thp}\contr^\M(\thp)$, which in turn coincides with the (a.s.) limit of $t^{-1} \sum_{s = 0}^{t - 1} \nabla_\thp \vfunc{\thp}(Y_{s + 1},\filt{\thp}{s})$; see again Proposition~\ref{prop:strong:limits}. In order to identify the stochastic update, we need to derive an explicit expression for $\nabla_{\thp}\vfunc{\thp}(Y_{t+1},\filt{\thp}{t})$ as a function of the states of the extended Markov chain. 
In this derivation, we will apply the reparameterisation trick \citep{kingma:welling:2014}, by assuming that 
there exists some auxiliary random variable $\auxrv$, taking on values in some measurable space $(\set{U}, \alg{U})$ and having distribution $\indmeas(\auxrvb)$ on $(\set{U}, \alg{U})$ (the latter not depending on $\thp$), and some function $\repfunc{\thp}$ on $\set{X} \times \set{Y} \times \set{U}$, parameterised by $\thp$ and differentiable with respect to the same for any given argument $(x, y, \auxrvb)$, such that  for every $(x, y) \in \set{X} \times \set{Y}$, the pushforward distribution $\indmeas \circ \repfunc{\thp}^{-1}(x, y, \cdot)$ coincides with that governed by $\propdens{\thp}(\cdot \mid x, y)$. Defining the weight function 
\begin{equation}
	\wgtfunc{\thp}(x,y,\auxrvb)\eqdef\frac{\emdens{\thp}(y\mid \repfunc{\thp}(x, y, \auxrvb)) \hiddens{\thp}(\repfunc{\thp}(x, y, \auxrvb)\mid x)}{\propdens{\thp}(\repfunc{\thp}(x,y, \auxrvb)\mid x, y)}
\end{equation}
allows us to write, for a given $y \in \set{Y}$,  
\begin{multline}
	\nabla_\thp \vfunc{\thp}(y,\filt{\thp}{t})
	= \nabla_\thp\iint \log\left(\frac{1}{\M}\sum_{i=1}^{\M}\wgtfunc{\thp}(x^i, y, \auxrvb^i)\right)
	\\\times\prod_{j=1}^{\M}\indmeas(\auxrvb^j) \filt{\thp}{t}(x^j)\,d\auxrvb^{1:\M}\,dx^{1:\M}
	\\
 =\E_{(\filt{\thp}{t}\tensprod\indmeas)^{\varotimes \M}} \left[\frac{\sum_{i=1}^{\M}\nabla_\thp\wgtfunc{\thp}(X^i, y, \auxrv^i)}{\sum_{i'=1}^{\M}\wgtfunc{\thp}(X^{i'}, y, \auxrv^{i'})}\right]
	\\
 +\sum_{j'=1}^{\M}\iint\log\left(\frac{1}{\M}\sum_{i=1}^{\M}\wgtfunc{\thp}(x^i, y, \auxrvb^i)\right)\nabla_\thp\filt{\thp}{t}(x^{j'})
 \\\times 
\prod_{\substack{j = 1 \\ j \neq j'}}^M
 \filt{\thp}{t}(x^j)\prod_{k=1}^{\M}\indmeas(\auxrvb^{k})\,d\auxrvb^{1:\M}\,dx^{1:\M},
\end{multline}
where, in the first term,  
$(X^i, \auxrv^i)_{i=1}^\M$ are i.i.d. with distribution $\filt{\thp}{t}(x)\indmeas(\auxrvb)$. 
Note that by symmetry, the terms of the outer sum are identical; hence, letting 
\begin{multline}
	\grad{\thp}(y,\filt{}{},\tang{}{})\eqdef\E_{(\filt{}{}\tensprod\indmeas)^{\varotimes \M}} \left[\frac{\sum_{i=1}^{\M}\nabla_\thp\wgtfunc{\thp}(X^i, y, \auxrv^i)}{\sum_{i'=1}^{\M}\wgtfunc{\thp}(X^{i'}, y, \auxrv^{i'})}\right]
	\\+\M\int\E_{(\filt{}{}\tensprod\indmeas)^{\varotimes (\M-1)}}\left[\log\left(\frac{1}{\M}\wgtfunc{\thp}(x, y, \auxrvb)\vphantom{\sum_{i=1}^{\M-1}\wgtfunc{\thp}}\right.\right.
	\\\left.\left.+\frac{1}{\M}\sum_{i=1}^{\M-1}\wgtfunc{\thp}(X^i, y, \auxrv^i)\right)\right]\tang{}{}(x)\indmeas(\auxrvb)\,d\auxrvb\,dx,
\end{multline}
we may write $\nabla_\thp\vfunc{\thp}(y_{t+1},\filt{\thp}{t})=\grad{\thp}(y_{t+1},\filt{\thp}{t},\tang{\thp}{t})$ and, consequently,  
$\nabla_{\thp}\contr^\M(\thp)=
\int \grad{\thp}(y,\filt{}{},\tang{}{})\,\tilde{\Pi}_\thp(dy,d\filt{}{},d\tang{}{})$. Thus, similar to \eqref{eq:rml_update}, we may find a zero of $\nabla_{\thp}\contr^\M(\thp)$ using the Robbins--Monro scheme 
\begin{equation}\label{eq:osiwae_update}
	\thp_{t+1}\gets\thp_t+\gamma_{t+1}\grad{\thp_t}(Y_{t+1},\filt{\thp_{0:t}}{t},\tang{\thp_{0:t}}{t}), \quad t \in \nset, 
\end{equation}
followed by the updates $\filt{\thp_{0:t+1}}{t+1}=\recf{\thp_{t+1}}(\filt{\thp_{0:t}}{t},Y_{t+1})$ and $\tang{\thp_{0:t+1}}{t+1}= \rect{\thp_{t+1}}(\tang{\thp_{0:t}}{t}, \filt{\thp_{0.t}}{t}, Y_{t+1})$, where $(\gamma_t)_{t \in \nsetpos}$ is a suitable-chosen sequence of step sizes. We refer to schedule \eqref{eq:osiwae_update}, which is initialised as the {\RML} \eqref{eq:rml_update}, as the 
\emph{online sequential importance-weighted auto-encoder} (\siwae).
As in the case of {\RML}, a practical implementation requires the approximations of $\grad{\thp}$, $\recf{\thp}$, and $\rect{\thp}$. This is the objective of Section~\ref{sec:smc_algo}, where we describe a practical version of the {\siwae} based on SMC methods. 

\subsection{Theoretical properties of the COLBO}\label{sec:theory}
All the results displayed below are established under strong mixing assumptions on the SSM and the data-generating process. Furthermore, $\hiddens{\thp}$, $\emdens{\thp}$, $\propdens{\thp}$, and their compositions with the reparameterisation function $h_\thp$ are assumed be differentiable in $\thp$ with bounded gradients. These assumptions are standard in the literature and point to applications where the state and parameter spaces are compact. All details and proofs are found in Appendix~\ref{app:proofs}. Our first result serves to define properly the objectives under consideration. 
\begin{proposition}
\label{prop:strong:limits}
	For all $\M\in\nsetpos$ there exist real-valued differentiable functions $\contr$ and $\contr^\M$ on $\parspace$ such that for all $\thp\in\parspace$, $\prob$-a.s., 
 	\begin{align}
		&\lim_{t \to \infty}\frac{1}{t} \sum_{s=0}^{t-1}\likfunc{\thp}(Y_{s+1},\filt{\thp}{s}) = \contr(\thp), 
  \\&\lim_{t \to \infty} \frac{1}{t}\sum_{s=0}^{t-1}\gradlik{\thp}(Y_{s+1},\filt{\thp}{s},\tang{\thp}{s}) =  \nabla_\thp\contr(\thp),
	\\ &\lim_{t \to \infty}\frac{1}{t}\sum_{s=0}^{t-1}\vfunc{\thp}(Y_{s+1},\filt{\thp}{s}) = \contr^\M(\thp), 
  \\&\lim_{t \to \infty} \frac{1}{t}\sum_{s=0}^{t-1}\grad{\thp}(Y_{s+1},\filt{\thp}{s},\tang{\thp}{s}) = \nabla_\thp\contr^\M(\thp).
	\end{align}
\end{proposition}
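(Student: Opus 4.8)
The plan is to recognise all four Cesàro averages as ergodic averages of functionals of the extended Markov chain $(X_{t+1},Y_{t+1},\filt{\thp}{t},\tang{\thp}{t})_{t\in\nset}$ and to read off their almost-sure limits from a strong law of large numbers (SLLN) for this chain. Under the stated mixing assumptions the chain is geometrically ergodic with stationary law $\Pi_\thp$; write $\bar{\Pi}_\thp$ and $\tilde{\Pi}_\thp$ for its marginals in $(Y,\filt{}{})$ and $(Y,\filt{}{},\tang{}{})$. The summands $\likfunc{\thp},\vfunc{\thp}$ depend only on $(Y,\filt{}{})$ and $\gradlik{\thp},\grad{\thp}$ only on $(Y,\filt{}{},\tang{}{})$, so each average has the form $t^{-1}\sum_{s=0}^{t-1}f_\thp(\mathrm{state}_s)$. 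Setting
\[
\contr(\thp)\eqdef\int \likfunc{\thp}(y,\filt{}{})\,\bar{\Pi}_\thp(dy,d\filt{}{}),\qquad \contr^\M(\thp)\eqdef\int \vfunc{\thp}(y,\filt{}{})\,\bar{\Pi}_\thp(dy,d\filt{}{}),
\]
the ergodic theorem yields the first and third limits $\prob$-a.s., and the analogous stationary integrals of $\gradlik{\thp},\grad{\thp}$ against $\tilde{\Pi}_\thp$ give the second and fourth, once the relevant integrability is in place.

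Integrability is where the boundedness hypotheses enter. The mixing assumptions force $\emdens{\thp}$ and $\hiddens{\thp}$ to be bounded above and below, so $\likfunc{\thp}$ and $\vfunc{\thp}$ are uniformly bounded and hence $\bar{\Pi}_\thp$-integrable. For the gradient functionals I would use that $\hiddens{\thp},\emdens{\thp},\propdens{\thp}$ and their reparameterised compositions have bounded $\thp$-gradients, which bounds the weight ratios and their logarithmic derivatives in $\gradlik{\thp}$ and $\grad{\thp}$; the only summand not bounded outright is the one linear in the tangent filter. Here I would invoke the standard consequence of strong mixing that the filter derivative forgets its initialisation exponentially fast, so that $\tvnorm{\tang{\thp}{t}}$ is uniformly bounded and the linear term has finite moment under $\tilde{\Pi}_\thp$. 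This secures the a.s. existence of all four limits.

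It remains to show that $\contr$ and $\contr^\M$ are differentiable with gradients equal to the second and fourth limits. The key is that, for each fixed realisation and finite $t$, the value average is already a differentiable function of $\thp$ whose gradient is the corresponding gradient average: since $\nabla_\thp[\vfunc{\thp}(Y_{s+1},\filt{\thp}{s})]=\grad{\thp}(Y_{s+1},\filt{\thp}{s},\tang{\thp}{s})$ (the total $\thp$-derivative, which accounts for the dependence of $\filt{\thp}{s}$ on $\thp$ through $\tang{\thp}{s}=\nabla_\thp\filt{\thp}{s}$), as established above, we have
\[
\nabla_\thp\Big[\tfrac{1}{t}\textstyle\sum_{s=0}^{t-1}\vfunc{\thp}(Y_{s+1},\filt{\thp}{s})\Big]=\tfrac{1}{t}\textstyle\sum_{s=0}^{t-1}\grad{\thp}(Y_{s+1},\filt{\thp}{s},\tang{\thp}{s}),
\]
and likewise for $\likfunc{\thp},\gradlik{\thp}$. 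I would then invoke the standard analysis lemma that a pointwise limit of differentiable functions whose gradients converge locally uniformly is differentiable, with gradient the limit of the gradients. Differentiability thus reduces to upgrading the a.s. pointwise convergence of the gradient averages to locally uniform convergence in $\thp$.

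The hard part is exactly this uniform-in-$\thp$ convergence, which I would obtain from a uniform SLLN over the compact parameter space $\parspace$. Using the uniform geometric mixing of the extended chain together with the uniform boundedness and Lipschitz continuity in $\thp$ of the gradient functionals (from the bounded-gradient assumptions and the exponential forgetting of $\tang{\thp}{t}$), one establishes equicontinuity of the family of gradient averages and a single full-measure event on which they converge uniformly on compacts. The genuine subtlety is that the entire flow $(\filt{\thp}{s},\tang{\thp}{s})$ depends on $\thp$, so the summands are not a fixed functional evaluated at $\thp$-independent states; it is precisely the uniform forgetting property that decouples the long-run average from the initialisation and renders the $\thp$-dependence uniformly Lipschitz, and controlling this coupling is the technically delicate step.
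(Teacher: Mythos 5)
Your overall architecture --- recasting all four averages as functionals of the extended chain $(X_{t+1},Y_{t+1},\filt{\thp}{t},\tang{\thp}{t})_{t\in\nset}$, proving an SLLN for functionals that are bounded and Lipschitz in the measure-valued coordinates, and then identifying the gradient limits --- is exactly the paper's skeleton, and your integrability discussion (bounded densities, bounded weight gradients, uniform boundedness of $\tvnorm{\tang{\thp}{t}}$ via forgetting) matches Lemmas~\ref{lemma:bound_tang} and~\ref{lemma:func_in _lipz}. However, two of your steps have genuine gaps. First, you posit a stationary law $\Pi_\thp$ on $\set{X}\times\set{Y}\times\probmeas{\alg{X}}\times\signmeaszp{\alg{X}}$ and invoke ``the ergodic theorem''. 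The existence of such a law on this measure-valued state space is precisely what the paper declines to establish: Proposition~\ref{prop:ergod} only shows that $\tz{\thp}^t f(z)$ converges geometrically to a \emph{constant} $f_\thp$ for each $f\in\lip{\alg{Z}}$ (via a telescoping-sum construction), and the a.s.\ convergence in Proposition~\ref{prop:slln} is then obtained directly by Breiman's martingale-difference argument, with no stationary measure ever constructed. Your route would require a separate, nontrivial existence proof that the stated assumptions are not shown to deliver.

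Second, and more seriously, your differentiability step runs pathwise: you correctly differentiate the finite-$t$ averages, but then need the gradient averages to converge a.s.\ \emph{locally uniformly} in $\thp$, i.e.\ a uniform SLLN. The equicontinuity you appeal to requires $\thp\mapsto\grad{\thp}(Y_{s+1},\filt{\thp}{s},\tang{\thp}{s})$ to be Lipschitz uniformly in $s$; since $\grad{\thp}$ involves $\nabla_\thp\wgtfunc{\thp}$ and the summand involves $\tang{\thp}{s}=\nabla_\thp\filt{\thp}{s}$, such Lipschitzness demands second-order control --- Lipschitz continuity of $\nabla_\thp\wgtfunc{\thp}$ in $\thp$ and boundedness of the second $\thp$-derivative of the filter flow --- which is neither among the assumptions (only first-order gradient bounds are imposed) nor derivable from the forgetting results; contrary to your claim, exponential forgetting of $\tang{\thp}{t}$ does not yield it. The paper avoids this entirely by transferring differentiability through expectations: in Proposition~\ref{prop:all_conv_slln} it shows $\nabla_\thp\E[\vfunc{\thp}(\z{t}{\thp})]=\E[\grad{\thp}(\z{t}{\thp})]$ (the derivative--expectation exchange is legitimate because the data-generating law does not depend on $\thp$, and Lemma~\ref{lemma:filt_tang} identifies the total derivative of the filter flow with the tangent-filter flow), and the Ces\`aro averages of these \emph{deterministic} functions of $\thp$ converge uniformly in $\thp$ for free, because the constants in Proposition~\ref{prop:slln} ($\cqt$, $\ctang$, $\rhoq$) depend only on $\eq$ and $\bg$. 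The classical theorem on uniform convergence of derivatives then gives differentiability of $\contr$ and $\contr^\M$, and the a.s.\ limits coincide with the expectation limits by the SLLN. In short: uniformity in $\thp$ is obtained at the expectation level where it costs nothing, whereas on your pathwise route it is an unproven --- and under the paper's assumptions, likely unprovable --- ingredient.
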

The next result characterizes the {\siwae} objective by establishing the relation between the asymptotic contrast function and the {\colbo}. It is a direct consequence of \citep[][Theorem~1]{burda:2016} and \citep[][Proposition~1]{nowozin:2018}.

\begin{proposition}
	For all $\thp \in \parspace$ and $\M\in\nsetpos$, $\contr(\thp)\ge \contr^{\M+1}(\thp)\ge \contr^\M(\thp)$. Moreover, $\contr(\thp)-\lyap(\thp)$ is $\mathcal{O}(\M^{-1})$ uniformly in $\thp$. 
\end{proposition}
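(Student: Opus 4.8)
The plan is to prove both claims \emph{pointwise} in $(y,\filt{}{})$---at the level of the integrand---and then to pass to the {\colbo} by integrating against $\bar{\Pi}_\thp$. The key observation is that, for fixed $\thp$, $y$, and $\filt{}{}$, the quantity $\vfunc{\thp}(y,\filt{}{})$ is precisely an $\M$-sample {\iwae}/importance-sampling lower bound on $\likfunc{\thp}(y,\filt{}{})=\log\E_{\filt{}{}\tensprod\indmeas}[\wgtfunc{\thp}(X,y,\auxrv)]$, the importance weights $\wgtfunc{\thp}(X^i,y,\auxrv^i)$ being i.i.d.\ with $(X^i,\auxrv^i)\sim\filt{}{}\tensprod\indmeas$ (the identity $\likfunc{\thp}=\log\E[\wgtfunc{\thp}]$ following from the reparameterisation assumption on $\propdens{\thp}$). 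With this identification, \citep[Theorem~1]{burda:2016} applies verbatim and gives, for every fixed $(y,\filt{}{})$, the pointwise chain $\likfunc{\thp}(y,\filt{}{})\ge V_\thp^{\M+1}(y,\filt{}{})\ge\vfunc{\thp}(y,\filt{}{})$, where the upper bound is Jensen's inequality and the middle inequality is the {\iwae} monotonicity argument. Since $\bar{\Pi}_\thp$ is a probability measure, integrating this chain and using the ergodic-limit representations $\contr^\M(\thp)=\int\vfunc{\thp}\,d\bar{\Pi}_\thp$ and $\contr(\thp)=\int\likfunc{\thp}\,d\bar{\Pi}_\thp$ yields $\contr(\thp)\ge\contr^{\M+1}(\thp)\ge\contr^\M(\thp)$ at once.

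For the $\mathcal{O}(\M^{-1})$ rate I would favour a second-order Taylor expansion of $\log$ with Lagrange remainder over the full asymptotic series of \citep[Proposition~1]{nowozin:2018}, since the former yields a genuine, non-asymptotic, two-sided bound with explicit constants. Writing $\mu\eqdef\E[\wgtfunc{\thp}(X,y,\auxrv)]$, $\bar{w}_\M\eqdef\M^{-1}\sum_{i=1}^\M\wgtfunc{\thp}(X^i,y,\auxrv^i)$, and $\sigma^2\eqdef\V[\wgtfunc{\thp}(X,y,\auxrv)]$, expanding $\log\bar{w}_\M$ about $\mu$ and taking expectations (the linear term vanishing) gives $\likfunc{\thp}(y,\filt{}{})-\vfunc{\thp}(y,\filt{}{})=\E[(\bar{w}_\M-\mu)^2/(2\xi^2)]$ for $\xi$ lying between $\bar{w}_\M$ and $\mu$, whence $\sigma^2/(2b^2\M)\le\likfunc{\thp}(y,\filt{}{})-\vfunc{\thp}(y,\filt{}{})\le\sigma^2/(2a^2\M)$ as soon as the weights are confined to an interval $[a,b]$ with $0<a\le b<\infty$. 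The decisive step is therefore to show that $\wgtfunc{\thp}$ is bounded above and bounded away from zero \emph{uniformly} in $\thp$, in $y$, and in the filter argument: this is exactly what the strong mixing assumptions (which bound $\hiddens{\thp}$ and $\emdens{\thp}$ from above and below) together with the boundedness and support conditions on $\propdens{\thp}$ provide, and by Popoviciu's inequality they also force $\sigma^2\le(b-a)^2/4$ to be uniformly bounded. The pointwise bias is thus $0\le\likfunc{\thp}(y,\filt{}{})-\vfunc{\thp}(y,\filt{}{})\le C/\M$ with $C$ independent of $(\thp,y,\filt{}{})$; integrating against the probability measure $\bar{\Pi}_\thp$ delivers $0\le\contr(\thp)-\lyap(\thp)\le C/\M$ uniformly in $\thp$.

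The main obstacle is the uniformity rather than the rate itself. Both \citet{burda:2016} and \citet{nowozin:2018} are pointwise statements---and Nowozin's is moreover an asymptotic expansion whose remainder would have to be controlled non-asymptotically---so the genuine work is to extract from the model hypotheses the uniform two-sided weight bounds $a,b$, and hence uniform control of $\mu$, $\sigma^2$, and of the curvature of $\log$ over the range $[a,b]$, in a way that survives the integration against the whole family $(\bar{\Pi}_\thp)_{\thp\in\parspace}$. I expect this verification of uniform weight bounds to be precisely where the compactness and bounded-gradient hypotheses of Section~\ref{sec:theory} are genuinely needed.
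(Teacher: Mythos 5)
Your proposal is correct, and the two halves relate to the paper's proof differently. The monotonicity chain $\contr(\thp)\ge\contr^{\M+1}(\thp)\ge\contr^\M(\thp)$ is argued exactly as in the paper: rewrite $\likfunc{\thp}(y,\filt{}{})=\log\E_{\filt{}{}\tensprod\indmeas}[\wgtfunc{\thp}(X,y,\auxrv)]$ via the reparameterisation, invoke \citet[Theorem~1]{burda:2016} pointwise in $z$, and pass to the limit (the paper phrases the last step as ``nonnegative sequences have nonnegative limits'' applied to Ces\`aro averages of expectations, which is the same as your integration against $\bar{\Pi}_\thp$). For the $\mathcal{O}(\M^{-1})$ bias, however, you take a genuinely different and arguably cleaner route. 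The paper applies \citet[Proposition~1]{nowozin:2018} to the bounded weights (Lemma~\ref{lemma:boundw}: $\eq^3\le\wgtfunc{\thp}\le\eq^{-3}$), obtaining the pointwise identity $\likfunc{\thp}(z)-\vfunc{\thp}(z)=\E[(\wgtfunc{\thp})^2]/(2\M(\E[\wgtfunc{\thp}])^2)+\mathcal{O}(\M^{-2})\le\eq^{-8}/(2\M)+\mathcal{O}(\M^{-2})$, and then averages via Proposition~\ref{prop:all_conv_slln}; this pins down the exact leading constant but, as you correctly flag, leans on an asymptotic expansion whose remainder must itself be controlled uniformly (which the bounded weights do guarantee, though the paper does not dwell on it). Your second-order Taylor expansion with Lagrange remainder, combined with the same uniform weight bounds $a=\eq^3$, $b=\eq^{-3}$ and Popoviciu's inequality, yields a fully non-asymptotic two-sided bound $\sigma^2/(2b^2\M)\le\likfunc{\thp}(z)-\vfunc{\thp}(z)\le\sigma^2/(2a^2\M)$ whose uniformity in $(\thp,y,\filt{}{})$ is immediate; the price is that you bracket rather than identify the leading constant, which suffices for the statement as given (and even for the sharper equality displayed in the appendix version of the proposition, your lower bound adds information the paper's one-sided argument does not provide). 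You also correctly located the load-bearing hypothesis in both arguments: everything reduces to the uniform two-sided bound on $\wgtfunc{\thp}$ supplied by Assumptions~\ref{assum:eq} and \ref{assum:bounds_r_grad}.
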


Finally, we establish an $\mathcal{O}(M^{-1})$ bias between the stochastic gradients $\gradlik{\thp}$ and $\grad{\thp}$.

\begin{theorem}
\label{thm:biasgradmain}
	For all 
 $(y_t)_{t\in\nset}$, $\grad{\thp}(y_{t+1},\filt{\thp}{t},\tang{\thp}{t})-\gradlik{\thp}(y_{t+1},\filt{\thp}{t},\tang{\thp}{t})$ is $\mathcal{O}(\M^{-1})$ uniformly in $t$ and $\thp$.  
 In addition, $\nabla_\thp\contr^\M(\thp)-\nabla_\thp\contr(\thp)$ is $\mathcal{O}(\M^{-1})$ uniformly in $\thp$. 
\end{theorem}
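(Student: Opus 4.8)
The plan is to exploit that $\gradlik{\thp}=\nabla_\thp\likfunc{\thp}$ and $\grad{\thp}=\nabla_\thp\vfunc{\thp}$ each split into an \emph{explicit-parameter} term and a \emph{tangent-filter} term, and to bound the two differences separately. Writing $Z\eqdef\E_{\filt{}{}\tensprod\indmeas}[\wgtfunc{\thp}(X,y,\auxrv)]=\iint\emdens{\thp}(y\mid x')\hiddens{\thp}(x'\mid x)\,dx'\,\filt{}{}(x)\,dx$ (abbreviating $W=\wgtfunc{\thp}(X,y,\auxrv)$ with $(X,\auxrv)\sim\filt{}{}\tensprod\indmeas$), so that $\likfunc{\thp}(y,\filt{}{})=\log Z$, the reparameterisation identity gives $\E_\indmeas[\wgtfunc{\thp}(x,y,\auxrv)]=\int\emdens{\thp}(y\mid x')\hiddens{\thp}(x'\mid x)\,dx'$ and, differentiating through the $\thp$-independent measure $\indmeas$, $\E_\indmeas[\nabla_\thp\wgtfunc{\thp}(x,y,\auxrv)]=\int\nabla_\thp\{\emdens{\thp}(y\mid x')\hiddens{\thp}(x'\mid x)\}\,dx'$. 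Consequently the two summands of $\gradlik{\thp}$ are exactly $\E[\nabla_\thp W]/\E[W]$ (explicit) and $Z^{-1}\int\E_\indmeas[\wgtfunc{\thp}(x,y,\auxrv)]\tang{}{}(x)\,dx$ (tangent), which I will identify as the $\M\to\infty$ limits of the corresponding summands of $\grad{\thp}$; the whole task then reduces to quantifying these two limits at rate $\mathcal{O}(\M^{-1})$.

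For the explicit term I would treat $\E[(\sum_i\nabla_\thp W^i)/(\sum_{i'}W^{i'})]$ as the expectation of a ratio of empirical means $\bar A/\bar B$ with $\E\bar A=\E[\nabla_\thp W]$ and $\E\bar B=Z$. A second-order (delta-method) expansion $\tfrac{\bar A}{\bar B}-\tfrac{\E\bar A}{\E\bar B}=\tfrac{\bar A-\E\bar A}{Z}-\tfrac{\E\bar A}{Z^2}(\bar B-Z)+R$ kills the two first-order terms in expectation, leaving a remainder controlled by $\operatorname{Cov}(\nabla_\thp W,W)/\M$ and $\V[W]/\M$. Here I use that the weights $\wgtfunc{\thp}$ and their gradients are bounded and that $Z$ and $\bar B$ are bounded away from $0$, so the bound is $\mathcal{O}(\M^{-1})$ with constants independent of $\thp$ and of $(y,\filt{}{})$.

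The tangent-filter term is the crux. Writing $F_M(w)\eqdef \M\,\E[\log(\tfrac1\M w+\tfrac1\M\sum_{i=1}^{\M-1}W^i)]$, so that this summand of $\grad{\thp}$ equals $\iint F_M(\wgtfunc{\thp}(x,y,\auxrvb))\tang{}{}(x)\indmeas(\auxrvb)\,d\auxrvb\,dx$, I would use $F_M'(w)=\E[1/\hat{Z}_M]$ with $\hat{Z}_M=\tfrac1\M(w+\sum_{i=1}^{\M-1}W^i)$ and expand $1/\hat{Z}_M$ to second order around $Z$ to get $F_M'(w)=Z^{-1}+\mathcal{O}(\M^{-1})$ uniformly in $w$ over the bounded weight range, again using the lower bound on the weights to keep $1/\hat{Z}_M$ bounded. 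Crucially, since $\tang{}{}=\nabla_\thp\filt{\thp}{}$ satisfies $\int\tang{}{}(x)\,dx=0$, I may subtract an arbitrary constant from $F_M$, so integrating $F_M(w)-F_M(w_*)-Z^{-1}(w-w_*)=\int_{w_*}^w(F_M'(s)-Z^{-1})\,ds$ against $\tang{}{}\tensprod\indmeas$ yields precisely the difference between this summand and its limit; the zero-mass property cancels the $\mathcal{O}(1)$ part and leaves an integrand of size $\mathcal{O}(\M^{-1})\lvert\wgtfunc{\thp}-w_*\rvert$, bounded by $\mathcal{O}(\M^{-1})\tvnorm{\tang{}{}}$.

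Combining the two pieces gives the pointwise $\mathcal{O}(\M^{-1})$ bound. Uniformity in $t$ and $\thp$ then follows because all constants depend on the data only through uniform upper and lower bounds on the weights (a consequence of the strong mixing and boundedness assumptions) and through $\sup_{t,\thp}\tvnorm{\tang{\thp}{t}}$, which is finite by the uniform control of the tangent filter established earlier under the same assumptions. Finally, the second statement is immediate: since $\nabla_\thp\contr^\M(\thp)=\int\grad{\thp}\,d\tilde{\Pi}_\thp$ and $\nabla_\thp\contr(\thp)=\int\gradlik{\thp}\,d\tilde{\Pi}_\thp$ integrate $\grad{\thp}$ and $\gradlik{\thp}$ against the \emph{same} stationary measure $\tilde{\Pi}_\thp$, integrating the pointwise bound over $\tilde{\Pi}_\thp$ gives $\nabla_\thp\contr^\M(\thp)-\nabla_\thp\contr(\thp)=\mathcal{O}(\M^{-1})$ uniformly in $\thp$. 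The main obstacle I anticipate is the tangent-filter term: making the constant-subtraction argument rigorous and verifying that $\sup_{t,\thp}\tvnorm{\tang{\thp}{t}}<\infty$, the latter resting on the ergodic and forgetting estimates rather than on the elementary delta-method computations.
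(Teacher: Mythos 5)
Your proposal is correct and follows the same overall architecture as the paper's proof (Theorem~\ref{thm:bias_grad_gen} together with Corollary~\ref{cor:biasmf}): the same reparameterised rewriting of $\gradlik{\thp}$ as a ratio of weight expectations plus a tangent-filter term, the same delta-method/ratio-expansion with variance bounds for the explicit term (the paper's Lemma~\ref{lemma:bias_gen}), the same crucial use of $\int\tang{}{}(x)\,dx=0$ to cancel the $\mathcal{O}(1)$ part of the multi-sample log term, and the same appeal to the uniform bound $\sup_{t,\thp}\tvnorm{\tang{\thp}{t}}\le\ctang$ (Lemma~\ref{lemma:bound_tang}, which rests on the forgetting estimates) to get uniformity in $t$. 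Where you genuinely differ is the treatment of the tangent-filter term: the paper splits off $\log(\M^{-1}\sum_i W^i)$, which is independent of $(x,\auxrvb)$ and hence integrates to zero against $\mutilde\tensprod\indmeas$, and then Taylor-expands $\log(1+w/\sum_i W^i)$ as an infinite series (Lemma~\ref{lemma:taylor_log}); that series converges only when $\M>\eq^{-6}+1$, which forces Corollary~\ref{cor:biasmf} to patch in a crude Lipschitz bound for small $\M$. Your mean-value argument via $F_M'(w)=\E[1/\hat{Z}_M]$ and a second-order expansion of $1/\hat{Z}_M$ around $Z$ needs no such convergence restriction and applies for all $\M\ge 2$, which is arguably cleaner; the constant you subtract plays exactly the role of the paper's split-off logarithm. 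One caution on your final step: you integrate the pointwise bound against the stationary law $\tilde{\Pi}_\thp$, but the paper's appendix deliberately avoids assuming that this law exists (its existence is flagged as delicate in Section~\ref{sec:background}); instead it identifies $\nabla_\thp\contr^\M(\thp)$ and $\nabla_\thp\contr(\thp)$ as limits of Ces\`aro averages of expectations (Proposition~\ref{prop:all_conv_slln}) and passes the uniform-in-$t$ pointwise bound through those limits via the triangle inequality. Your argument survives this substitution verbatim, so this is a presentational rather than a substantive gap.
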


\section{SMC-based {\siwae} (\smcsiwae)}\label{sec:smc_algo}
We now present an implementable version of the {\siwae} algorithm based on SMC methods. 
For this purpose, we first 
provide an alternative expression of $\grad{\thp}$, where the integral involving $\tang{\thp}{t}$ is expressed as an expectation of the \emph{complete-data score} $\nabla_\thp\log p_\thp(X_{0:t},Y_{0:t})$ under the joint-smoothing distribution $\filt{\thp}{0:t}$. The complete-data score is 
of additive form, allowing for sequential updates with constant complexity (see next section). The following lemma, whose proof is found in Appendix~\ref{app:tfunc}, summarises these properties.
First, for $t \in \nsetpos$ and $\thp \in \Theta$, define the functions $\tfunc{\thp}{0}(x_0)=\nabla_\thp\log\emdens{\thp}(y_0\mid x_0)$ and
\begin{multline}
    \tfunc{\thp}{t}(x_{t})\eqdef\int \nabla_\thp\log p_\thp(x_{0:t},y_{0:t})
    \\p_\thp(x_{0:t-1}\mid y_{0:t-1},x_{t})\,dx_{0:t-1}.
\end{multline}
Moreover, let, for every density $\phi$, function $\varphi$, and $y \in \set{Y}$, 
\begin{multline}\label{eq:def_gradt}
			\gradt{\thp}(y, \phi,\varphi)\eqdef \E_{(\phi\tensprod\indmeas)^{\varotimes \M}}  \left[\frac{\sum_{i=1}^{\M}\nabla_\thp\wgtfunc{\thp}(X^i, y, \auxrv^i)}{\sum_{i'=1}^{\M}\wgtfunc{\thp}(X^{i'}, y, \auxrv^{i'})}\right.
			\\+\M\log\left(\frac{1}{\M}\sum_{i=1}^{\M}\wgtfunc{\thp}(X^i, y, \auxrv^i)\right)
			\\\left.\times\left(\varphi(X^\M)-\E_{\phi}[\varphi(X)]\right)\vphantom{\frac{\sum_{i=1}^{\M}\nabla_\thp\wgtfunc{\thp}(X^i, y, \auxrv^i)}{\sum_{i'=1}^{\M}\wgtfunc{\thp}(X^{i'}, y, \auxrv^{i'})}}\right].
		\end{multline}
Then the following holds true. 
\begin{lemma} \label{lemma:tfunc} \ \\[-5mm]
 \begin{itemize}
		\item[(i)] There exists a mapping $\rectsmooth{\thp}$ such that for all $t\in\nset$,	$\tfunc{\thp}{t+1}=\rectsmooth{\thp}(\tfunc{\thp}{t},\filt{\thp}{t}, Y_{t+1})$. 
		\item[(ii)] 
		It holds that 
  $\gradt{\thp}(Y_{t+1},\filt{\thp}{t},\tfunc{\thp}{t})=\grad{\thp}(Y_{t+1},\filt{\thp}{t},\tang{\thp}{t})$.
	\end{itemize} 
\end{lemma}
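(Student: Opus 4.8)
The plan is to handle the two parts through a common mechanism: the additive structure of the complete-data score and a Fisher-type identity linking the tangent filter $\tang{\thp}{t}$ to the conditional score $\tfunc{\thp}{t}$. For part (i), I would first expand the complete-data log-density from the factorisation of $p_\thp(x_{0:t},y_{0:t})$ so that the score becomes an additive functional, $\nabla_\thp\log p_\thp(x_{0:t},y_{0:t})=\nabla_\thp\log\emdens{\thp}(y_0\mid x_0)+\sum_{s=1}^{t}\addfunc{s}(x_{s-1},x_s)$ with increment $\addfunc{s}(x_{s-1},x_s)\eqdef\nabla_\thp\log\hiddens{\thp}(x_s\mid x_{s-1})+\nabla_\thp\log\emdens{\thp}(y_s\mid x_s)$. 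Substituting the decomposition $\nabla_\thp\log p_\thp(x_{0:t+1},y_{0:t+1})=\nabla_\thp\log p_\thp(x_{0:t},y_{0:t})+\addfunc{t+1}(x_t,x_{t+1})$ into the definition of $\tfunc{\thp}{t+1}$, together with the backward factorisation $p_\thp(x_{0:t}\mid y_{0:t},x_{t+1})=p_\thp(x_{0:t-1}\mid y_{0:t-1},x_t)\,p_\thp(x_t\mid y_{0:t},x_{t+1})$ and the conditional-independence identity $p_\thp(x_{0:t-1}\mid y_{0:t},x_t)=p_\thp(x_{0:t-1}\mid y_{0:t-1},x_t)$, yields the forward-smoothing recursion $\tfunc{\thp}{t+1}(x_{t+1})=\int[\tfunc{\thp}{t}(x_t)+\addfunc{t+1}(x_t,x_{t+1})]\,p_\thp(x_t\mid y_{0:t},x_{t+1})\,dx_t$. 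The final step is to rewrite the backward kernel via Bayes' rule as $p_\thp(x_t\mid y_{0:t},x_{t+1})\propto\hiddens{\thp}(x_{t+1}\mid x_t)\filt{\thp}{t}(x_t)$, which exhibits the right-hand side as a function of $\tfunc{\thp}{t}$, $\filt{\thp}{t}$, and $Y_{t+1}$ alone, thereby defining $\rectsmooth{\thp}$.

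For part (ii) the key is the identity $\tang{\thp}{t}(x)=\filt{\thp}{t}(x)\bigl(\tfunc{\thp}{t}(x)-\E_{\filt{\thp}{t}}[\tfunc{\thp}{t}(X)]\bigr)$. I would derive it by differentiating $\filt{\thp}{t}(x_t)=\int p_\thp(x_{0:t}\mid y_{0:t})\,dx_{0:t-1}$ under the integral sign, writing $\nabla_\thp p_\thp(x_{0:t}\mid y_{0:t})=p_\thp(x_{0:t}\mid y_{0:t})\bigl[\nabla_\thp\log p_\thp(x_{0:t},y_{0:t})-\nabla_\thp\log p_\thp(y_{0:t})\bigr]$, and then recognising the two resulting pieces as $\filt{\thp}{t}(x_t)\tfunc{\thp}{t}(x_t)$ and $\filt{\thp}{t}(x_t)\E_{\filt{\thp}{t}}[\tfunc{\thp}{t}(X)]$ respectively, the latter via Fisher's identity $\nabla_\thp\log p_\thp(y_{0:t})=\E_{\filt{\thp}{t}}[\tfunc{\thp}{t}(X)]$ (itself obtained by conditioning the complete-data score on $X_t$).

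With this identity in hand, the claimed equality follows by direct comparison. The first terms of $\grad{\thp}$ and $\gradt{\thp}$ are literally identical, so only the second terms require attention. I substitute the tangent-filter identity into the second term of $\grad{\thp}(Y_{t+1},\filt{\thp}{t},\tang{\thp}{t})$: the integral against $\tang{\thp}{t}(x)\,\indmeas(\auxrvb)$ becomes an integral of $\bigl(\tfunc{\thp}{t}(x)-\E_{\filt{\thp}{t}}[\tfunc{\thp}{t}(X)]\bigr)$ against $\filt{\thp}{t}(x)\,\indmeas(\auxrvb)$. Relabelling $(x,\auxrvb)$ as the $\M$-th coordinate $(X^\M,\auxrv^\M)$, the single-coordinate measure and the remaining $(\M-1)$-fold product combine into the full $\M$-fold product $(\filt{\thp}{t}\tensprod\indmeas)^{\varotimes\M}$, so that $\tfrac1\M\wgtfunc{\thp}(x,y,\auxrvb)+\tfrac1\M\sum_{i=1}^{\M-1}\wgtfunc{\thp}(X^i,y,\auxrv^i)$ becomes $\tfrac1\M\sum_{i=1}^{\M}\wgtfunc{\thp}(X^i,y,\auxrv^i)$. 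This recovers exactly the second term of $\gradt{\thp}(Y_{t+1},\filt{\thp}{t},\tfunc{\thp}{t})$ with $\phi=\filt{\thp}{t}$ and $\varphi=\tfunc{\thp}{t}$, completing the proof.

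The main obstacle I expect is the rigorous justification of differentiation under the integral sign in the Fisher identity and the derivation of the tangent-filter identity; everything downstream is a symmetry/relabelling argument. Under the stated boundedness-of-gradients and mixing assumptions the interchange is licit, but the clean form $\tang{\thp}{t}=\filt{\thp}{t}(\tfunc{\thp}{t}-\E_{\filt{\thp}{t}}[\tfunc{\thp}{t}])$ hinges on invoking the conditional-independence fact so that the conditioning on $y_{0:t-1}$ used in the definition of $\tfunc{\thp}{t}$ matches the conditioning on $y_{0:t}$ that appears when differentiating the filter.
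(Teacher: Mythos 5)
Your proof is correct and follows essentially the same route as the paper: part (i) via the additive score decomposition, the backward factorisation $p_\thp(x_{0:t}\mid y_{0:t},x_{t+1})=p_\thp(x_{0:t-1}\mid y_{0:t-1},x_t)\,p_\thp(x_t\mid y_{0:t},x_{t+1})$, and Bayes' rule for the backward kernel; part (ii) via the Fisher-type identity relating $\tang{\thp}{t}$ to $\tfunc{\thp}{t}$ followed by the relabelling of $(x,\auxrvb)$ as the $\M$-th coordinate. The only cosmetic difference is that you state the identity pointwise, $\tang{\thp}{t}=\filt{\thp}{t}\bigl(\tfunc{\thp}{t}-\E_{\filt{\thp}{t}}[\tfunc{\thp}{t}]\bigr)$, whereas the paper derives the equivalent weak form $\int f\,\tang{\thp}{t} = \E_{\filt{\thp}{t}}[f(X)(\tfunc{\thp}{t}(X)-\E_{\filt{\thp}{t}}[\tfunc{\thp}{t}(X)])]$ by first writing $\grad{\thp}$ as an expectation in which the $\M$-th draw is a full trajectory from $\filt{\thp}{0:t}$.
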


Building on Lemma~\ref{lemma:tfunc}, the {\siwae} procedure may be reformulated by substituting $\grad{\thp}$ with $\gradt{\thp}$ and replacing the tangent-filter sequence by $(\tfunc{\thp_{0:t}}{t})_{t\in\nset}$. These updates are performed online as well according to $\tfunc{\thp_{0:t+1}}{t+1}= \rectsmooth{\thp_{t+1}}(\tfunc{\thp_{0:t}}{t}, \filt{\thp_{0.t}}{t}, Y_{t+1})$. The initialisation step involves computing $\filt{\thp_{0}}{0}$ and setting $\tfunc{\thp_0}{0}(x_0)=\nabla_\thp\log \emdens{\thp_0}(Y_0\mid x_0)$. Still, this idealised approach is impractical for direct implementation, why a particle-based version of the same is presented in the next section. 

\subsection{{\siwae} gradient-step approximation}\label{subsec:smcsiwae}
We assume that at each time $t\in\nset$ we have access to some weighted particle sample $(\epart{t}{i},\wgt{t}{i})_{i=1}^N$, $N\in\nsetpos$, whose associated weighted empirical measure approximates $\filt{\thp}{t}$. In addition, assume that we have access to some associated statistics $(\tstat{t}{i})_{i=1}^N$ such that $\tstat{t}{i}\simeq \tfunc{\thp}{t}(\epart{t}{i})$ for all $i$. We assume that $\sum_{i=1}^{N} \wgt{t}{i} f(\epart{t}{i}) / \wgtsum{t} \simeq \E_{\filt{\thp}{t}}[f(X)]$ and $\sum_{i=1}^{N} \wgt{t}{i} \tstat{t}{i}f(\epart{t}{i}) / \wgtsum{t} \simeq \E_{\filt{\thp}{t}}[f(X)\tfunc{\thp}{t}(X)]$, 
where $\wgtsum{t}\eqdef\sum_{i=1}^{N}\wgt{t}{i}$, for every measurable function $f$ such that these expectations are well defined. The sample $(\epart{t}{i}, \tau_t^i, \wgt{t}{i})_{i=1}^N$ will be produced using the so-called \texttt{AdaSmooth} algorithm proposed by \cite{mastrototaro:olsson:alenlov:2021} (see Appendix~\ref{app:adasmooth}). Given this sample, our goal is to approximate the expectation \eqref{eq:def_gradt} when the inputs $\filt{\thp}{t}$ and $\tfunc{\thp}{t}$ are replaced by their particle approximations. In the standard {\iwae}, when $\M$ is sufficiently large, a good estimate of the gradient is typically obtained by simply drawing $\M$ i.i.d. samples from the (reparameterised) recognition model.
Thus, in our case we would ideally need $\M$ i.i.d. samples from $\filt{\thp}{t}\tensprod \nu$ at the iteration $t$. However, since $\filt{\thp}{t}$ is generally intractable, we sample instead $M$ conditionally i.i.d. draws from the empirical distribution formed by a particle sample $(\epart{t}{i},\wgt{t}{i})_{i=1}^N$ targeting $\filt{\thp}{t}$. 
More precisely, write 
\begin{multline} \label{eq:g_bar:alt:form}
\gradt{\thp}(y, \phi_t^\thp,\varphi_t^\thp) \\
= \iint  \E_{(\phi_t^\thp \tensprod \nu)^{\tensprod (M - 1)}} \left[ \Gamma_1^\thp(x, u, X^{1:M - 1}, U^{1:M - 1}, y) \right] \\
+ \E_{(\phi_t^\thp \tensprod \nu)^{\tensprod (M - 1)}} \left[ \Gamma_2^\thp(x, u, X^{1:M - 1}, U^{1:M - 1}, y) \right] \\\times (\varphi_t^\thp(x) -\E_{\phi_t^\thp}[\varphi_t^\thp(X)])  \,\phi_t^\thp(x) \nu(u) \, dx \, du, 
\end{multline}
where 
\begin{align}
\lefteqn{\Gamma_1^\thp(x, u, x^{1:M - 1}, u^{1:M - 1}, y)} \\
&\eqdef \frac{\nabla_\thp\wgtfunc{\thp}(x, y, \auxrvb)+\sum_{i=1}^{\M-1}\nabla_\thp\wgtfunc{\thp}(x^i, y, \auxrvb^i)}{\wgtfunc{\thp}(x, y, \auxrvb)+\sum_{k=1}^{\M-1}\wgtfunc{\thp}(x^k, y, \auxrvb^{k})}, \\
\lefteqn{\Gamma_2^\thp(x, u, x^{1:M - 1}, u^{1:M - 1}, y)} \\ 
&\eqdef\M \log\left(\frac{1}{\M}\wgtfunc{\thp}(x, y,\auxrvb)
+\frac{1}{\M}\sum_{i=1}^{\M-1}\wgtfunc{\thp}(x^i, y, \auxrvb^{i})\right).
\end{align}
Now, estimating (i) the inner expectations of \eqref{eq:g_bar:alt:form} based on $M - 1$ independent draws $(\check{\xi}_t^i, \check{\upsilon}^i_t)_{i = 1}^{M - 1}$ generated as 
$$
(\check{\xi}_t^i, \check{\upsilon}^i_t) \sim \left( \sum_{i=1}^{N}\frac{\wgt{t}{i}}{\wgtsum{t}}\delta_{\epart{t}{i}} \right) \tensprod \nu,  
$$
\emph{i.e.}, by resampling pairs of particles and associated statistics in proportion to their weights and providing each resampled pair with a draw from $\nu$, and then (ii) the outer integral using samples $(\hat{\xi}_t^i, \hat{\tau}_t^i, \hat{\upsilon}_t^i)_{i = 1}^N$ drawn independently according to 
$$
(\hat{\xi}_t^i, \hat{\tau}_t^i, \hat{\upsilon}_t^i) \sim \left( \sum_{i=1}^{N}\frac{\wgt{t}{i}}{\wgtsum{t}}\delta_{(\epart{t}{i}, \tau_t^i)} \right) \tensprod \nu , 
$$
allows $\gradt{\thp}(y, \phi^\thp_t,\varphi^\thp_t)$ to be estimated by 
\begin{multline} \label{eq:g_bar:estimator}
 \bar{\Gamma}^\thp(\hat{\xi}_t^{1:N}, \hat{\upsilon}_t^{1:N}, \check{\xi}_t^{1:M - 1}, \check{\upsilon}_t^{1:M - 1},y) 
 \\\eqdef \frac{1}{N}\sum_{i=1}^{N}\left\{\vphantom{\frac{1}{N}\sum_{\ell=1}^{N}\hat{\tau}_t^\ell}\Gamma_1^\thp(\hat{\xi}_t^j, \hat{\upsilon}_t^j, \check{\xi}_t^{1:M - 1}, \check{\upsilon}_t^{1:M - 1}, y) \right.
	\\
    \left.+\Gamma_2^\thp(\hat{\xi}_t^j, \hat{\upsilon}_t^j, \check{\xi}_t^{1:M - 1}, \check{\upsilon}_t^{1:M - 1},y)\left( \hat{\tau}_t^i -\frac{1}{N}\sum_{\ell=1}^{N}\hat{\tau}_t^\ell \right)\right\}.
\end{multline}

\begin{algorithm}[htb]
	\caption{{\smcsiwae}}\label{algo:siwae_smc}
	\begin{algorithmic}[1]
		\REQUIRE $(\epart{t}{i},\tstat{t}{i},\wgt{t}{i})_{i=1}^N$, $Y_{t+1}$, $\thp_t$, step size $\gamma_{t+1}$.
        \STATE draw $(\check{\xi}_t^i, \check{\upsilon}_t^i)_{i=1}^{M-1}  \stackrel{\tiny{\mbox{i.i.d}}}{\sim} \left( \sum_{i=1}^{N}\frac{\wgt{t}{i}}{\wgtsum{t}}\delta_{\epart{t}{i}} \right) \tensprod \nu $
        \STATE draw $(\hat{\xi}_t^i, \hat{\tau}_t^i, \hat{\upsilon}_t^i)_{i=1}^N \stackrel{\tiny{\mbox{i.i.d}}}{\sim} \left( \sum_{i=1}^{N}\frac{\wgt{t}{i}}{\wgtsum{t}}\delta_{(\epart{t}{i}, \tau_t^i)} \right) \tensprod \nu $
        \STATE$\begin{aligned}[t]
			&\text{set }\thp_{t+1} \gets\thp_{t}
            \\&\hspace{4mm}+\gamma_{t+1}\bar{\Gamma}^{\thp_t}(\hat{\xi}_t^{1:N}, \hat{\upsilon}_t^{1:N}, \check{\xi}_t^{1:M - 1}, \check{\upsilon}_t^{1:M - 1}, Y_{t+1})
        \end{aligned}$
        \label{line:siwae_update_smc}
		\STATE run $(\epart{t+1}{i},\tstat{t+1}{i},\wgt{t+1}{i})_{i=1}^N$\\ $\hspace{8mm}\gets\texttt{AdaSmooth}((\epart{t}{i},\tstat{t}{i},\wgt{t}{i})_{i=1}^N, Y_{t+1}, \thp_{t+1})$
  \label{line:adasmooth}
		\RETURN $(\epart{t+1}{i},\tstat{t+1}{i},\wgt{t+1}{i})_{i=1}^N$, $\thp_{t+1}$
	\end{algorithmic}
\end{algorithm}
The procedure describing the practical {\siwae}, referred to as {\smcsiwae}, is displayed in Algorithm~\ref{algo:siwae_smc}, which also includes the online update of $(\epart{t}{i},\tstat{t}{i},\wgt{t}{i})_{i=1}^N$ via the \texttt{AdaSmooth} online particle smoother described in detail in Appendix~\ref{app:adasmooth}. As we mentioned earlier, in typical applications, $\thp=(\thpone, \thptwo)$, where $\thpone$ parameterises the SSM only, while $\thptwo$ parameterises the proposal. Now, note that by Theorem~\ref{thm:biasgradmain}, $\grad{\thp}$ converges to $\gradlik{\thp}$ as $\M$ tends to infinty, where the latter does not involve $\propdens{\thptwo}$; hence, the components of $\grad{\thp}$ corresponding to the gradient with respect to $\thptwo$ converge to zero. This becomes a problem when implementing the {\smcsiwae}, as the estimate of the gradient with respect to $\thptwo$ suffers from a low signal-to-noise ratio when $\M$ is moderately large \citep[we refer to][for a discussion]{rainforth:iwae:2018}, while it is always favourable to use a large $\M$ in  the model-parameter estimation. Thus, in practice we suggest to repeat twice Algorithm~\ref{algo:siwae_smc} (except for Line~\ref{line:adasmooth}, which is executed only once) at each iteration $t$: first with $\M$ small, typically equal to 5 or 10, and updating $\thptwo$ only, then with $\M$ large to update $\thpone$. Alternative solutions have been discussed by \citet{roeder:2017,tucker:2018,finke:thiery:2019}. 
It is interesting to note that that when dealing with $\thptwo$, since $\nabla_{\thptwo}\log p_{\thpone}(x_{0:t}, y_{0:t})=0$, $(\tstat{t}{i})_{i=1}^N$ are all zero, and so is the second term of \eqref{eq:g_bar:estimator}. In this case, the update of $\thptwo$ is similar to that performed by the {\OVSMC} method \citep[][Algorithm~2]{ovsmc}. However, {\OVSMC} updates the model parameters without the complete-data score term, which is a source of bias of {\OVSMC}. Therefore, although the two methods are derived from different starting points, they can be related. Still, {\OVSMC} lacks a clear asymptotic objective, relying on a hard-to-control truncation of its gradient, whereas {\siwae} aims to maximise a well-defined lower bound on the contrast function, at the price of having to update recursively the statistics $(\tstat{t}{i})_{i=1}^N$ (as discussed in Appendix~\ref{app:adasmooth}). 

\section{Numerical experiments}\label{sec:experiments}
In this section, we provide numerical simulations to illustrate the performance of the proposed \texttt{OSIWAE} algorithm in the contexts of parameter learning, optimal filtering, and proposal adaptation. All the experiments were performed on a MacBook Air M2 and used the \texttt{ADAM} optimiser \citep{kingma:ba:2015}. If not otherwise stated, a constant learning rate of $\gamma_t = 0.001$ was used.

\subsection{Multivariate linear Gaussian SSM} \label{ex:LGSSM}
We consider a 10-dimensional multivariate linear Gaussian SSM to provide an initial assessment of the performance of {\smcsiwae}. 
Formally, let the state and observation spaces be  \( \set{X} = \mathbb{R}^{d_x} \) and \( \set{Y} = \mathbb{R}^{d_y} \), respectively, with \( d_x = d_y = 10 \). The SSM is defined by the state transition density \(m_\theta(x_{t+1} \mid x_t) = N_{d_x}(x_{t+1}; A x_t, S_u S_u^\intercal)\) and the observation density \(g_\theta(y_t \mid x_t) = N_{d_y}(y_t; B x_t, S_v S_v^\intercal) \). 
Here \( A \in \mathbb{R}^{d_x \times d_x} \) and \( B \in \mathbb{R}^{d_y \times d_x} \) are the state transition and observation matrices, respectively. The matrices \( S_u \) and \( S_v \) are diagonal covariance matrices for the process and observation noises. We employ a Gaussian proposal distribution \( r_\theta(\cdot \vert x_t, y_{t+1}) \) with mean vector and diagonal covariance matrix parameterised by two distinct neural networks taking $x_t$ and $y_{t+1}$ as inputs.
We let the true matrices \( A \) and \( B \) be diagonal with entries sampled uniformly from \([0.5, 1]\) and generate a dataset of observations by simulating the SSM. We then apply the {\smcsiwae} algorithm to estimate these matrices while computing particle-based filter expectations of the latent states. We compare the performance of {\smcsiwae} with the \texttt{AdaSmooth}-based {\RML} method and the {\OVSMC} algorithm. Reference values for the optimal filtering were obtained by executing the Kalman filter for the true model dynamics. 

\begin{figure}[htb]
    \centering
    \hspace{-0.5cm}
    \includegraphics[width=0.49\textwidth]{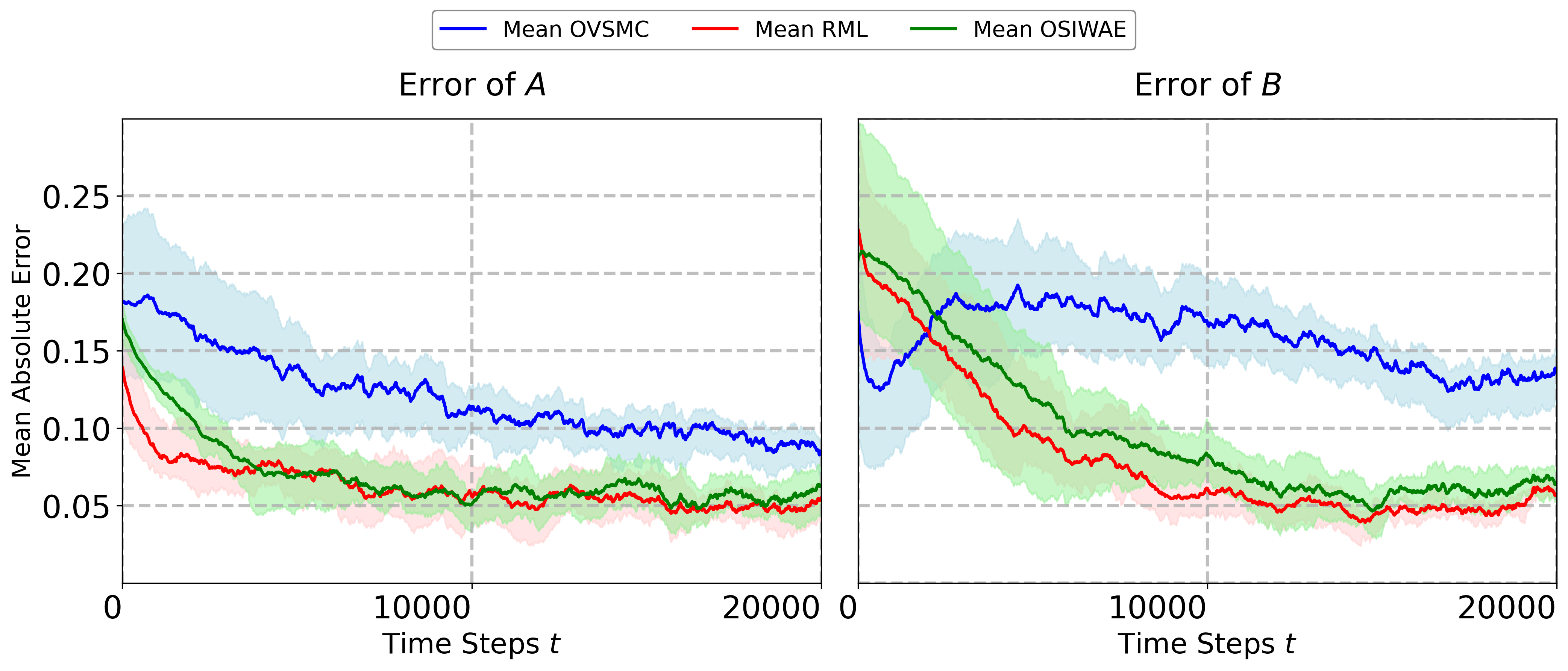}
    \caption{Parameter estimation errors over time for {\smcsiwae}, {\OVSMC}, and {\RML} in the scenario where \(S_u = 0.2I\) and \(S_v = 0.5I\). {\smcsiwae} and {\RML} used \(N = 1000\) particles and \(M = 1000\) importance samples, while {\OVSMC} used  \(N = 10000\) particles to ensure comparable computational complexity. The proposal distribution \(r_\theta\) (a 10-dimensional Gaussian distribution) was parameterised by two single-layer neural networks with 64 nodes each and ReLU activations and learned using \(L = 5\) particles. The error bounds are based on 30 independent runs of each algorithm. With our implementation, {\smcsiwae} took on average 42~min, {\OVSMC} 26~min, and {\RML} 46~min.}
    \label{fig:parameter_estimation}
\end{figure}


In Figure~\ref{fig:parameter_estimation}, we see that the parameter estimates produced using {\smcsiwae} converge faster and exhibit lower MAE compared to {\OVSMC}. This is explained by the fact that the stochastic gradient of {\OSIWAE} incorporates information from the past by estimating the complete-data score, which improves accuracy, especially when the observations are non-informative. It should be noticed that {\smcsiwae} is almost on par with {\RML}, although {\smcsiwae} simultaneously adapts the proposal while learning the model parameters.  

Hence, 
we next examine the filter-mean estimates produced by these algorithms as the parameters are being learned. We evaluate the MSEs of the filter-mean estimates of each algorithm with respect to the output of the Kalman filter executed for the true model parameters and display the result in Figure~\ref{fig:state_estimation}.
Clearly, after an initial phase, when both {\smcsiwae} and {\OVSMC} learn the proposal parameters and therefore perform worse than {\RML}, {\smcsiwae} shows a significantly better performance than its competitors in the long run. 




\begin{figure}[ht]
    \centering
    \includegraphics[width=0.7\columnwidth]{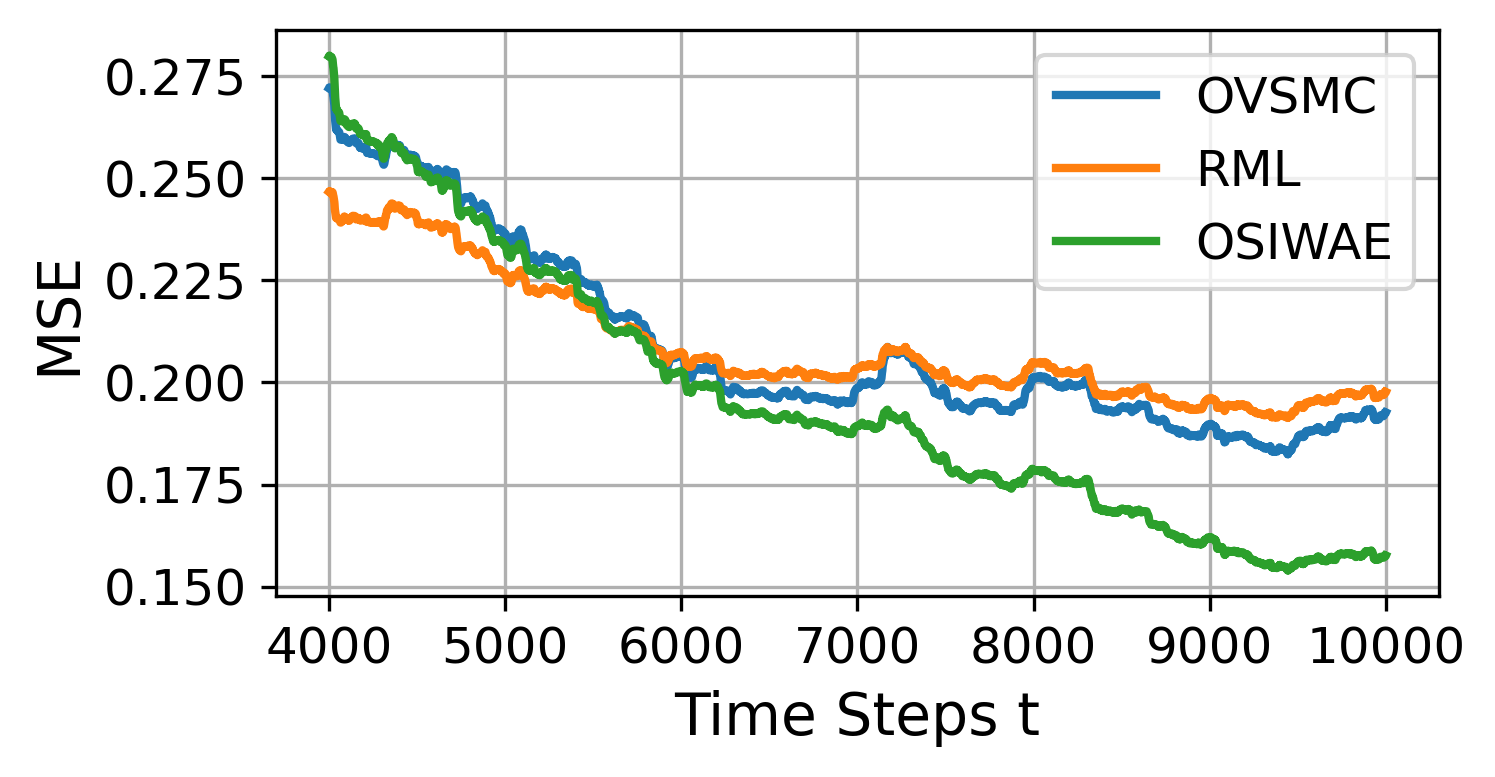}
    \caption{MSEs over time for {\OSIWAE}, {\OVSMC}, and {\RML} with respect to the Kalman filter (executed for true parameters) for the linear Gaussian model with \( S_u = 0.5I \) and \( S_v = 0.2I \). The values are plotted as moving averages with a window of 3000 time steps. For all methods, the MSEs are based on 50 independent runs on the same data and different starting values of \( A \) and \( B \).}
    \label{fig:state_estimation}
\end{figure}


\subsection{Simultaneous localisation and mapping}\label{ex:SLAM}

The \emph{simultaneous localisation and mapping} (SLAM) problem is fundamental in robotics and requires online inference; see, \eg, \citet{dissanayake2001solution,thrun2005probabilistic}. The goal is to jointly estimate the trajectory of a robot and the positions of $L\in\nsetpos$ unknown landmarks based on noisy observations. In this context, the latent states are the positions of the robot in a two-dimensional landscape at different time steps. These positions are partially observed through a vector of pairs indicating the distance and the angle with respect to the landmarks. We let $\theta=(\theta^1,\dots,\theta^L)$ be the positions of the landmarks, where $\theta^i=(\theta_1^i,\theta_2^i)\in\rset^2$. The robot's motion is modeled as a bivariate random walk with covariance matrix $\sigma_{\text{motion}}^2 I_2$. Here we have $Y_t = (Y_{t}^1,\dots,Y_t^L)$, where $Y_t^i$ is a tuple indicating a noisy measurement of the distance and the angle of the robot with respect to landmark $i$, for $i \in \{1,\dots L\}$. More specifically, $Y_t^i = h(X_t, \theta^i) + \sigma_{\text{obs}}V_t^i$, where $(V_t^1)_{t \in \nset}, \ldots, (V_t^L)_{t \in \nset}$ are sequences of bivariate i.i.d standard Gaussian random variables. The measurement function is such that $h(x, \theta^i) = ( \| \theta^i - x \|,\ \mbox{atan2}( \theta_2^i - x_2,\ \theta_1^i - x_1) )$ for all $x=(x_1,x_2)\in\rset^2$.

In this experiment, we aim to learn the unknown positions of the landmarks while sequentially estimating the position of the robot. We assume that the noise parameters are known. Figure \ref{fig:landmark_position} shows that after an initial phase where {\smcsiwae} is adapting the proposal, the landmark estimation becomes clearly better compared to both {\RML} and {\OVSMC}, and our algorithm is able to estimate the exact locations more precisely. On the right, we see that {\smcsiwae} is also able to first train the proposal in some environment and, when used in another one, the learning curve is more accurate than {\RML} and {\OVSMC} from the beginning.

\begin{figure}[ht]
    \centering   
    \includegraphics[width=1.\columnwidth]{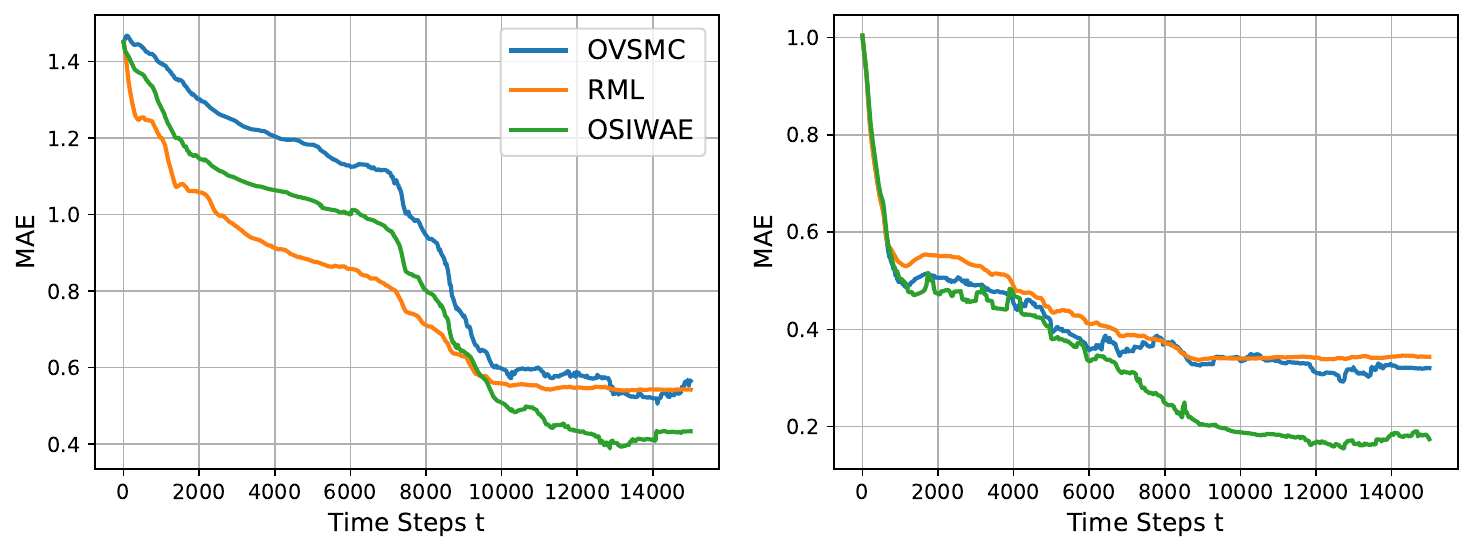}
    \caption{Average MAE of the estimated positions of $L = 8$ landmarks over time using {\OSIWAE}, {\RML}, and {\OVSMC} in a SLAM scenario with \( \sigma_{\text{motion}}^2 = 0.2 \) and \( \sigma_{\text{obs}}^2 = 0.1 \).
    The proposal distribution \( r_\theta(\cdot \mid x_t, y_{t+1}) \) in both {\OSIWAE} and {\OVSMC} is learned via two distinct neural networks, each with one hidden layer of 128 nodes. All three methods use \( N = 1000 \) particles and {\OSIWAE} uses $M = 1000$.
    Left panel: All three algorithms run on the same data, without any prior learning. 
    Right panel: A training run is first performed using {\smcsiwae} on a different data record to learn the proposal distribution; afterwards, all three algorithms are applied to the same data.  
    }
    \label{fig:landmark_position}
\end{figure}

\subsection{Growth Model}\label{Growth}

Finally, we consider the so-called \emph{growth model} 
\cite{kitagawa:1987},
which is a standard benchmark model in particle filtering due to the highly nonlinear latent process. The state dynamics is given by $X_t = a_{t-1}(X_{t-1}) + \sigma_u U_{t-1}$, where $a_{t-1}(x) = \alpha_0 x + \alpha_1 x / (1 + x^2) + \alpha_2 \cos(1.2(t - 1))$, and the observations process satisfies $Y_t = b X_t^2 + \sigma_v V_t$, where $(U_t)_{t\in\nset}$ and $(V_t)_{t\in\nset}$ are i.i.d standard Gaussian random variables. 
Here, we first generated data with  \(\alpha_0 = 0.5\), \(\alpha_1 = 25\), \(\alpha_2 = 8\), \(\sigma_u^2 = 10\), \(b = 0.05\), and \(\sigma_v^2 = 1\); then we used {\smcsiwae} to estimate $\alpha_0$, $b$, and $\sigma_u$ and simultaneously adapted the particle filter proposal. The interesting aspect of this model is that under certain parameterisations---like the one given---the locally optimal proposal is bimodal, with one dominating mode and the other one almost negligible. In these scenarios, the bootstrap proposal tends to be too diffuse, resulting in many wasted samples. Thus, we run the {\smcsiwae} algorithm to estimate the unknown parameters of the model while simultaneously learning a better proposal distribution. We design a family of proposals that integrate new parameters with the ones of the model. This is done by letting again $\propdens{\thp}(\cdot\mid x_t, y_{t+1})$ be Gaussian with mean and variance parameterised by neural networks; however, at each time $t\in\nset$, in addition to the new observation $y_{t+1}$, we input the mean $a_{t}(x_t)$ instead of the current state $x_t$.


Figure~\ref{fig:kernel_3}  illustrates the progression of the proposal distribution. Initially, after a few thousand iterations, the learned proposal starts to approximate the locally optimal kernel, despite the model parameters not yet being fully learned. As the optimal kernel converges to reflect the true parameters, our Gaussian proposal accurately matches the dominant mode. In contrast, the prior kernel of the standard bootstrap particle filter remains overly dispersed.


\begin{figure}[htb]
    \centering
    \includegraphics[width=0.49\textwidth]{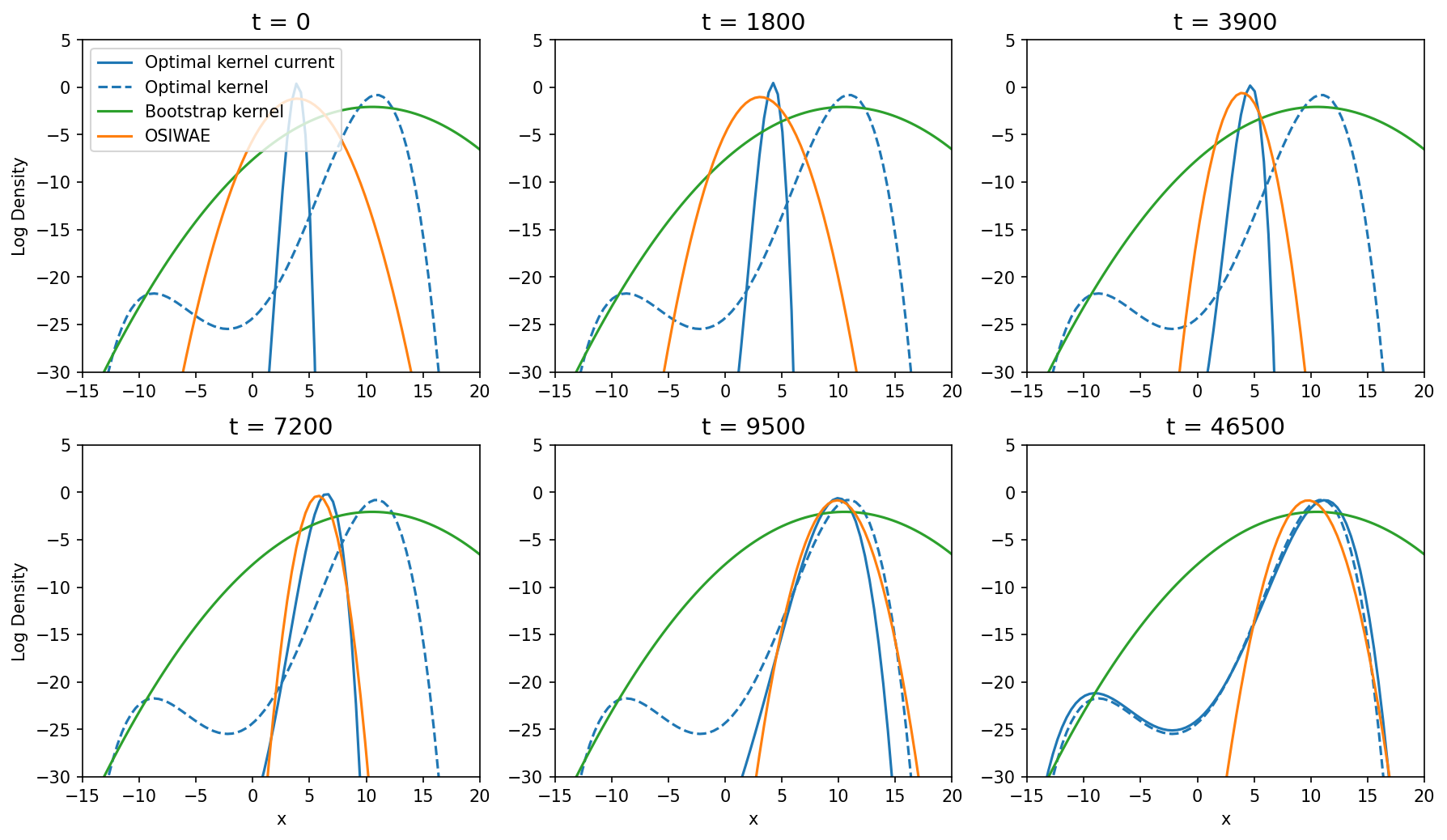}
    \caption{Log-densities of the learned proposal, the optimal kernel parameterised by the current parameter fit as well as the true parameters, and the prior kernel with true parameters. {\smcsiwae} uses 1000 particles and $M = 1000$. The Gaussian proposal $\propdens{\thp}$ is parameterised by two distinct neural networks, with one hidden layer of 12 nodes each, modelling the mean and the variance of the same. In each plot, \(x_t = 0.1\) and \(y_{t+1} = 6 \).}
    \label{fig:kernel_3}
\end{figure}

\section{Conclusion}

We have introduced {\siwae}, a method for recursively optimising an asymptotic {\iwae}-type variational objective in SSMs. {\siwae} is equipped with theoretical results describing its objective and its inherent $\mathcal{O}(M^{-1})$ bias with respect to the asymptotic contrast. By using particle methods, we obtain a practically implementable version, {\smcsiwae}, which can be viewed as an extension of particle-based {\RML} that also allows online training of the particle proposal. Our algorithm also sheds theoretical light on the recently proposed {\OVSMC}, which lacks theoretical underpinnings due to the \emph{ad hoc} truncation of its target gradient. As future research, we intend to provide {\smcsiwae} with a theoretical analysis akin to that of \cite{tadic:doucet:2018} for particle-based {\RML}.

\section*{Acknowledgments}
This work is supported by the Swedish Research Council, grant 2018-05230,
and by the Wallenberg AI, Autonomous Systems and Software Program (WASP) \emph{Online learning in dynamical generative models}.

\onecolumn

\appendix

\section{Proofs of Section~\ref{sec:theory}}\label{app:proofs}
In this appendix we present detailed proofs of the results discussed in the main body of the article. We begin with an overview of the structure of the appendix structure to facilitate navigation.
\begin{itemize}
	\item In Section~\ref{subsec:notation} we introduce the notation, which includes some measure-theoretic formalism that is not present in the main body of paper.
	\item In Section~\ref{subsec:forgetting} we prove the exponential forgetting of filter and tangent-filter measures (Proposition~\ref{prop:forg_filt} and Propostion~\ref{prop:forg_tang}), as a special case of the results of \citet{tadic:doucet:2005}.
	\item In Section~\ref{subsec:ergodicity} we introduce the extended Markov chain, comprising the data generating process, the filter measure associated to the observations and its gradient, and its Markov kernel $\tz{\thp}$. In addition, for this chain, we establish its ergodicity and a strong law of large numbers for a class of objective functions (Proposition~\ref{prop:ergod} and Proposition~\ref{prop:slln}).
	\item In Section~\ref{subsec:mean_field}, using the previously established strong law of large number, we define the objective functions $\lyap(\thp)$ and $\contr(\thp)$ as well as their  gradients (Proposition~\ref{prop:all_conv_slln}, which proves Proposition~3.1). 
	\item In Section~\ref{subsec:bias} we study the bias of $\lyap(\thp)$ and $\nabla \lyap(\thp)$ with respect to $\contr(\thp)$ and $\nabla \contr(\thp)$  (Corollary~\ref{prop:biascontr} and Corollary~\ref{cor:biasmf}, which prove Proposition~3.2 and Theorem~3.3, respectively).
	\item In Section~\ref{subsec:aux}, we prove some auxiliary lemmas that are used in previous sections.
\end{itemize}

\subsection{Notation}\label{subsec:notation}
\label{sec:notation}
We let $\rsetnonneg$ and $\rsetpos$ be the sets of nonnegative and positive real numbers, respectively. For $m\le n\in\nset$, we denote $x_{m:n}\eqdef(x_m,x_{m+1},\dots,x_{n-1},x_n)$ or, alternatively, $x^{m:n}\eqdef(x^m,x^{m+1},\dots,x^{n-1},x^n)$, depending on the specific case. If $m>n$, then by convention $x_{m:n}=x^{m:n}=\emptyset$, $\prod_{i=m}^{n}=1$, and $\sum_{i=m}^{n}=0$. For any vector $x_{1:d}\in\rset^d$, $d\in\nsetpos$, we indicate with $\norm{\cdot}$ the maximum norm, \ie, $\norm{x_{1:d}}=\max_{i \in \{1,\dots,d\}}\abs{x_i}$. For some general state space $(\set{S}, \alg{S})$ we let $\bmf{\alg{S}}$ be the set of real Borel-measurable functions on $\set{S}$ and $\1{\set{S}}\in \bmf{\set{S}}$ be the constant function equal to one on the whole $\set{S}$. We let $\meas{\alg{S}}$ be the set of finite measures on $\alg{S}$ and $\probmeas{\alg{S}}\subset\meas{\alg{S}}$ the set of probability measures. The set of finite signed measures on $\alg{S}$ is denoted by $\signmeas{\alg{S}}\supset\meas{\alg{S}}$. 
For $\mu\in\signmeas{\alg{S}}$, we denote by $\abs{\mu}=\mu^++\mu^-$ its total variation, where $ \mu^+\in\meas{\alg{S}}$ and $\mu^-\in\meas{\alg{S}}$ are the positive and negative parts of $\mu$, respectively, \emph{i.e.}, $\mu = \mu^+-\mu^-$. We let $\tvnorm{\mu}=\abs{\sigma}(\set{S})$ be the total variation norm of $\mu$. We denote by $\probmeasalg{\alg{S}}$, $\measalg{\alg{S}}$ and $\signmeasalg{\alg{S}}$ the sigma-fields of $\probmeas{\alg{S}}$, $\meas{\alg{S}}$, and $\signmeas{\alg{S}}$, respectively, induced by the total variation norm. For every integer $p \in \nsetpos$, we also define the product space 
\begin{equation}\label{eq:signmeasp}
	(\signmeasp{\alg{X}}, \signmeaspalg{\alg{X}})=(\underbrace{\signmeas{\alg{X}}\times\cdots\times \signmeas{\alg{X}}}_{p \text{ times}},\underbrace{\signmeasalg{\alg{X}}\tensprod\cdots\tensprod \signmeasalg{\alg{X}}}_{p \text{ times}}).
\end{equation}
For every $f\in\bmf{\alg{S}}$ and $\boldsymbol{\mu}=(\mu_1,\dots,\mu_p)\in\signmeasp{\alg{S}}$, we denote
\begin{equation}
	\boldsymbol{\mu}f=\int f(s)\,\boldsymbol{\mu}(ds)=(\mu_1f,\dots,\mu_pf)\in\rset^p,
\end{equation}
and, by convention, we still denote with $\tvnorm{\boldsymbol{\mu}}$ the maximum norm of the vector of total variation norms, 
\emph{i.e.},
\begin{equation}
	\tvnorm{\boldsymbol{\mu}}=\max_{i \in \{1,\dots,p\}}\tvnorm{\mu_i}.
\end{equation}
Given some state spaces $(\set{S},\alg{S})$, $(\set{S}',\alg{S}')$, and $(\set{S}'',\alg{S}'')$ and two kernels $\kernel{K}_1:\set{S}\times \alg{S}'\to \rsetnonneg$ and $\kernel{K}_2:\set{S}'\times \alg{S}''\to \rsetnonneg$, we may define new product Markov kernels by, first, the tensor product $\kernel{K}_1\tensprod\kernel{K}_2:\set{S}\times (\alg{S}'\tensprod \alg{S}'')\to \rsetnonneg$ given by, for 
$s \in \set{S}$
and $f\in\bmf{\alg{S}'\tensprod\alg{S}''}$,
\begin{equation}
	(\kernel{K}_1\tensprod\kernel{K}_2)f(s)= \iint f(s',s'') 
    \,\kernel{K}_1(s,ds')\,\kernel{K}_2(s',ds'')
\end{equation}
and, second, the standard product $\kernel{K}_1\kernel{K}_2:\set{S}\times \alg{S}''\to \rsetnonneg$ given by, for 
$s \in \set{S}$
and $f\in\bmf{\alg{S}''}$,
\begin{equation}
	\kernel{K}_1\kernel{K}_2f(s)= \iint f(s'')\, 
 \kernel{K}_1(s,ds')\,\kernel{K}_2(s',ds'').
\end{equation}
Similarly, for $\mu\in\meas{\alg{S}}$ and  $\kernel{K}:\set{S}\times \alg{S}'\to \rsetnonneg$, we define $\mu\tensprod\kernel{K}\in\meas{\alg{S}\tensprod\alg{S}'}$ and $\mu\kernel{K}\in\meas{\alg{S}'}$ such that for $f_1\in\bmf{\alg{S}\tensprod\alg{S}'}$ and $f_2\in\bmf{\alg{S}'}$ we have
\begin{align}
	(\mu\tensprod\kernel{K})f_1&= \iint f_1(s,s') 
	\mu(ds)\,\kernel{K}(s,ds'),
	\\\mu\kernel{K}f_2&= \int f_2(s') 
	\int\mu(ds)\,\kernel{K}(s,ds').
\end{align}
Moreover, for $\mu\in\meas{\alg{S}}$ and $\mu'\in\meas{\alg{S}'}$, we denote by $\mu\tensprod\mu'\in \meas{\alg{S}\tensprod\alg{S}'}$ the standard measure product given by, for  $f\in\bmf{\alg{S}\tensprod\alg{S}'}$,
\begin{equation}
	(\mu\tensprod\mu') f=\iint f(s,s')\,\mu(ds)\,\mu'(ds')
\end{equation}
and by $\mu^{\tensprod k}\in \meas{\alg{S}^{\tensprod k}}$, $k\in\nsetpos$, the generalised product given by, for $f\in\bmf{\alg{S}^{\tensprod k}}$, 
\begin{equation}
	\mu^{\tensprod k}f=\idotsint f(s_1,\dots,s_k)\, \prod_{m = 1}^k \mu(ds_m).
\end{equation}
We assume that all random variables are defined on a common probability space $(\set{\Omega},\alg{F},\prob)$ and consider state and observation spaces $\set{X}\subseteq\rset^{d_x}$ and $\set{Y}\subseteq\rset^{d_y}$, respectively, where $(d_x,d_y)\in\nsetpos^2$. The SSM under consideration is a bivariate Markov chain $(X_t, Y_t)_{t \in \mathbb{N}}$ evolving on $(\set{X} \times \set{Y}, \alg{X} \tensprod \alg{Y})$ according to a dynamics governed by a parametric model, with parameter $\thp=(\thp^1,\dots,\thp^p)\in\parspace\subseteq\rset^p$, $p\in\nsetpos$, where $\parspace$ is some parameter space. The Markov transition kernel of the model is
\begin{equation}\label{def:sker}
	\kernel{S}_\thp : (\set{X}\times\set{Y}) \times (\alg{X} \tensprod \alg{Y}) \ni ((x,y), A) \mapsto \iint \1{A}(x',y') \, \hidker{\thp}(x,dx') \,\emker{\thp}(x', dy'), 
\end{equation}
where we have introduced the Markov kernels
\begin{align}
	\hidker{\thp} : \set{X} \times \alg{X} &\ni (x, A) \mapsto \int  \1{A}(x') \, \hiddens{\thp}(x, x') \, \refm(dx'), \\
	\emker{\thp} : \set{X} \times \alg{Y} &\ni (x, B) \mapsto \int  \1{B}(y) \, \emdens{\thp}(x, y) \, \refg(dy),
\end{align}
with $\hiddens{\thp}: \set{X} \times \set{X} \to \rsetnonneg$ and $\emdens{\thp}:\set{X}\times\set{Y}\to\rsetnonneg$ being the state and emission transition densities with respect to the reference measures $\refm \in \meas{\alg{X}}$ and $\refg \in \meas{\alg{Y}}$. Here we have slightly modified the notation of the main paper, by using the short-hand notation $\hiddens{\thp}(x, x') = \hiddens{\thp}(x' \mid x)$ and $\emdens{\thp}(x, y) = \emdens{\thp}(y \mid x)$, and allowing a general reference measure instead of the Lebesgue measure.
The chain is initialised according to $\xinit\varotimes\emker{\thp}:\alg{X} \tensprod \alg{Y} \ni A \mapsto \int_A \xinit(dx) \,\emker{\thp}(x, dy)$, where $\xinit$ is some probability measure on $(\set{X}, \alg{X})$ having density $m_0(x)$ with respect to $\refm$. 

Given a sequence $(y_t)_{t\ge 0}$ of observations, we define, for each $t \in \nset$, the filter measure $\filt{\thp}{t}\in\probmeas{\alg{X}}$ which satisfies, for every $f\in\bmf{\alg{X}}$,
\begin{equation}\label{eq:filt_def}
	\filt{\thp}{t} f \eqdef \frac{\idotsint  f(x_t) \hiddens{0}(x_0)\emdens{\thp}( x_0, y_0)\prod_{t'=1}^{t}\hiddens{\thp}( x_{t'-1}, x_{t'})\emdens{\thp}(x_{t'}, y_{t'})\,\refm(dx_0)\cdots\refm(dx_{t})}{\idotsint \hiddens{0}(x_0)\emdens{\thp}( x_0, y_0)\prod_{t'=1}^{t}\hiddens{\thp}( x_{t'-1}, x_{t'})\emdens{\thp}(x_{t'}, y_{t'})\,\refm(dx_0)\cdots\refm(dx_{t})}.
\end{equation}
The corresponding filter derivative, or tangent filter, is given by $\tang{\thp}{t}f= \nabla_\thp\filt{\thp}{t} f$. 

We let $\propker{\thp}: \set{X} \times \set{Y} \times \alg{X} \to [0,1]$ be some \emph{proposal kernel}, parameterised by $\thp \in \parspace$ as well and having transition density $\propdens{\thp}:\set{X}\times\set{Y}\times\set{X}\to \rsetnonneg$ with respect to $\refm$. This proposal is assumed to be such that for every $(x, y, A) \in \set{X} \times \set{Y} \times \alg{X}$, 
$$
\propker{\thp}((x, y), A) = 0 \Rightarrow \int \1{A}(x') \emdens{\thp}(x', y) \, \hidker{\thp}(x, dx') = 0. 
$$ 
In order to express the {\siwae} samples as explicit differentiable functions of $\thp$, the proposal is assumed to be reparameterisable. More precisely, we assume that there exist some state-space $(\set{U},\alg{U})$, an easily samplable probability measure $\indmeas\in\probmeas{\alg{U}}$, not depending on $\thp$, and a function $\repfunc{\thp}: \set{X} \times \set{Y} \times \set{U}\to \set{X}$ such that for all $(x,y)\in\set{X}\times\set{Y}$ and $\thp \in \parspace$, it holds that $\int f(\repfunc{\thp}(x, y, \auxrvb)) \, \indmeas(d \auxrvb) = \int f(x') \, \propker{\thp} ((x,y),dx')$ for all bounded real-valued measurable functions $f$ on $\set{X}$; in other words, the pushforward distribution $\indmeas \circ \repfunc{\thp}^{-1}(x, y, \cdot)$ coincides with $\propker{\thp}((x, y), \cdot)$. 

On the basis of the proposal kernel, we redefine the reparameterised weight function 
\begin{equation}
	\wgtfunc{\thp}(x,y,\auxrvb)\eqdef\frac{ \hiddens{\thp}( x,\repfunc{\thp}(x, y, \auxrvb))\emdens{\thp}(\repfunc{\thp}(x, y, \auxrvb),y)}{\propdens{\thp}(x, y,\repfunc{\thp}(x,y, \auxrvb))},
\end{equation} 
for all $(x,y,\auxrvb)\in \set{X}\times\set{Y}\times \set{U}$ such that $\propdens{\thp}(x, y,\repfunc{\thp}(x,y, \auxrvb)) > 0$.

\subsection{Exponential forgetting of the filter and its derivative}\label{subsec:forgetting}

Our analysis relies on the following assumptions.
\begin{assumption}\label{assum:eq}
	There exists $\eq \in(0,1)$ such that for every $\thp\in \parspace$, $(x,x') \in \set{X}^2$, and $y\in\set{Y}$, 
	$$\eq\le \hiddens{\thp}(x,x')\le\eq^{-1},\qquad \eq\le\emdens{\thp}(x,y)\le\eq^{-1}.
	$$ 
\end{assumption}
Under Assumption~\ref{assum:eq} we define 
\begin{equation} \label{eq:def:rho}
    \rhoq\eqdef(1-\eq^4)/(1+\eq^4)\in(0,1).
\end{equation}
\begin{assumption}\label{assum:bg}
	There exists $\bg\in[1\infty)$ such that for every $\thp \in \parspace$, $(x,x')\in\set{X}^2$, and $y\in\set{Y}$,
	\begin{align}
  \norm{\nabla_\thp \hiddens{\thp}(x,x')} \vee \norm{\nabla_\thp \emdens{\thp}(x,y)}
  \le\bg.
	\end{align}
	
\end{assumption}
The strong mixing Assumptions~\ref{assum:eq}--\ref{assum:bg} are standard in the literature and point to applications where the state and parameter spaces are compact.
\begin{remark} \label{rem:mu:probability}
	Note that by Assumption~\ref{assum:eq} it follows that the reference measure $\refm$ is a finite measure on $(\set{X}, \alg{X})$. Thus, without loss of generality we may assume that $\refm$ is a probability measure. 
\end{remark}
\begin{definition}
	For $y\in\set{Y}$, let the Markov kernel $\lker{\thp}{y}:\set{X}\times \alg{X}\to \rsetnonneg$ and the signed kernel $\lkertang{\thp}{y}:\set{X}\times \alg{X}\to \rset^p$ be defined by, for 
 $x\in\set{X}$ and $f\in \bmf{\alg{X}}$,
	\begin{align}
        \lker{\thp}{y}f(x)
  &\eqdef \int f(x')\emdens{\thp}(x',y)\hiddens{\thp}(x,x')\,\refm(dx')
	\end{align}
	and
	\begin{equation}
        \lkertang{\thp}{y}f(x) \eqdef  \int f(x')\nabla_\thp\left\{\emdens{\thp}(x',y)\hiddens{\thp}(x,x')\right\}\,
        \refm(dx').
	\end{equation}
\end{definition}
\begin{definition}\label{def:recursions}
	For $\thp\in\parspace$ 
  let the mappings $\recf{\thp}:\meas{\alg{X}}\times \set{Y}\to \meas{\alg{X}}$ and $\rect{\thp}:\meas{\alg{X}}\times \signmeasp{\alg{X}}\times \set{Y}\to \signmeasp{\alg{X}}$ be given by 
	\begin{equation}
		\recf{\thp}(\mu, y)f=\frac{\mu\lker{\thp}{y}f}{\mu\lker{\thp}{y}\1{\set{X}}}
	\end{equation}
	and
	\begin{equation}\label{eq:tang_rec}
		\rect{\thp}(\mu,\mutilde,y)f=\frac{\mutilde\lker{\thp}{y}f-\mutilde\lker{\thp}{y}\1{\set{X}}\recf{\thp}(\mu, y)f+\mu\lkertang{\thp}{y}f-\mu\lkertang{\thp}{y}\1{\set{X}}\recf{\thp}(\mu, y)f}{\mu\lker{\thp}{y}\1{\set{X}}}.
	\end{equation}
	Moreover, for $\thp\in\parspace$ and a sequence $(y_t)_{t \in \nsetpos}$ in $\set{Y}$ 
 we may define recursively, for $t \in \nsetpos$, 
 the composite mappings 
 $\recf{\thp}^t:\meas{\alg{X}}\times \set{Y}^{t}\to \meas{\alg{X}}$ and $\rect{\thp}^t:\meas{\alg{X}}\times \signmeas{\alg{X}}\times \set{Y}^{t}\to \signmeasp{\alg{X}}$
 by
	\begin{align}
		\recf{\thp}^t(\mu,y_{1:t})&\eqdef\recf{\thp}(\recf{\thp}^{t-1}(\mu,y_{1:t-1}),y_{t}), \\
		\rect{\thp}^t(\mu,\mutilde,y_{1:t})
  &\eqdef\rect{\thp}(\recf{\thp}^{t-1}(\mu,y_{1:t-1}),\rect{\thp}^{t-1}(\mu,\mutilde,y_{1:t-1}),y_{t}). 
	\end{align}
	By convention we let $\recf{\thp}^{0}(\mu,y)=\recf{\thp}^{0}(\mu)=\mu$ and $\rect{\thp}^{0}(\mu,\mutilde, y)=\rect{\thp}^{0}(\mutilde)=\mutilde$.  
\end{definition}

The following lemma relates the compositions of $\recf{\thp}$ and $\rect{\thp}$.
\begin{lemma}\label{lemma:filt_tang}
	Assume that $\mu^\thp\in \meas{\alg{X}}$ is absolutely continuous with respect to the reference measure $\refm$, with a density being differentiable with respect to the parameter $\thp$. Let $\mutilde^\thp\in\signmeasp{\alg{X}}$ be the tangent-filter measure of $\mu^\thp$.
 Then for every $t\in \nsetpos$, $y_{1:t} \in \set{Y}^t$, and $f\in\bmf{\alg{X}}$,  
	\begin{equation}
		\nabla_{\thp}\recf{\thp}^t(\mu^\thp,y_{1:t})f=\rect{\thp}^t(\mu^\thp,\mutilde^\thp,y_{1:t})f.
	\end{equation}
\end{lemma}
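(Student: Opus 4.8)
The plan is to argue by induction on $t$, reducing everything to a single-step identity that is then iterated. The key preliminary observation is that, by the definitions of $\lker{\thp}{y}$ and $\lkertang{\thp}{y}$, one has $\lkertang{\thp}{y} f = \nabla_\thp(\lker{\thp}{y} f)$ pointwise for every $f \in \bmf{\alg{X}}$; that is, $\lkertang{\thp}{y}$ is literally the $\thp$-gradient of the kernel $\lker{\thp}{y}$. The interchange of $\nabla_\thp$ and the integral defining $\lker{\thp}{y} f$ is licensed by Assumption~\ref{assum:bg}, which bounds $\nabla_\thp \hiddens{\thp}$ and $\nabla_\thp \emdens{\thp}$ and hence supplies a dominating function for dominated convergence.

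\textbf{Base case $t = 1$.} I would write $\recf{\thp}(\mu^\thp, y) f = N(\thp)/D(\thp)$ with $N(\thp) = \mu^\thp \lker{\thp}{y} f$ and $D(\thp) = \mu^\thp \lker{\thp}{y} \1{\set{X}}$, and apply the quotient rule $\nabla_\thp (N/D) = \nabla_\thp N / D - (N/D)(\nabla_\thp D/D)$. Each of $N$ and $D$ is a pairing of the $\thp$-dependent measure $\mu^\thp$ with the $\thp$-dependent kernel $\lker{\thp}{y}$, so the product rule gives $\nabla_\thp(\mu^\thp \lker{\thp}{y} f) = \mutilde^\thp \lker{\thp}{y} f + \mu^\thp \lkertang{\thp}{y} f$, where the first term uses that $\mutilde^\thp$ is the tangent measure of $\mu^\thp$ (so that $\nabla_\thp(\mu^\thp g) = \mutilde^\thp g$ for any $\thp$-independent $g$) and the second uses the preliminary interchange. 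Substituting these expressions for $\nabla_\thp N$ and $\nabla_\thp D$, and recalling that $N/D = \recf{\thp}(\mu^\thp, y) f$, collapses the quotient-rule expression to exactly the right-hand side of \eqref{eq:tang_rec}, namely $\rect{\thp}(\mu^\thp, \mutilde^\thp, y) f$.

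\textbf{Induction step.} Suppose the claim holds at $t - 1$, and set $\nu^\thp \eqdef \recf{\thp}^{t-1}(\mu^\thp, y_{1:t-1})$ and $\tilde{\nu}^\thp \eqdef \rect{\thp}^{t-1}(\mu^\thp, \mutilde^\thp, y_{1:t-1})$. The induction hypothesis says precisely that $\nabla_\thp(\nu^\thp f) = \tilde{\nu}^\thp f$ for all $f$, i.e. that $\tilde{\nu}^\thp$ is the tangent measure of $\nu^\thp$. Moreover $\nu^\thp$, being the output of one application of $\recf{\thp}$, is absolutely continuous with respect to $\refm$ with a $\thp$-differentiable density, since the kernel $\hiddens{\thp}$ turns any input into a measure with a density that inherits smoothness from the model. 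Thus $(\nu^\thp, \tilde{\nu}^\thp)$ satisfies the hypotheses of the base-case computation, and applying the single-step identity with $(\nu^\thp, \tilde{\nu}^\thp)$ and $y = y_t$ in place of $(\mu^\thp, \mutilde^\thp)$ yields $\nabla_\thp \recf{\thp}(\nu^\thp, y_t) f = \rect{\thp}(\nu^\thp, \tilde{\nu}^\thp, y_t) f$. By the recursive definitions of $\recf{\thp}^t$ and $\rect{\thp}^t$ in Definition~\ref{def:recursions}, this is exactly the claim at time $t$.

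The main obstacle is the rigorous justification of the two differentiations under the integral sign — the interchange $\lkertang{\thp}{y} = \nabla_\thp \lker{\thp}{y}$ and the product rule for the doubly $\thp$-dependent pairing $\mu^\thp \lker{\thp}{y}(\cdot)$ — together with the validity of the quotient rule. This is where the standing assumptions enter: Assumption~\ref{assum:eq} keeps the denominator $\mu \lker{\thp}{y} \1{\set{X}}$ uniformly bounded away from zero, ruling out singularities and validating the quotient rule, while Assumption~\ref{assum:bg} furnishes the integrable dominating bounds needed for dominated convergence. A secondary, bookkeeping-level point is checking that the regularity (absolute continuity and $\thp$-differentiability of the density) needed to run the base case is propagated by $\recf{\thp}$ at each step, so that the induction is well founded.
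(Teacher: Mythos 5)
Your proof is correct and follows essentially the same route as the paper's: induction on $t$, with the base case obtained from the quotient and product rules applied to $\mu^\thp\lker{\thp}{y}f / \mu^\thp\lker{\thp}{y}\1{\set{X}}$, and the induction step obtained by re-applying that one-step identity to the pair $(\recf{\thp}^{t-1}(\mu^\thp,y_{1:t-1}),\rect{\thp}^{t-1}(\mu^\thp,\mutilde^\thp,y_{1:t-1}))$ and invoking the recursive definitions in Definition~\ref{def:recursions}. The only difference is that you make explicit the regularity bookkeeping (dominated convergence for $\lkertang{\thp}{y}=\nabla_\thp\lker{\thp}{y}$, non-vanishing denominator, propagation of density differentiability) that the paper leaves implicit.
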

\begin{proof}
	We proceed by induction, proving first the claim for in the base case $t=1$. We immediately see that
	\begin{multline}
    \label{eq:ind_recf_rect}
		\nabla_{\thp}\recf{\thp}(\mu^\thp,y_{1})f=\nabla_{\thp}\frac{\mu^\thp\lker{\thp}{y_1}f}{\mu^\thp\lker{\thp}{y_1}\1{\set{X}}} 
		\\=\frac{\mutilde^\thp\lker{\thp}{y_1}f-\mutilde^\thp\lker{\thp}{y_1}\1{\set{X}}\recf{\thp}(\mu^\thp, y_1)f+\mu^\thp\lkertang{\thp}{y_1}f-\mu^\thp\lkertang{\thp}{y_1}\1{\set{X}}\recf{\thp}(\mu^\thp, y_1)f}{\mu^\thp\lker{\thp}{y_1}\1{\set{X}}}
		=\rect{\thp}(\mu^\thp,\mutilde^\thp,y_{1})f.
    \end{multline}
	Now assume that the claim is true for some $t \in \nsetpos$ and let $\mu_t^\thp= \recf{\thp}^t(\mu^\thp,y_{1:t})$ and $\mutilde_t^\thp = \rect{\thp}^t(\mu^\thp,\mutilde^\thp,y_{1:t})$, so that, by the induction hypothesis, $\nabla_{\thp}\mu_t^\thp f=\mutilde_t^\thp f$. Thus,
	\begin{equation}
		\nabla_{\thp}\recf{\thp}^{t+1}(\mu^\thp,y_{1:t+1})f=\nabla_\thp\recf{\thp}(\mu_t^\thp,y_{t+1})f=\rect{\thp}(\mu_t^\thp,\mutilde_t^\thp,y_{t+1})f,
	\end{equation}
	where we used, first, the recursive definition for $\recf{\thp}^{t+1}$, then the induction hypothesis together with \eqref{eq:ind_recf_rect}. The proof is completed by noting that 
	\begin{equation}
		\rect{\thp}(\mu_t^\thp,\mutilde_t^\thp,y_{t+1})f= \rect{\thp}(\recf{\thp}^t(\mu^\thp,y_{1:t}),\rect{\thp}^t(\mu^\thp,\mutilde^\thp,y_{1:t}),y_{t+1})f=\rect{\thp}^{t+1}(\mu^\thp,\mutilde^\thp,y_{1:t+1})f.
	\end{equation}
\end{proof}

\begin{remark}
	An immediate consequence of Lemma~\ref{lemma:filt_tang} is that for $t \in \nset$, it holds $\filt{\thp}{t}=\recf{\thp}(\filt{\thp}{t-1}, y_{t})=\recf{\thp}^{t}(\filt{\thp}{0}, y_{1:t})$ and $\tang{\thp}{t}=\rect{\thp}(\filt{\thp}{t-1},\tang{\thp}{t-1}, y_{t})=\rect{\thp}^{t}(\filt{\thp}{0},\tang{\thp}{0}, y_{1:t})$.
\end{remark}

In the following, let $d$ 
be the Hilbert distance between elements in $\meas{\alg{X}}$, defined as
	\begin{equation}
		\hilbdist(\mu,\mu')\eqdef\log  \frac{\sup_{B\in\alg{X}:\mu'(B)>0} \mu(B)/\mu'(B)}{\inf_{B\in\alg{X}:\mu'(B)>0} \mu(B)/\mu'(B)},
	\end{equation}
	where is it assumed there exist $0<a\le b$ such that $a\mu(B)\le \mu'(B)\le b\mu(B)$ for all $B\in\alg{X}$; otherwise $\hilbdist(\mu,\mu')=\infty$. 
\begin{lemma}\label{lemma:hilbdist}
	Let Assumption~\ref{assum:eq} hold. 
    Then for every $(\mu,\mu') \in\probmeas{\alg{X}}^2$, $y\in\set{Y}$, and $\thp\in\parspace$, 
	\begin{align}
		&\tvnorm{\mu-\mu'}\le \frac{2}{\log 3}\hilbdist(\mu,\mu'),\label{eq:hilb1}
		\\&\hilbdist(\mu \lker{\thp}{y},\mu' \lker{\thp}{y})\le \eq^{-4} \tvnorm{\mu-\mu'},\label{eq:hilb2}
		\\&\hilbdist(\mu \lker{\thp}{y},\mu' \lker{\thp}{y})\le \frac{1-\eq^4}{1+\eq^4}\hilbdist(\mu,\mu').\label{eq:hilb3}
	\end{align}
\end{lemma}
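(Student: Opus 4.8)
The plan is to treat the three inequalities as instances of the classical theory of the Hilbert projective metric, following \citet{legland:oudjane:2004} and \citet{atar:zeitouni:1997}. The single structural input, from which everything flows, is that under Assumption~\ref{assum:eq} the $\refm$-density of $\lker{\thp}{y}$, namely the map $k(x,x')\eqdef\emdens{\thp}(x',y)\hiddens{\thp}(x,x')$, is bounded two-sidedly by $\eq^2\le k(x,x')\le\eq^{-2}$. I would isolate this observation first, and then establish the three bounds in turn.

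For \eqref{eq:hilb1}, which is kernel-independent, I would note that $\hilbdist(\mu,\mu')<\infty$ forces $\mu$ and $\mu'$ to be mutually absolutely continuous, so that $r\eqdef d\mu/d\mu'$ satisfies $a\le r\le b$ with $a\eqdef\mathrm{ess\,inf}\,r$, $b\eqdef\mathrm{ess\,sup}\,r$, and $\hilbdist(\mu,\mu')=\log(b/a)$ (the case $\hilbdist(\mu,\mu')=\infty$ being trivial). Since $\tvnorm{\mu-\mu'}=\int\abs{r-1}\,d\mu'$ subject to the normalisation $\int r\,d\mu'=1$ with $a\le 1\le b$, a bang--bang argument shows the right-hand side is maximised by an $r$ taking only the values $a$ and $b$, giving $\tvnorm{\mu-\mu'}\le 2(b-1)(1-a)/(b-a)$. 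It then remains to verify the elementary scalar inequality $(b-1)(1-a)/(b-a)\le\log(b/a)/\log 3$ on $\{0<a\le 1\le b\}$, which closes \eqref{eq:hilb1}.

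For \eqref{eq:hilb2} (one-step regularisation) I would work with the $\refm$-densities $p$ and $p'$ of $\mu\lker{\thp}{y}$ and $\mu'\lker{\thp}{y}$. Writing $p(x')-p'(x')=\int k(x,x')\,(\mu-\mu')(dx)$ and using that $\mu-\mu'$ has zero total mass to subtract an $x'$-dependent constant from $k(\cdot,x')$, I would bound $\abs{p(x')-p'(x')}$ by half the oscillation of $k(\cdot,x')$ times $\tvnorm{\mu-\mu'}$, and then divide by the uniform lower bound $p'(x')\ge\eq^2$ to obtain $p/p'\in[1-\delta,1+\delta]$ with $\delta\lesssim\eq^{-4}\tvnorm{\mu-\mu'}$; passing to $\hilbdist$ through $\log\frac{1+\delta}{1-\delta}$ and absorbing constants gives \eqref{eq:hilb2}. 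For \eqref{eq:hilb3} I would invoke Birkhoff's contraction theorem: the Hilbert-metric contraction coefficient of a positive kernel equals $\tanh(\tfrac14\Delta)$, where $\Delta=\log N(\lker{\thp}{y})$ is the projective diameter and $N(\lker{\thp}{y})=\sup k(x_1,x_1')k(x_2,x_2')/\{k(x_1,x_2')k(x_2,x_1')\}$ is the Birkhoff coefficient. The two-sided density bound yields $N(\lker{\thp}{y})\le(\eq^{-2}/\eq^{2})^2=\eq^{-8}$, hence $\tanh(\tfrac14\log\eq^{-8})=\frac{\eq^{-2}-\eq^{2}}{\eq^{-2}+\eq^{2}}=\rhoq$ as defined in \eqref{eq:def:rho}, which is exactly \eqref{eq:hilb3}.

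The main obstacle is \eqref{eq:hilb3}: it rests on Birkhoff's theorem, the one genuinely nontrivial analytic ingredient, so the real work is to state that theorem in the measure-theoretic form appropriate here and to check that the factorised density $\emdens{\thp}(x',y)\hiddens{\thp}(x,x')$ has Birkhoff coefficient at most $\eq^{-8}$. By comparison, \eqref{eq:hilb1} and \eqref{eq:hilb2} are elementary once the density bounds are in place, the only mild subtleties being the scalar inequality for \eqref{eq:hilb1} and tracking the constant through the $\log\frac{1+\delta}{1-\delta}$ step in \eqref{eq:hilb2}.
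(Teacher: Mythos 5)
Your proposal for \eqref{eq:hilb2} has a genuine gap in its final step. You bound $p/p'\in[1-\delta,1+\delta]$ with $\delta\le\tfrac12\eq^{-4}\tvnorm{\mu-\mu'}$ and then pass to the Hilbert distance via $\log\frac{1+\delta}{1-\delta}$, "absorbing constants". But $\log\frac{1+\delta}{1-\delta}=2\bigl(\delta+\delta^3/3+\cdots\bigr)\ge 2\delta=\eq^{-4}\tvnorm{\mu-\mu'}$, with strict inequality whenever $\delta>0$, so this route produces a constant \emph{strictly worse} than the stated $\eq^{-4}$; there is no slack to absorb, since \eqref{eq:hilb2} asserts an exact constant. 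Worse, the step collapses outright when $\delta\ge1$, which does occur: $\tvnorm{\mu-\mu'}$ can be as large as $2$ (mutually singular $\mu,\mu'$), so $\delta$ can reach $\eq^{-4}>1$, making $1-\delta$ negative and the bound vacuous. The repair is to bound the \emph{logarithm} of the density ratio directly instead of the ratio: writing $p(x')=\int \emdens{\thp}(x',y)\hiddens{\thp}(x,x')\,\mu(dx)$ and likewise $p'$, the zero-mass/oscillation trick gives $\abs{p(x')-p'(x')}\le\tfrac12(\eq^{-2}-\eq^{2})\tvnorm{\mu-\mu'}\le\tfrac12\eq^{-2}\tvnorm{\mu-\mu'}$, and since $p,p'\ge\eq^{2}$, the elementary bound $\abs{\log s-\log t}\le\abs{s-t}/(s\wedge t)$ yields $\sup_{x'}\abs{\log p(x')-\log p'(x')}\le\tfrac12\eq^{-4}\tvnorm{\mu-\mu'}$. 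Then $\hilbdist(\mu\lker{\thp}{y},\mu'\lker{\thp}{y})\le\sup_{x'}\log\frac{p(x')}{p'(x')}-\inf_{x'}\log\frac{p(x')}{p'(x')}\le 2\sup_{x'}\abs{\log p(x')-\log p'(x')}\le\eq^{-4}\tvnorm{\mu-\mu'}$, which is exactly \eqref{eq:hilb2}; this is in substance the argument of Lemma~3.4 in \citet{legland:oudjane:2004}.

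For the remainder, your plan is sound, and note that the paper itself does not prove this lemma at all: it cites \citet[Lemma~1]{atar:zeitouni:1997} for \eqref{eq:hilb1} and \citet[Lemma~3.4, Proposition~3.9(i)]{legland:oudjane:2004} for \eqref{eq:hilb2}--\eqref{eq:hilb3}, so any self-contained argument is by construction a different (and more informative) route. Your \eqref{eq:hilb1} works: the bang--bang bound $\tvnorm{\mu-\mu'}\le 2(b-1)(1-a)/(b-a)$ is correct, and the deferred scalar inequality is indeed elementary and true — with $m\eqdef\min(b-1,1-a)$ one has $\frac{(b-1)(1-a)}{b-a}\le m$ and $\log(b/a)\ge\log\frac{1+m}{1-m}\ge m\log 3$ (both sides vanish at $m=0$ and the derivative $2/(1-m^2)\ge 2>\log 3$), which closes it. Your \eqref{eq:hilb3} is also correct: the projective diameter of $\lker{\thp}{y}$ is at most $\log\eq^{-8}$, and Birkhoff's theorem gives the contraction coefficient $\tanh(\tfrac14\log\eq^{-8})=(1-\eq^{4})/(1+\eq^{4})=\rhoq$, matching the statement exactly.
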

\begin{proof}
	The proof of \eqref{eq:hilb1} can be found in \cite[Lemma~1]{atar:zeitouni:1997}, while the proofs of \eqref{eq:hilb2} and  \eqref{eq:hilb3} can be found in \cite[Lemma~3.4 and Proposition~3.9(i)]{legland:oudjane:2004}.
\end{proof}
We are now ready to state a first---now classical---result on the exponential forgetting of the filter.
\begin{proposition}[Forgetting of the filter]\label{prop:forg_filt}
	Let Assumption~\ref{assum:eq} hold. Then there exists $\kq>1$, depending on $\eq$ only, such that for every $t\in\nset$, $y_{1:t} \in \set{Y}^t$, $\thp\in\parspace$, and $(\mu,\mu')\in \probmeas{\alg{X}}^2$, 
	\begin{equation}
		\tvnorm{\recf{\thp}^{t}(\mu, y_{1:t})-\recf{\thp}^{t}(\mu', y_{1:t})}\le \kq\rhoq^t\tvnorm{\mu-\mu'},
	\end{equation}
	  where $\rhoq$ is defined in  \eqref{eq:def:rho}
\end{proposition}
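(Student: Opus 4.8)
The plan is to transfer the problem into the Hilbert projective metric $\hilbdist$, where the filter recursion contracts at the geometric rate $\rhoq$ by Lemma~\ref{lemma:hilbdist}, and then to translate the resulting bound back into total variation. The key structural observation is that $\recf{\thp}(\mu, y)$ is nothing but the normalisation of the unnormalised measure $\mu\lker{\thp}{y}$, and that $\hilbdist$ is invariant under multiplication by a positive scalar. Consequently, the normalisation is invisible to $\hilbdist$, and iterating the recursion shows that $\recf{\thp}^t(\mu, y_{1:t})$ is a positive multiple of $\mu\lker{\thp}{y_1}\cdots\lker{\thp}{y_t}$; in particular $\hilbdist(\recf{\thp}^t(\mu, y_{1:t}), \recf{\thp}^t(\mu', y_{1:t})) = \hilbdist(\mu\lker{\thp}{y_1}\cdots\lker{\thp}{y_t}, \mu'\lker{\thp}{y_1}\cdots\lker{\thp}{y_t})$.

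With this in hand the estimate follows by chaining the three bounds of Lemma~\ref{lemma:hilbdist}. First I would pass from total variation to the Hilbert metric via \eqref{eq:hilb1}, writing $\tvnorm{\recf{\thp}^t(\mu, y_{1:t}) - \recf{\thp}^t(\mu', y_{1:t})} \le (2/\log 3)\,\hilbdist(\recf{\thp}^t(\mu, y_{1:t}), \recf{\thp}^t(\mu', y_{1:t}))$. Using the scale invariance just described, the right-hand side equals $(2/\log 3)\,\hilbdist(\mu\lker{\thp}{y_1}\cdots\lker{\thp}{y_t}, \mu'\lker{\thp}{y_1}\cdots\lker{\thp}{y_t})$. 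I then peel off the first kernel application and apply the Birkhoff-type contraction \eqref{eq:hilb3} to the remaining $t-1$ kernels $\lker{\thp}{y_2}, \dots, \lker{\thp}{y_t}$ (using scale invariance to renormalise between steps so that the lemma, stated for probability measures, applies), which produces a factor $\rhoq^{t-1}$. Finally, \eqref{eq:hilb2} converts the distance after the single kernel $\lker{\thp}{y_1}$ back to total variation via $\hilbdist(\mu\lker{\thp}{y_1}, \mu'\lker{\thp}{y_1}) \le \eq^{-4}\tvnorm{\mu - \mu'}$. Collecting the constants yields the claim with $\kq \eqdef (2/\log 3)\,\eq^{-4}\rhoq^{-1}$.

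It then remains to verify the minor points. The constant $\kq$ depends on $\eq$ alone and satisfies $\kq > 1$, since $2/\log 3 > 1$, $\eq^{-4} > 1$, and $\rhoq^{-1} > 1$ for $\eq \in (0,1)$. The case $t = 0$ is immediate because $\recf{\thp}^0 = \mathrm{id}$ and $\kq > 1$, while the displayed argument covers all $t \ge 1$; note that applying $\lker{\thp}{y_1}$ once already guarantees, by Assumption~\ref{assum:eq}, that the two resulting measures are mutually bounded, so the Hilbert distances in play are finite and \eqref{eq:hilb3} genuinely applies as a contraction rather than vacuously.

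The main obstacle is making the projective-invariance reduction fully rigorous: one must justify that the successive normalisations performed by $\recf{\thp}$ leave $\hilbdist$ unchanged and that $\recf{\thp}^t(\mu, y_{1:t})$ is exactly a positive multiple of the iterated unnormalised measure $\mu\lker{\thp}{y_1}\cdots\lker{\thp}{y_t}$. This is a routine consequence of the linearity of $\lker{\thp}{y}$ and the definition of $\hilbdist$ as a scale-invariant projective pseudometric, but it is the one place where care is needed; once it is in place, the contraction and the two total-variation/Hilbert comparisons combine mechanically to give the stated geometric forgetting rate.
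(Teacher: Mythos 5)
Your proposal is correct and follows essentially the same route as the paper's proof: chaining the three bounds of Lemma~\ref{lemma:hilbdist} — total variation to Hilbert at time $t$ via \eqref{eq:hilb1}, the contraction \eqref{eq:hilb3} applied $t-1$ times (justified by scale invariance of $\hilbdist$ under the normalisation in $\recf{\thp}$), and \eqref{eq:hilb2} to return to total variation after the first kernel — yielding the identical constant $\kq = 2\eq^{-4}\rhoq^{-1}/\log 3$. Your explicit treatment of the $t=0$ case and of the finiteness of the Hilbert distances are minor rigour points the paper leaves implicit, but the argument is the same.
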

\begin{proof}
	For all $t \in \nset$, let 
 $\mu_t\eqdef \recf{\thp}^{t}(\mu, y_{1:t})$ and $\mu_t'\eqdef \recf{\thp}^{t}(\mu', y_{1:t})$. Note that the Hilbert distance is invariant under multiplication by	positive scalars, \ie, $\hilbdist(\mu_{t+1},\mu_{t+1}')=\hilbdist(\mu_t \lker{\thp}{y_{t+1}},\mu_t' \lker{\thp}{y_{t+1}})$. Hence, Lemma~\ref{lemma:hilbdist} implies that 
	\begin{align}
		&\tvnorm{\mu_t-\mu_t'}\le \frac{2}{\log 3}\hilbdist(\mu_t,\mu_t'),
		\\&\hilbdist(\mu_{t+1},\mu_{t+1}')\le \eq^{-4} \tvnorm{\mu_t-\mu_t'},
		\\&\hilbdist(\mu_{t+1},\mu_{t+1}')\le \frac{1-\eq^4}{1+\eq^4}\hilbdist(\mu_{t},\mu_{t}'),
	\end{align}
	which in turn implies that 
	\begin{equation}
		\tvnorm{\mu_t-\mu_t'}\le \frac{2}{\log 3}\left(\frac{1-\eq^4}{1+\eq^4}\right)^{t-1}\hilbdist(\mu_1,\mu_1')
		\le \frac{2\eq^{-4}}{\log 3}\left(\frac{1-\eq^4}{1+\eq^4}\right)^{t-1}\tvnorm{\mu-\mu'}.
	\end{equation}
	The proof is now concluded recalling the definition of $\rhoq$ and letting $\kq\eqdef 2\eq^{-4}\rhoq^{-1}/\log 3$.
\end{proof}

The following lemmas are instrumental for the proof of the forgetting of the tangent filter established in Proposition~\ref{prop:forg_tang}.
\begin{lemma}\label{lemma:bound_lkertang}
	Let Assumptions~\ref{assum:eq}--\ref{assum:bg} hold. Then for all $\thp\in\parspace$, $y\in\set{Y}$, and all $\tilde{\mu}\in\signmeas{\alg{X}}$ such that $\tvnorm{\tilde{\mu}}<\infty$, 
	\begin{equation}
		\tvnorm{\tilde{\mu}\lkertang{\thp}{y}}\le 2\bg\eq^{-1}\tvnorm{\tilde{\mu}}.
	\end{equation}
\end{lemma}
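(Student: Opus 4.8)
The plan is to reduce the claim to a uniform pointwise bound on the $\rset^p$-valued density $\nabla_\thp\{\emdens{\thp}(x',y)\hiddens{\thp}(x,x')\}$ defining the signed kernel $\lkertang{\thp}{y}$, and then to propagate this bound through the two integrations defining $\tilde{\mu}\lkertang{\thp}{y}$, using the dual characterisation $\tvnorm{\sigma}=\sup_{\norm{f}_\infty\le 1}|\sigma f|$ of the total variation norm of a signed measure together with the convention $\tvnorm{\boldsymbol{\mu}}=\max_k\tvnorm{\mu_k}$.

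First I would expand the integrand of $\lkertang{\thp}{y}$ via the product rule,
\[
\nabla_\thp\{\emdens{\thp}(x',y)\hiddens{\thp}(x,x')\}=\hiddens{\thp}(x,x')\nabla_\thp\emdens{\thp}(x',y)+\emdens{\thp}(x',y)\nabla_\thp\hiddens{\thp}(x,x').
\]
Bounding the factors $\hiddens{\thp}$ and $\emdens{\thp}$ by $\eq^{-1}$ (Assumption~\ref{assum:eq}) and the two gradients in max norm by $\bg$ (Assumption~\ref{assum:bg}), the triangle inequality yields the uniform pointwise estimate $\norm{\nabla_\thp\{\emdens{\thp}(x',y)\hiddens{\thp}(x,x')\}}\le 2\bg\eq^{-1}$, valid for all $(x,x')\in\set{X}^2$, $y\in\set{Y}$, and $\thp\in\parspace$; note this matches the claimed constant exactly, so no slack is lost.

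Next, I would fix a component index $k\in\{1,\dots,p\}$ and a test function $f$ with $\norm{f}_\infty\le 1$, and write $[\tilde{\mu}\lkertang{\thp}{y}]_k f$ as the iterated integral $\int\big(\int f(x')[\nabla_\thp\{\emdens{\thp}(x',y)\hiddens{\thp}(x,x')\}]_k\,\refm(dx')\big)\,\tilde{\mu}(dx)$. The inner integral is bounded in absolute value by $2\bg\eq^{-1}\refm(\set{X})$; here the key simplification is Remark~\ref{rem:mu:probability}, by which $\refm$ may be taken to be a probability measure, so that $\refm(\set{X})=1$ and the inner integral defines a function of $x$ bounded by $2\bg\eq^{-1}$ uniformly. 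Integrating this bounded function against $\tilde{\mu}$ and again invoking the dual characterisation gives $|[\tilde{\mu}\lkertang{\thp}{y}]_k f|\le 2\bg\eq^{-1}\tvnorm{\tilde{\mu}}$. Taking the supremum over $\norm{f}_\infty\le 1$ and the maximum over $k$ completes the bound.

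There is no genuine obstacle here: the statement is a routine uniform estimate, and the only points deserving care are the bookkeeping of the max-norm convention across the $p$ gradient components and the use of Remark~\ref{rem:mu:probability} to absorb the total mass of the reference measure.
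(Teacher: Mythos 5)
Your proof is correct and takes essentially the same route as the paper's: the same product-rule expansion of $\nabla_\thp\{\emdens{\thp}(x',y)\hiddens{\thp}(x,x')\}$, the same pointwise bound $2\bg\eq^{-1}$ from Assumptions~\ref{assum:eq}--\ref{assum:bg}, and the same use of the fact that $\refm$ may be taken to be a probability measure. The only difference is bookkeeping: the paper splits $\tilde{\mu}$ into its Jordan decomposition $\mu^+-\mu^-$ and bounds each part, whereas you invoke the duality $\tvnorm{\sigma}=\sup_{\norm{f}_\infty\le 1}\abs{\sigma f}$ componentwise; both are standard one-line facts and yield the identical constant.
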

\begin{proof}
	Let $\mu^+$ and $\mu^-$ be the positive and negative parts of $\tilde{\mu}$, respectively. Then
	\begin{equation}
		\tvnorm{\tilde{\mu}\lkertang{\thp}{y}}
		\le\tvnorm{\mu^+\lkertang{\thp}{y}}+\tvnorm{\mu^-\lkertang{\thp}{y}}.
	\end{equation}
	Using Assumptions~\ref{assum:eq}--\ref{assum:bg}, note that $\mu^\pm\lkertang{\thp}{y}\in\signmeasp{\alg{X}}$ and 
	\begin{equation}
		\tvnorm{\mu^\pm\lkertang{\thp}{y}}
		\le \int \mu^\pm(dx)\,\left(\hiddens{\thp}(x,x')\norm{\nabla_{\thp}\emdens{\thp}(x',y)}+
		\emdens{\thp}(x',y)\norm{\nabla_{\thp}\hiddens{\thp}(x,x')}\right)\,\refm(dx')
		\le 2\bg \eq^{-1}\tvnorm{\mu^\pm},
	\end{equation}
	which implies
	\begin{equation}
		\tvnorm{\tilde{\mu}\lkertang{\thp}{y}}\le2\bg \eq^{-1}(\tvnorm{\mu^+}+\tvnorm{\mu^-})=2\bg \eq^{-1}\tvnorm{\tilde{\mu}}.
	\end{equation}
\end{proof}

\begin{definition}\label{def:lkert_gtile_htilde}
	For every $\thp\in\parspace$ and $(y_t)_{t\in\nsetpos}\in\set{Y}$, define recursively  
	\begin{equation}
		\lkert{\thp}{t+1}{y_{1:t+1}}=\lkert{\thp}{t}{y_{1:t}}\lker{\thp}{y_{t+1}}, \quad t\in\nset,
	\end{equation}
	with the convention $\lkert{\thp}{0}{y_{0}}f(x)=f(x)$. In addition, for every $\thp\in\parspace$, define the mappings $\gtilde{\thp}{t}: \meas{\alg{X}}\times\signmeasp{\alg{X}}\times \set{Y}^t\to\signmeasp{\alg{X}}$, $t\in\nset$, and $\htilde{\thp}{t}: \meas{\alg{X}}\times\set{Y}^t\to\signmeasp{\alg{X}}$,  $t\in\nsetpos$, by $\gtilde{\thp}{0}(\mu,\mutilde, y_{0}) \eqdef \mutilde$ and, for $f\in\bmf{\alg{X}}$,
	\begin{align}
		\gtilde{\thp}{t}(\mu,\mutilde, y_{1:t})f&\eqdef\frac{(\mutilde\lkert{\thp}{t}{y_{1:t}}-\mutilde\lkert{\thp}{t}{y_{1:t}}\1{\set{X}}\recf{\thp}^{t}(\mu,y_{1:t}))f}{\mu \lkert{\thp}{t}{y_{1:t}}\1{\set{X}}}
		\\\htilde{\thp}{t}(\mu, y_{1:t})f&\eqdef\frac{(\recf{\thp}^{t-1}(\mu,y_{1:t-1})\lkertang{\thp}{y_t}-\recf{\thp}^{t-1}(\mu,y_{1:t-1})\lkertang{\thp}{y_t}\1{\set{X}}\recf{\thp}^{t}(\mu,y_{1:t}))f}{\recf{\thp}^{t-1}(\mu,y_{1:t-1})\lker{\thp}{y_t}\1{\set{X}} }.
	\end{align}
\end{definition}

\begin{lemma}\label{lemma:gtil_htil}
	For every $\thp\in\parspace$, $(y_t)_{t \in \nsetpos}\in\set{Y}$, $\mu\in\probmeas{\alg{X}}$, and $\mutilde\in\signmeasp{\alg{X}}$ it holds that 
	\begin{equation}
		\rect{\thp}^t(\mu,\mutilde, y_{1:t}) = \gtilde{\thp}{t}(\mu,\mutilde,y_{1:t})+\sum_{s=1}^{t}\gtilde{\thp}{t-s}(\mu_s,\htilde{\thp}{s}(\mu, y_{1:s}), y_{s+1:t}), \quad t\in\nset.
	\end{equation}

\end{lemma}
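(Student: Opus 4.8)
The plan is to proceed by induction on $t$, after isolating the linear structure of the one-step tangent recursion $\rect{\thp}$ and establishing an intertwining identity which shows that $\rect{\thp}$, acting on a $\gtilde{\thp}{k}$-term, produces a $\gtilde{\thp}{k+1}$-term with one more observation appended. Throughout I write $\mu_s \eqdef \recf{\thp}^s(\mu, y_{1:s})$, so that in particular $\recf{\thp}^{t-s}(\mu_s, y_{s+1:t}) = \mu_t$.

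First I would introduce, for the observation $y_{t+1}$ and filter $\mu_t$, the \emph{linear} map $\mathcal{A}$ sending a signed measure $\tilde\nu$ to
\[
\mathcal{A}(\tilde\nu) \eqdef \frac{\tilde\nu\lker{\thp}{y_{t+1}} - \tilde\nu\lker{\thp}{y_{t+1}}\1{\set{X}}\,\recf{\thp}(\mu_t,y_{t+1})}{\mu_t\lker{\thp}{y_{t+1}}\1{\set{X}}}.
\]
Comparing with the definition of $\rect{\thp}$ in \eqref{eq:tang_rec} and recalling $\recf{\thp}^{t+1}(\mu, y_{1:t+1}) = \recf{\thp}(\mu_t, y_{t+1})$, the composite definition in Definition~\ref{def:recursions} decomposes as
\[
\rect{\thp}^{t+1}(\mu,\mutilde, y_{1:t+1}) = \mathcal{A}\bigl(\rect{\thp}^{t}(\mu,\mutilde, y_{1:t})\bigr) + \htilde{\thp}{t+1}(\mu, y_{1:t+1}),
\]
because the source part of $\rect{\thp}$, namely the two terms built from $\mu_t\lkertang{\thp}{y_{t+1}}$, is after normalisation exactly $\htilde{\thp}{t+1}(\mu, y_{1:t+1})$ by Definition~\ref{def:lkert_gtile_htilde}.

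The core step is the intertwining identity: for every $0 \le s \le t$, every signed measure $\tilde\lambda$, and the length-$(t-s)$ block $y_{s+1:t}$,
\[
\mathcal{A}\bigl(\gtilde{\thp}{t-s}(\mu_s, \tilde\lambda, y_{s+1:t})\bigr) = \gtilde{\thp}{t+1-s}(\mu_s, \tilde\lambda, y_{s+1:t+1}).
\]
I would prove this by inserting the definition of $\gtilde{\thp}{t-s}$, using $\lkert{\thp}{t-s}{y_{s+1:t}}\lker{\thp}{y_{t+1}} = \lkert{\thp}{t-s+1}{y_{s+1:t+1}}$, and rewriting every normalisation through the scalars $\mu_s\lkert{\thp}{t-s}{y_{s+1:t}}\1{\set{X}}$ and $\mu_t\lker{\thp}{y_{t+1}}\1{\set{X}}$. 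The main obstacle, and the one computation worth carrying out in full, is the cancellation of cross-terms: the contribution of $\tilde\lambda\lkert{\thp}{t-s}{y_{s+1:t}}\1{\set{X}}\,\recf{\thp}^{t-s}(\mu_s,y_{s+1:t})\lker{\thp}{y_{t+1}}$ arising from the first part of $\tilde\nu\lker{\thp}{y_{t+1}}$ is cancelled exactly by the contribution of $\tilde\nu\lker{\thp}{y_{t+1}}\1{\set{X}}$ paired with $\recf{\thp}(\mu_t,y_{t+1})$, since the normalisation of $\recf{\thp}^{t-s+1}$ supplies precisely the reciprocal factor. Once this is verified, the remainder collapses to the definition of $\gtilde{\thp}{t+1-s}(\mu_s, \tilde\lambda, y_{s+1:t+1})$.

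With these two ingredients the induction is routine. The base case $t=0$ holds by the conventions $\rect{\thp}^0(\mu,\mutilde, y) = \mutilde = \gtilde{\thp}{0}(\mu,\mutilde,y_0)$ together with the empty sum. For the step, I would apply $\mathcal{A}$ term by term (using its linearity) to the induction hypothesis for $\rect{\thp}^{t}$: the free term $\gtilde{\thp}{t}(\mu,\mutilde,y_{1:t})$ becomes $\gtilde{\thp}{t+1}(\mu,\mutilde,y_{1:t+1})$ (the $s=0$ case of the intertwining identity), while each summand $\gtilde{\thp}{t-s}(\mu_s, \htilde{\thp}{s}(\mu,y_{1:s}), y_{s+1:t})$ becomes $\gtilde{\thp}{t+1-s}(\mu_s, \htilde{\thp}{s}(\mu,y_{1:s}), y_{s+1:t+1})$. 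Adding the source term $\htilde{\thp}{t+1}(\mu, y_{1:t+1})$, which is exactly the missing $s=t+1$ summand since $\gtilde{\thp}{0}$ is the identity in its measure argument, yields the claimed decomposition at time $t+1$ and closes the induction.
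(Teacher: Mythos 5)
Your proposal is correct and follows essentially the same route as the paper's proof: the same induction, the same one-step decomposition of $\rect{\thp}$ into the linear propagation of $\mutilde_t$ plus the source term $\htilde{\thp}{t+1}(\mu,y_{1:t+1})$, and the same key intertwining computation (your operator $\mathcal{A}$ is exactly the map $\tilde{\boldsymbol{\nu}} \mapsto \tilde{\boldsymbol{\nu}}(\lker{\thp}{y_{t+1}}-\lker{\thp}{y_{t+1}}\1{\set{X}}\mu_{t+1})/\mu_t\lker{\thp}{y_{t+1}}\1{\set{X}}$ the paper pushes through each $\gtilde{\thp}{t-s}$ term, with the cross-terms cancelling as $\mu_{t+1}-\mu_{t+1}=0$). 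Naming the operator and stating the intertwining identity separately is merely a cleaner packaging of the identical argument.
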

\begin{proof}
	The base case $t=0$ is trivially true. Now we assume that the claim is true for some $t \in \nset$ and proceed by induction. Let $\mu_t = \recf{\thp}^t(\mu,y_{1:t})\in\probmeas{\alg{X}}$ and $\mutilde_{t}=\rect{\thp}^t(\mu,\mutilde, y_{1:t})\in\signmeasp{\alg{X}}$, for $t\ge 0$. We write
	\begin{multline}
		\mutilde_{t+1} = \rect{\thp}(\mu_t, \mutilde_t, y_{t+1})=\frac{\mutilde_t\left(\lker{\thp}{y_{t+1}}-\lker{\thp}{y_{t+1}}\1{\set{X}}\mu_{t+1}\right)}{\mu_t\lker{\thp}{y_{t+1}}\1{\set{X}}}
		+\frac{\mu_t\lkertang{\thp}{y_{t+1}}-\mu_t\lkertang{\thp}{y_{t+1}}\1{\set{X}}\mu_{t+1}}{\mu_t\lker{\thp}{y_{t+1}}\1{\set{X}}}
		\\=\frac{\mutilde_t\left(\lker{\thp}{y_{t+1}}-\lker{\thp}{y_{t+1}}\1{\set{X}}\mu_{t+1}\right)}{\mu_t\lker{\thp}{y_{t+1}}\1{\set{X}}}+\htilde{\thp}{t+1}(\mu, y_{1:t+1}),
	\end{multline}
	where we used, first, the recursion \eqref{eq:tang_rec} first and, second, Definition~\ref{def:lkert_gtile_htilde} of $\htilde{\thp}{t}$. Now, we apply the induction hypothesis to $\mutilde_t$, noticing first that for every $\tilde{\boldsymbol{\nu}}\in\signmeasp{\alg{X}}$ and $0\le s<t$,
	\begin{align}
		\lefteqn{\frac{\gtilde{\thp}{t-s}(\mu_s, \tilde{\boldsymbol{\nu}}, y_{s+1:t})\left(\lker{\thp}{y_{t+1}}-\lker{\thp}{y_{t+1}}\1{\set{X}}\mu_{t+1}\right)}{\mu_t\lker{\thp}{y_{t+1}}\1{\set{X}}}} \hspace{10mm}
		\\&= \frac{\left(\tilde{\boldsymbol{\nu}}\lkert{\thp}{t-s}{y_{s+1:t}}-\tilde{\boldsymbol{\nu}}\lkert{\thp}{t-s}{y_{s+1:t}}\1{\set{X}}\recf{\thp}^{t-s}(\mu_s,y_{s+1:t})\right)\left(\lker{\thp}{y_{t+1}}-\lker{\thp}{y_{t+1}}\1{\set{X}}\mu_{t+1}\right)}{\mu_s \lkert{\thp}{t-s}{y_{s+1:t}}\1{\set{X}}\mu_t\lker{\thp}{y_{t+1}}\1{\set{X}}}
		\\&=\frac{\tilde{\boldsymbol{\nu}}\lkert{\thp}{t+1-s}{y_{s+1:t+1}}-\tilde{\boldsymbol{\nu}}\lkert{\thp}{t+1-s}{y_{s+1:t+1}}\1{\set{X}}\recf{\thp}^{t+1-s}(\mu_s,y_{s+1:t+1})}{\mu_s \lkert{\thp}{t+1-s}{y_{s+1:t+1}}\1{\set{X}}}
		\\&\hspace{1cm}-\frac{\tilde{\boldsymbol{\nu}}\lkert{\thp}{t-s}{y_{s+1:t}}\1{\set{X}}(\mu_{t}\lker{\thp}{y_{t+1}}- \mu_t\lker{\thp}{y_{t+1}}\1{\set{X}}\mu_{t+1})}{\mu_s \lkert{\thp}{t-s}{y_{s+1:t}}\1{\set{X}}\mu_t\lker{\thp}{y_{t+1}}\1{\set{X}}}
		\\&=\gtilde{\thp}{t+1-s}(\mu_s, \tilde{\boldsymbol{\nu}}, y_{s+1:t+1})-\frac{\tilde{\boldsymbol{\nu}}\lkert{\thp}{t-s}{y_{s+1:t}}\1{\set{X}}(\mu_{t+1}- \mu_{t+1})}{\mu_s \lkert{\thp}{t-s}{y_{s+1:t}}\1{\set{X}}}
		=\gtilde{\thp}{t+1-s}(\mu_s, \tilde{\boldsymbol{\nu}}, y_{s+1:t+1}).
	\end{align}
	Thus, using the previous relation and the induction step for $\mutilde_t$ we obtain  
	\begin{align}
		\mutilde_{t+1} &=\frac{\mutilde_t\left(\lker{\thp}{y_{t+1}}-\lker{\thp}{y_{t+1}}\1{\set{X}}\mu_{t+1}\right)}{\mu_t\lker{\thp}{y_{t+1}}\1{\set{X}}}+\htilde{\thp}{t+1}(\mu, y_{1:t+1})
		\\&=\gtilde{\thp}{t}(\mu,\mutilde, y_{1:t})	\frac{\lker{\thp}{y_{t+1}}-\lker{\thp}{y_{t+1}}\1{\set{X}}\mu_{t+1}}{\mu_t\lker{\thp}{y_{t+1}}\1{\set{X}}}
		\\&\hspace{1cm}+\sum_{s=1}^{t}\gtilde{\thp}{t-s}(\mu_s,\htilde{\thp}{s}(\mu, y_{1:s}), y_{s+1:t})	\frac{\lker{\thp}{y_{t+1}}-\lker{\thp}{y_{t+1}}\1{\set{X}}\mu_{t+1}}{\mu_t\lker{\thp}{y_{t+1}}\1{\set{X}}} +\htilde{\thp}{t+1}(\mu, y_{1:t+1})
		\\&=\gtilde{\thp}{t+1}(\mu,\mutilde, y_{1:t+1})+\sum_{s=1}^{t}\gtilde{\thp}{t+1-s}(\mu_s,\htilde{\thp}{s}(\mu, y_{1:s}), y_{s+1:t+1})+\htilde{\thp}{t+1}(\mu, y_{1:t+1})
		\\&=\gtilde{\thp}{t+1}(\mu,\mutilde, y_{1:t+1})+\sum_{s=1}^{t+1}\gtilde{\thp}{t+1-s}(\mu_s,\htilde{\thp}{s}(\mu, y_{1:s}), y_{s+1:t+1}),
	\end{align}
	which proves the claim.
\end{proof}
\begin{lemma}\label{lemma:gtil_forg}
	Let Assumption~\ref{assum:eq} hold. Then for every $t\in\nset$, 
 $y_{1:t} \in \set{Y}^t$, 
 $\thp\in\parspace$, $(\mu, \mu')\in\probmeas{\alg{X}}^2$, and $(\mutilde,\mutilde') \in\signmeasp{\alg{X}}^2$, 
	\begin{align}
		\text{(i) }&\tvnorm{\gtilde{\thp}{t}(\mu,\mutilde,y_{1:t})}\le2\eq^{-4}\kq\rhoq^t\tvnorm{\mutilde},
		\\\text{(ii) }&\tvnorm{\gtilde{\thp}{t}(\mu,\mutilde,y_{1:t})-\gtilde{\thp}{t}(\mu,\mutilde',y_{1:t})}\le2\eq^{-4}\kq\rhoq^t\tvnorm{\mutilde-\mutilde'},
		\\\text{(iii) }&\tvnorm{\gtilde{\thp}{t}(\mu,\mutilde,y_{1:t})-\gtilde{\thp}{t}(\mu',\mutilde,y_{1:t})}\le2\eq^{-8}\kq\rhoq^t \tvnorm{\mu-\mu'}\tvnorm{\mutilde}. 
	\end{align}
\end{lemma}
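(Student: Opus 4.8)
The plan is to reduce all three estimates to two ingredients: the exponential forgetting of the filter (Proposition~\ref{prop:forg_filt}) and a $t$-uniform control of the unnormalised likelihood function $u_t\eqdef\lkert{\thp}{t}{y_{1:t}}\1{\set{X}}$. For the latter, I would note that, by Definition~\ref{def:lkert_gtile_htilde}, $u_t=\lker{\thp}{y_1}(\lkert{\thp}{t-1}{y_{2:t}}\1{\set{X}})$ depends on its argument only through $\hiddens{\thp}(x,\cdot)$, so Assumption~\ref{assum:eq} yields $\eq^2\le u_t(x)/u_t(x')\le\eq^{-2}$ for all $x,x'$, and therefore $\|u_t\|_\infty/(\mu u_t)\le\eq^{-2}$ for every $\mu\in\probmeas{\alg{X}}$. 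Crucially, this ratio is bounded uniformly in $t$, because the $x$-oscillation of $u_t$ is produced by a single transition, not accumulated over the $t$ steps; keeping numerator and denominator paired is what prevents spurious $\eq^{-t}$ factors.

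The key structural step is a Dirac representation of $\gtilde{\thp}{t}$. Since $\recf{\thp}^t(\delta_x,y_{1:t})f=\lkert{\thp}{t}{y_{1:t}}f(x)/u_t(x)$, unfolding Definition~\ref{def:lkert_gtile_htilde} gives, for every $f$,
\begin{equation}
\gtilde{\thp}{t}(\mu,\mutilde,y_{1:t})f=\frac{1}{\mu u_t}\int u_t(x)\bigl(\recf{\thp}^t(\delta_x,y_{1:t})-\recf{\thp}^t(\mu,y_{1:t})\bigr)f\,\mutilde(dx).
\end{equation}
This exhibits $\gtilde{\thp}{t}$ as an $\mutilde$-average of filter differences started from $\delta_x$ and from $\mu$, each of which is controlled by forgetting: by Proposition~\ref{prop:forg_filt}, $\lvert(\recf{\thp}^t(\delta_x,y_{1:t})-\recf{\thp}^t(\mu,y_{1:t}))f\rvert\le\kq\rhoq^t\tvnorm{\delta_x-\mu}\le2\kq\rhoq^t$ whenever $\lvert f\rvert\le1$.

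For (i) I would bound the integrand in the representation by $\lVert u_t\rVert_\infty\cdot2\kq\rhoq^t$, integrate against $\mutilde$ to pick up $\tvnorm{\mutilde}$, and divide by $\mu u_t$ using the ratio bound; this already gives $\tvnorm{\gtilde{\thp}{t}(\mu,\mutilde,y_{1:t})}\le2\eq^{-2}\kq\rhoq^t\tvnorm{\mutilde}$, which is stronger than the stated $2\eq^{-4}\kq\rhoq^t\tvnorm{\mutilde}$ (recall $\eq^{-2}\le\eq^{-4}$). Claim (ii) is then immediate, since $\gtilde{\thp}{t}(\mu,\cdot,y_{1:t})$ is linear in its second argument, so that $\gtilde{\thp}{t}(\mu,\mutilde,y_{1:t})-\gtilde{\thp}{t}(\mu,\mutilde',y_{1:t})=\gtilde{\thp}{t}(\mu,\mutilde-\mutilde',y_{1:t})$ and (i) applies to $\mutilde-\mutilde'$.

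The main obstacle is (iii), where $\mu$ enters both the normalisation $\mu u_t$ and the centring filter $\recf{\thp}^t(\mu,y_{1:t})$, so two distinct sources of variation must be separated. The decisive remark is that the two integrands differ by a quantity constant in $x$: writing $\Delta_\mu(x)\eqdef(\recf{\thp}^t(\delta_x,y_{1:t})-\recf{\thp}^t(\mu,y_{1:t}))f$, one has $\Delta_\mu-\Delta_{\mu'}=(\recf{\thp}^t(\mu',y_{1:t})-\recf{\thp}^t(\mu,y_{1:t}))f$, independent of $x$. Substituting $\Delta_\mu=\Delta_{\mu'}+(\Delta_\mu-\Delta_{\mu'})$ splits the difference $\gtilde{\thp}{t}(\mu,\mutilde,y_{1:t})f-\gtilde{\thp}{t}(\mu',\mutilde,y_{1:t})f$ into a term proportional to the constant $\Delta_\mu-\Delta_{\mu'}$ times $\mutilde u_t/(\mu u_t)$, and a term carrying the difference of reciprocal normalisations $1/(\mu u_t)-1/(\mu' u_t)$. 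The first is bounded by applying forgetting to $(\recf{\thp}^t(\mu',y_{1:t})-\recf{\thp}^t(\mu,y_{1:t}))f$ (producing $\kq\rhoq^t\tvnorm{\mu-\mu'}$) and one ratio bound; the second uses $\lvert(\mu-\mu')u_t\rvert\le\tvnorm{\mu-\mu'}\lVert u_t\rVert_\infty$ together with $(\mu u_t)^{-1}(\mu' u_t)^{-1}\lVert u_t\rVert_\infty^2\le\eq^{-4}$, which is where the higher power of $\eq^{-1}$ enters. Collecting the two contributions yields a bound of the form $C\eq^{-c}\kq\rhoq^t\tvnorm{\mu-\mu'}\tvnorm{\mutilde}$; the remaining work is purely the (lossy) bookkeeping of the numerical constants and powers of $\eq^{-1}$ needed to absorb everything into the stated $2\eq^{-8}\kq\rhoq^t\tvnorm{\mu-\mu'}\tvnorm{\mutilde}$, the factors of $2$ coming from $\tvnorm{\delta_x-\mu}\le2$ and $\tvnorm{\mu-\mu'}\le2$.
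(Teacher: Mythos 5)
Your proposal is correct in substance but takes a genuinely different route from the paper. Where you disintegrate $\mutilde$ into Dirac masses and represent $\gtilde{\thp}{t}(\mu,\mutilde,y_{1:t})f$ as an $\mutilde$-average of the filter differences $(\recf{\thp}^t(\delta_x,y_{1:t})-\recf{\thp}^t(\mu,y_{1:t}))f$ weighted by $u_t(x)=\lkert{\thp}{t}{y_{1:t}}\1{\set{X}}(x)$, the paper instead takes the Jordan decomposition $\tilde{\mu}=\mu^+-\mu^-$, normalises the parts into probability measures $\mu_0^\pm$, and expresses $\gtilde{\thp}{t}$ through the differences $\recf{\thp}^t(\mu_0^\pm,y_{1:t})-\recf{\thp}^t(\mu,y_{1:t})$. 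Both reductions then run on the same two ingredients: Proposition~\ref{prop:forg_filt} and a $t$-uniform ratio bound on normalising constants (yours is $\lVert u_t\rVert_\infty/(\mu u_t)\le\eq^{-2}$, exploiting that the $x$-oscillation of $u_t$ is generated by the single kernel $\hiddens{\thp}(x,\cdot)$; the paper's analogue, $\mu_0^\pm\lkert{\thp}{t}{y_{1:t}}\1{\set{X}}/\mu\lkert{\thp}{t}{y_{1:t}}\1{\set{X}}\le\eq^{-4}$, also pays for the emission density and is therefore weaker). Part (ii) is identical in both (linearity in $\mutilde$), and your splitting in (iii) --- the constant-in-$x$ term $\Delta_\mu-\Delta_{\mu'}$ plus the difference of reciprocal normalisations --- plays exactly the role of the paper's splitting into a multiple of $\gtilde{\thp}{t}(\mu',\tilde{\mu},y_{1:t})$ (re-using (i)) plus a multiple of $\mu_t-\mu_t'$. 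Net effect: your route is a little cleaner, gives a sharper constant in (i) ($2\eq^{-2}\kq$ versus $2\eq^{-4}\kq$), and avoids the positive/negative-part bookkeeping; do add the one-line remark that for $p>1$ the argument is applied componentwise under the max-norm convention, as the paper does at the end of its proof.

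One honest caveat on constants. Your (iii) delivers $(\eq^{-2}+2\eq^{-4})\kq\rhoq^t\tvnorm{\mu-\mu'}\tvnorm{\mutilde}$, and the inequality $\eq^{-2}+2\eq^{-4}\le2\eq^{-8}$ is \emph{false} for $\eq$ close to $1$ (with $a=\eq^{-2}\downarrow1$ the left side tends to $3$ and the right side to $2$), so the claim that the remaining bookkeeping "absorbs everything" into the stated $2\eq^{-8}$ does not hold verbatim for all $\eq\in(0,1)$. This is not a substantive gap in your argument, and it is the same order of slack present in the paper itself: the paper's displayed computation re-uses part (i) without its factor $2$, and if that factor is kept its own bound becomes $(2\eq^{-8}+\eq^{-4})\kq\rhoq^t$, which also exceeds $2\eq^{-8}\kq\rhoq^t$. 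Since downstream (Proposition~\ref{prop:forg_tang}, Lemma~\ref{lemma:bound_tang}, Proposition~\ref{prop:ergod}) only the form $C(\eq,\bg)\kq\rhoq^t$ of the bound matters, the fix is cosmetic: either state (iii) with your explicit constant $(\eq^{-2}+2\eq^{-4})\kq$, or enlarge the stated constant to, say, $3\eq^{-8}\kq$, under which both your derivation and the paper's go through.
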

\begin{proof}
	We first assume that $p=1$, this is, $\mutilde=\tilde{\mu}\in\signmeas{\alg{X}}$, which implies that $\gtilde{\thp}{t}(\mu,\tilde{\mu},y_{1:t})\in\signmeas{\alg{X}}$ as well.	Let $\mu_{t}=\recf{\thp}^t(\mu,y_{1:t})$, and denote by $\mu^+$ and $\mu^-$ the positive and negative parts of $\tilde{\mu}$, respectively. If $\tvnorm{\mu^\pm}>0$, then we let $\mu_0^\pm=(\mu^\pm \1{\set{X}})^{-1}\mu^\pm$ and $\mu_t^\pm= \recf{\thp}^t(\mu_0^\pm,y_{1:t})$; otherwise we let $\{\mu_t^\pm\}_{t\ge 0}$ be a sequence of trivial measures. We start with part (i) and write
	\begin{align}
		\gtilde{\thp}{t}&(\mu,\tilde{\mu}, y_{1:t})= (\mu \lkert{\thp}{t}{y_{1:t}}\1{\set{X}})^{-1}(\tilde{\mu}\lkert{\thp}{t}{y_{1:t}}-\tilde{\mu}\lkert{\thp}{t}{y_{1:t}}\1{\set{X}}\mu_{t})
		\\&=\frac{(\mu^+\1{\set{X}}\mu_0^+-\mu^-\1{\set{X}}\mu_0^-)\lkert{\thp}{t}{y_{1:t}}-(\mu^+\1{\set{X}}\mu_0^+-\mu^-\1{\set{X}}\mu_0^-)\lkert{\thp}{t}{y_{1:t}}\1{\set{X}}\mu_{t}}{\mu \lkert{\thp}{t}{y_{1:t}}\1{\set{X}}}
		\\&=\frac{\mu^+\1{\set{X}}\mu_0^+\lkert{\thp}{t}{y_{1:t}}-\mu^-\1{\set{X}}\mu_0^-\lkert{\thp}{t}{y_{1:t}}-\mu^+\1{\set{X}}\mu_0^+\lkert{\thp}{t}{y_{1:t}}\1{\set{X}}\mu_{t}+\mu^-\1{\set{X}}\mu_0^-\lkert{\thp}{t}{y_{1:t}}\1{\set{X}}\mu_{t}}{\mu \lkert{\thp}{t}{y_{1:t}}\1{\set{X}}}
		\\&=\frac{\mu^+\1{\set{X}}\mu_0^+\lkert{\thp}{t}{y_{1:t}}\1{\set{X}}\mu_{t}^+-\mu^-\1{\set{X}}\mu_0^-\lkert{\thp}{t}{y_{1:t}}\1{\set{X}}\mu_{t}^-}{\mu \lkert{\thp}{t}{y_{1:t}}\1{\set{X}}}
		-\frac{\mu^+\1{\set{X}}\mu_0^+\lkert{\thp}{t}{y_{1:t}}\1{\set{X}}\mu_{t}-\mu^-\1{\set{X}}\mu_0^-\lkert{\thp}{t}{y_{1:t}}\1{\set{X}}\mu_{t}}{\mu \lkert{\thp}{t}{y_{1:t}}\1{\set{X}}}
		\\&=(\mu \lkert{\thp}{t}{y_{1:t}}\1{\set{X}})^{-1}\left(\mu^+\1{\set{X}}\mu_0^+\lkert{\thp}{t}{y_{1:t}}\1{\set{X}}(\mu_{t}^+-\mu_{t})+\mu^-\1{\set{X}}\mu_0^-\lkert{\thp}{t}{y_{1:t}}\1{\set{X}}(\mu_{t}-\mu_{t}^-)\right).
	\end{align}
	Now, using Assumption~\ref{assum:eq} we obtain that
	\begin{equation}
		\frac{\mu_0^\pm\lkert{\thp}{t}{y_{1:t}}\1{\set{X}}}{\mu \lkert{\thp}{t}{y_{1:t}}\1{\set{X}}}=\frac{\mu_0^\pm\lker{\thp}{y_1}\lkert{\thp}{t-1}{y_{2:t}}\1{\set{X}}}{\mu \lker{\thp}{y_1}\lkert{\thp}{t-1}{y_{2:t}}\1{\set{X}}}\le \frac{\eq^{-2}\refm\lkert{\thp}{t-1}{y_{2:t}}\1{\set{X}}}{\eq^{2}\refm\lkert{\thp}{t-1}{y_{2:t}}\1{\set{X}}}=\eq^{-4};
	\end{equation}
	thus
	\begin{multline}
		\tvnorm{\gtilde{\thp}{t}(\mu,\tilde{\mu}, y_{1:t})}
		=\tvnorm{\frac{\mu_0^+\lkert{\thp}{t}{y_{1:t}}\1{\set{X}}}{\mu \lkert{\thp}{t}{y_{1:t}}\1{\set{X}}}\mu^+\1{\set{X}}(\mu_{t}^+-\mu_{t})+\frac{\mu_0^-\lkert{\thp}{t}{y_{1:t}}\1{\set{X}}}{\mu \lkert{\thp}{t}{y_{1:t}}\1{\set{X}}}\mu^-\1{\set{X}}(\mu_{t}-\mu_{t}^-)}
		\\\le \eq^{-4}\left(\mu^+\1{\set{X}}\tvnorm{\mu_{t}^+-\mu_{t}}+\mu^-\1{\set{X}}\tvnorm{\mu_{t}-\mu_{t}^-}\right).
	\end{multline}
	Using Proposition~\ref{prop:forg_filt}, we obtain
	\begin{multline}
		\tvnorm{\gtilde{\thp}{t}(\mu,\tilde{\mu}, y_{1:t})}\le \eq^{-4}\kq\rhoq^t\left(\mu^+\1{\set{X}}\tvnorm{\mu_{0}^+-\mu_{0}}+\mu^-\1{\set{X}}\tvnorm{\mu_{0}-\mu_{0}^-}\right)
		\\\le 2\eq^{-4}\kq\rhoq^t\left(\mu^+\1{\set{X}}+\mu^-\1{\set{X}}\right)=2\eq^{-4}\kq\rhoq^t\tvnorm{\tilde{\mu}},
	\end{multline}
	where we used the fact that the total variation norm of the difference of two probability measures is at most two. In order to generalise to the case $\mutilde=(\tilde{\mu}_1,\dots,\tilde{\mu}_p)\in\signmeasp{\alg{X}}$, for $p>1$, we simply notice that
	\begin{equation}
		\tvnorm{\gtilde{\thp}{t}(\mu,\mutilde,y_{1:t})}=\max_{i \in \{1,\dots,p\}}\tvnorm{\gtilde{\thp}{t}(\mu,\tilde{\mu}_i,y_{1:t})}\le2\eq^{-4}\kq\rhoq^t\max_{i \in \{1,\dots,p\}}\tvnorm{\tilde{\mu}_i}=2\eq^{-4}\kq\rhoq^t\tvnorm{\mutilde}.
	\end{equation}
	Having proved (i), (ii) follows immediately since $\gtilde{\thp}{t}(\mu,\mutilde,y_{1:t})-\gtilde{\thp}{t}(\mu,\mutilde',y_{1:t})=\gtilde{\thp}{t}(\mu,\mutilde-\mutilde',y_{1:t})$. To prove (iii), let $\mu_t=\recf{\thp}^t(\mu,y_{1:t})$ and $\mu_t'=\recf{\thp}^t(\mu',y_{1:t})$ and write
	\begin{align}
		\lefteqn{\gtilde{\thp}{t}(\mu,\tilde{\mu}, y_{1:t})-\gtilde{\thp}{t}(\mu',\tilde{\mu}, y_{1:t})} \hspace{10mm} \\
  &= \frac{\tilde{\mu}\lkert{\thp}{t}{y_{1:t}}-\tilde{\mu}\lkert{\thp}{t}{y_{1:t}}\1{\set{X}}\mu_{t}}{\mu \lkert{\thp}{t}{y_{1:t}}\1{\set{X}}}-\frac{\tilde{\mu}\lkert{\thp}{t}{y_{1:t}}-\tilde{\mu}\lkert{\thp}{t}{y_{1:t}}\1{\set{X}}\mu_{t}'}{\mu' \lkert{\thp}{t}{y_{1:t}}\1{\set{X}}}
		\\&=-\frac{(\mu-\mu')\lkert{\thp}{t}{y_{1:t}}\1{\set{X}}\tilde{\mu}\lkert{\thp}{t}{y_{1:t}}}{\mu\lkert{\thp}{t}{y_{1:t}}\1{\set{X}}\mu' \lkert{\thp}{t}{y_{1:t}}\1{\set{X}}}-\tilde{\mu}\lkert{\thp}{t}{y_{1:t}}\1{\set{X}}\left(\frac{\mu_t}{\mu\lkert{\thp}{t}{y_{1:t}}\1{\set{X}}}-\frac{\mu_t'}{\mu' \lkert{\thp}{t}{y_{1:t}}\1{\set{X}}}\right)
		\\&=-\frac{(\mu-\mu')\lkert{\thp}{t}{y_{1:t}}\1{\set{X}}}{\mu\lkert{\thp}{t}{y_{1:t}}\1{\set{X}}}\frac{\tilde{\mu}\lkert{\thp}{t}{y_{1:t}}}{\mu' \lkert{\thp}{t}{y_{1:t}}\1{\set{X}}}
		-\tilde{\mu}\lkert{\thp}{t}{y_{1:t}}\1{\set{X}}\frac{\mu' \lkert{\thp}{t}{y_{1:t}}\1{\set{X}}(\mu_t-\mu_t')-(\mu-\mu') \lkert{\thp}{t}{y_{1:t}}\1{\set{X}}\mu_t'}{\mu\lkert{\thp}{t}{y_{1:t}}\1{\set{X}}\mu' \lkert{\thp}{t}{y_{1:t}}\1{\set{X}}}
		\\&= -\frac{(\mu-\mu')\lkert{\thp}{t}{y_{1:t}}\1{\set{X}}}{\mu\lkert{\thp}{t}{y_{1:t}}\1{\set{X}}}\left(\frac{\tilde{\mu}\lkert{\thp}{t}{y_{1:t}}}{\mu' \lkert{\thp}{t}{y_{1:t}}\1{\set{X}}}-\frac{\tilde{\mu}\lkert{\thp}{t}{y_{1:t}}\1{\set{X}}\mu_t'}{\mu' \lkert{\thp}{t}{y_{1:t}}\1{\set{X}}}\right)
		-\tilde{\mu}\lkert{\thp}{t}{y_{1:t}}\1{\set{X}}\frac{\mu_t-\mu_t'}{\mu\lkert{\thp}{t}{y_{1:t}}\1{\set{X}}}
		\\&= -\frac{(\mu-\mu')\lkert{\thp}{t}{y_{1:t}}\1{\set{X}}}{\mu\lkert{\thp}{t}{y_{1:t}}\1{\set{X}}}\gtilde{\thp}{t}(\mu',\tilde{\mu}, y_{1:t})
		-\frac{\tilde{\mu}\lkert{\thp}{t}{y_{1:t}}\1{\set{X}}}{\mu\lkert{\thp}{t}{y_{1:t}}\1{\set{X}}}(\mu_t-\mu_t').
	\end{align}
	Now, using Assumption~\ref{assum:eq} we obtain that
	\begin{equation}
		\frac{\abs{\tilde{\mu}\lkert{\thp}{t}{y_{1:t}}\1{\set{X}}}}{\mu\lkert{\thp}{t}{y_{1:t}}\1{\set{X}}}=\frac{\abs{\tilde{\mu}\lker{\thp}{y_1}\lkert{\thp}{t-1}{y_{1:t}}\1{\set{X}}}}{\mu \lker{\thp}{y_1}\lkert{\thp}{t-1}{y_{1:t}}\1{\set{X}}}\le \frac{\eq^{-2}\abs{\tilde{\mu}\1{\set{X}}}\refm\lkert{\thp}{t-1}{y_{1:t}}\1{\set{X}}}{\eq^{2}\refm\lkert{\thp}{t-1}{y_{1:t}}\1{\set{X}}}\le\eq^{-4}\tvnorm{\tilde{\mu}}
	\end{equation}
	and, similarly, $(\mu\lkert{\thp}{t}{y_{1:t}}\1{\set{X}})^{-1}\abs{(\mu-\mu')\lkert{\thp}{t}{y_{1:t}}\1{\set{X}}}\le \eq^{-4}\tvnorm{\mu-\mu'}$. Therefore, 
	\begin{align}
  \lefteqn{\tvnorm{\gtilde{\thp}{t}(\mu,\tilde{\mu}, y_{1:t})-\gtilde{\thp}{t}(\mu',\tilde{\mu}, y_{1:t})}} \hspace{10mm} \\
		&\le \frac{\abs{(\mu-\mu')\lkert{\thp}{t}{y_{1:t}}\1{\set{X}}}}{\mu\lkert{\thp}{t}{y_{1:t}}\1{\set{X}}}\tvnorm{\gtilde{\thp}{t}(\mu',\tilde{\mu}, y_{1:t})}+\frac{\abs{\tilde{\mu}\lkert{\thp}{t}{y_{1:t}}\1{\set{X}}}}{\mu\lkert{\thp}{t}{y_{1:t}}\1{\set{X}}}\tvnorm{\mu_t-\mu_t'}
		\\&\le \eq^{-4}\left(\tvnorm{\mu-\mu'}\tvnorm{\gtilde{\thp}{t}(\mu',\tilde{\mu}, y_{1:t})}+\tvnorm{\tilde{\mu}}\tvnorm{\mu_t-\mu_t'}\right)
		\\&\le  \eq^{-4}\left(\eq^{-4}\kq\rhoq^t\tvnorm{\tilde{\mu}}\tvnorm{\mu-\mu'}+ \kq\rhoq^t\tvnorm{\mu-\mu'}\tvnorm{\tilde{\mu}}\right) \\
		&\le 2\eq^{-8}\kq\rhoq^t\tvnorm{\tilde{\mu}}\tvnorm{\mu-\mu'},
	\end{align}
	where we in the penultimate inequality used (i) and Proposition~\ref{prop:forg_filt}. Finally, (iii) is proven letting again $\mutilde=(\tilde{\mu}_1,\dots,\tilde{\mu}_p)\in\signmeasp{\alg{X}}$, for $p>1$, and writing 
	\begin{align}
		\tvnorm{\gtilde{\thp}{t}(\mu,\mutilde,y_{1:t})-\gtilde{\thp}{t}(\mu',\mutilde,y_{1:t})}&=\max_{i \in \{1,\dots,p\}}\tvnorm{\gtilde{\thp}{t}(\mu,\tilde{\mu}_i,y_{1:t})-\gtilde{\thp}{t}(\mu',\tilde{\mu}_i,y_{1:t})}
		\\
    &\le2\eq^{-8}\kq\rhoq^t \tvnorm{\mu-\mu'}\max_{i \in \{1,\dots,p\}}\tvnorm{\tilde{\mu}_i} \\
    &=2 \eq^{-8}\kq\rhoq^t \tvnorm{\mu-\mu'}\tvnorm{\mutilde}.
	\end{align}
\end{proof}
\begin{lemma}\label{lemma:htil_forg}
	Let Assumptions~\ref{assum:eq}--\ref{assum:bg} hold. Then for every $t\in\nset$, $y_{1:t} \in \set{Y}^t$,  
 $\thp\in\parspace$, and $(\mu, \mu')\in\probmeas{\alg{X}}^2$.
	\begin{align}
		\text{(i) }&\tvnorm{\htilde{\thp}{t}(\mu,y_{1:t})}\le4\bg \eq^{-3},
		\\\text{(ii) }&\tvnorm{\htilde{\thp}{t}(\mu,y_{1:t})-\htilde{\thp}{t}(\mu',y_{1:t})}\le 10\bg\eq^{-7}\kq\rhoq^{t-1}\tvnorm{\mu-\mu'}. 
	\end{align}
\end{lemma}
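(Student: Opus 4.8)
The plan is to estimate both parts directly on the explicit quotient defining $\htilde{\thp}{t}$, exploiting that its numerator is a signed measure integrating to zero. Throughout I abbreviate $\mu_{t-1} = \recf{\thp}^{t-1}(\mu, y_{1:t-1})$ and $\mu_t = \recf{\thp}^{t}(\mu, y_{1:t})$, together with their primed counterparts built from $\mu'$ (here $t\ge 1$, with $\recf{\thp}^0(\mu)=\mu$ when $t=1$). With this notation, Definition~\ref{def:lkert_gtile_htilde} reads
\[ \htilde{\thp}{t}(\mu, y_{1:t}) = \frac{\mu_{t-1}\lkertang{\thp}{y_t} - (\mu_{t-1}\lkertang{\thp}{y_t}\1{\set{X}})\,\mu_t}{\mu_{t-1}\lker{\thp}{y_t}\1{\set{X}}}. \]

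For part (i) I would apply the triangle inequality to the numerator. The term $\tvnormsmall{\mu_{t-1}\lkertang{\thp}{y_t}}$ is bounded by $2\bg\eq^{-1}$ by Lemma~\ref{lemma:bound_lkertang} (since $\tvnormsmall{\mu_{t-1}}=1$), and the same bound controls the scalar $\abs{\mu_{t-1}\lkertang{\thp}{y_t}\1{\set{X}}}$, while $\tvnormsmall{\mu_t}=1$. For the denominator, Assumption~\ref{assum:eq} with Remark~\ref{rem:mu:probability} (so that $\refm$ is a probability measure) gives $\lker{\thp}{y_t}\1{\set{X}}(x)\ge\eq^2$ pointwise, hence $\mu_{t-1}\lker{\thp}{y_t}\1{\set{X}}\ge\eq^2$. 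Collecting these yields the bound $4\bg\eq^{-3}$.

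For part (ii), write $\htilde{\thp}{t}(\mu,y_{1:t})=N(\mu)/D(\mu)$ with $N$ the numerator and $D$ the scalar denominator above, and use the quotient identity
\[ \frac{N(\mu)}{D(\mu)} - \frac{N(\mu')}{D(\mu')} = \frac{N(\mu)-N(\mu')}{D(\mu)} + \htilde{\thp}{t}(\mu',y_{1:t})\,\frac{D(\mu')-D(\mu)}{D(\mu)}. \]
The denominator difference is $D(\mu)-D(\mu') = (\mu_{t-1}-\mu_{t-1}')\lker{\thp}{y_t}\1{\set{X}}$, bounded by $\eq^{-2}\tvnormsmall{\mu_{t-1}-\mu_{t-1}'}$ since $\lker{\thp}{y_t}\1{\set{X}}\le\eq^{-2}$, and exponential forgetting (Proposition~\ref{prop:forg_filt}) gives $\eq^{-2}\kq\rhoq^{t-1}\tvnormsmall{\mu-\mu'}$. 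For the numerator difference I would insert and subtract cross terms to obtain
\[ N(\mu)-N(\mu') = (\mu_{t-1}-\mu_{t-1}')\lkertang{\thp}{y_t} - (\mu_{t-1}\lkertang{\thp}{y_t}\1{\set{X}})(\mu_t-\mu_t') - \big((\mu_{t-1}-\mu_{t-1}')\lkertang{\thp}{y_t}\1{\set{X}}\big)\mu_t', \]
and bound each of the three pieces via Lemma~\ref{lemma:bound_lkertang}, the part-(i) estimate $\abs{\mu_{t-1}\lkertang{\thp}{y_t}\1{\set{X}}}\le 2\bg\eq^{-1}$, and Proposition~\ref{prop:forg_filt} applied to both $\tvnormsmall{\mu_{t-1}-\mu_{t-1}'}$ and $\tvnormsmall{\mu_t-\mu_t'}$, giving $6\bg\eq^{-1}\kq\rhoq^{t-1}\tvnormsmall{\mu-\mu'}$ after bounding $\rhoq^t\le\rhoq^{t-1}$. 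Substituting these together with $D(\mu)\ge\eq^2$ and $\tvnormsmall{\htilde{\thp}{t}(\mu',y_{1:t})}\le 4\bg\eq^{-3}$ produces $(6\eq^{-3}+4\eq^{-7})\bg\kq\rhoq^{t-1}\tvnormsmall{\mu-\mu'}$, and the stated constant follows from $\eq^{-3}\le\eq^{-7}$.

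The routine but error-prone step---and hence the main obstacle---is the bookkeeping of the numerator difference: one must insert the correct cross terms so that every resulting piece carries either the factor $\tvnormsmall{\mu_{t-1}-\mu_{t-1}'}$ or $\tvnormsmall{\mu_t-\mu_t'}$ (both of which decay like $\rhoq^{t-1}$ by filter stability) rather than an unforgotten difference, and then to track the powers of $\eq$ so as to land exactly on the constant $10\bg\eq^{-7}\kq$.
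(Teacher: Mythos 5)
Your proof is correct and follows essentially the same route as the paper's: part (i) via the triangle inequality on the numerator, Lemma~\ref{lemma:bound_lkertang}, and the denominator lower bound $\eq^2$; part (ii) via a quotient decomposition whose pieces are controlled by Lemma~\ref{lemma:bound_lkertang}, the part-(i) bound, and the filter forgetting of Proposition~\ref{prop:forg_filt}, with $\rhoq^t\le\rhoq^{t-1}$ and $\eq^{-3}\le\eq^{-7}$ absorbing everything into the constant $10\bg\eq^{-7}\kq$. The only (immaterial) difference is the anchoring of the cross terms---you split relative to $D(\mu)$ and $\mu_t'$ where the paper splits relative to $D(\mu')$ and $\mu_t$---and both yield the stated constant.
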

\begin{proof}
	Let $\mu_{t}=\recf{\thp}^t(\mu,y_{1:t})$ and $\mu_{t}'=\recf{\thp}^t(\mu',y_{1:t})$. Then, using Definition~\ref{def:lkert_gtile_htilde}, 
	\begin{equation}
		\htilde{\thp}{t}(\mu, y_{1:t})=\frac{\mu_{t-1}\lkertang{\thp}{y_t}-
			\mu_{t-1}\lkertang{\thp}{y_t}\1{\set{X}}\mu_{t}}{\mu_{t-1}\lker{\thp}{y_t}\1{\set{X}} },
	\end{equation}
	so we may write 
	\begin{equation}
		\tvnorm{\htilde{\thp}{t}(\mu, y_{1:t})}\le\frac{\tvnormsmall{\mu_{t-1}\lkertang{\thp}{y_t}}+
			\tvnormsmall{\mu_{t-1}\lkertang{\thp}{y_t}}\tvnorm{\mu_{t}}}{\mu_{t-1}\lker{\thp}{y_t}\1{\set{X}} }
		\le2\eq^{-2}\tvnorm{\mu_{t-1}\lkertang{\thp}{y_t}}\le 4\bg \eq^{-3},
	\end{equation}
	where we have used Lemma~\ref{lemma:bound_lkertang}. This establishes (i). In order to prove (ii), let $\mu_{t}=\recf{\thp}^t(\mu,y_{1:t})$ and $\mu_{t}'=\recf{\thp}^t(\mu',y_{1:t})$ and write
	\begin{align}
		\htilde{\thp}{t}(\mu, y_{1:t})-\htilde{\thp}{t}(\mu', y_{1:t})
		&=(\mu_{t-1}\lkertang{\thp}{y_t}-
		\mu_{t-1}\lkertang{\thp}{y_t}\1{\set{X}}\mu_{t})\frac{(\mu_{t-1}'-\mu_{t-1})\lker{\thp}{y_t}\1{\set{X}} }{\mu_{t-1}\lker{\thp}{y_t}\1{\set{X}} \mu_{t-1}'\lker{\thp}{y_t}\1{\set{X}} }
		\\&\hspace{1cm}+\frac{\mu_{t-1}\lkertang{\thp}{y_t}-
			\mu_{t-1}\lkertang{\thp}{y_t}\1{\set{X}}\mu_{t}}{\mu_{t-1}'\lker{\thp}{y_t}\1{\set{X}} }-\frac{\mu_{t-1}'\lkertang{\thp}{y_t}-
			\mu_{t-1}'\lkertang{\thp}{y_t}\1{\set{X}}\mu_{t}'}{\mu_{t-1}'\lker{\thp}{y_t}\1{\set{X}} }
		\\&=(\mu_{t-1}\lkertang{\thp}{y_t}-
		\mu_{t-1}\lkertang{\thp}{y_t}\1{\set{X}}\mu_{t})\frac{(\mu_{t-1}'-\mu_{t-1})\lker{\thp}{y_t}\1{\set{X}} }{\mu_{t-1}\lker{\thp}{y_t}\1{\set{X}} \mu_{t-1}'\lker{\thp}{y_t}\1{\set{X}} }
		\\&\hspace{1cm}+
		\frac{(\mu_{t-1}-\mu_{t-1}')\lkertang{\thp}{y_t}+(\mu_{t-1}'-	\mu_{t-1})\lkertang{\thp}{y_t}\1{\set{X}}\mu_{t}+
		\mu_{t-1}'\lkertang{\thp}{y_t}\1{\set{X}}(\mu_{t}'-\mu_{t})}{\mu_{t-1}'\lker{\thp}{y_t}\1{\set{X}}}.
	\end{align}
	This yields the bound 
	\begin{align}
		\lefteqn{\tvnorm{\htilde{\thp}{t}(\mu,y_{1:t})-\htilde{\thp}{t}(\mu',,y_{1:t})}}\hspace{10mm} \\
		&\le \left(\tvnorm{\mu_{t-1}\lkertang{\thp}{y_t}}+\tvnorm{\mu_{t-1}\lkertang{\thp}{y_t}}\tvnorm{\mu_{t}}\right)\frac{\abs{(\mu_{t-1}'-\mu_{t-1})\lker{\thp}{y_t}\1{\set{X}}} }{\mu_{t-1}\lker{\thp}{y_t}\1{\set{X}} \mu_{t-1}'\lker{\thp}{y_t}\1{\set{X}} }
		\\&\hspace{2mm}+\frac{\tvnormsmall{(\mu_{t-1}-\mu_{t-1}')\lkertang{\thp}{y_t}}+\tvnormsmall{(\mu_{t-1}'-	\mu_{t-1})\lkertang{\thp}{y_t}}\tvnorm{\mu_{t}}+
		\tvnormsmall{\mu_{t-1}'\lkertang{\thp}{y_t}}\tvnorm{\mu_{t}'-\mu_{t}}}{\mu_{t-1}'\lker{\thp}{y_t}\1{\set{X}}}
		\\&\le 4\bg\eq^{-1}\frac{\eq^{-2}\tvnormsmall{\mu_{t-1}'-\mu_{t-1}}}{\eq^4}+\eq^{-2}\left(4\bg\eq^{-1}\tvnorm{\mu_{t-1}-\mu_{t-1}'}+2\bg\eq^{-1}\tvnorm{\mu_{t}-\mu_{t}'}\right)
		\\&\le 8\bg\eq^{-7}\tvnorm{\mu_{t-1}-\mu_{t-1}'}+2\bg\eq^{-3}\tvnorm{\mu_{t}-\mu_{t}'}
		\\&\le 8\bg\eq^{-7}\kq\rhoq^{t-1}\tvnorm{\mu-\mu'}+ 2\bg\eq^{-3}\kq\rhoq^t\tvnorm{\mu-\mu'} \\
		&\le 10\bg\eq^{-7}\kq\rhoq^{t-1}\tvnorm{\mu-\mu'},
	\end{align}
	which finally proves (ii).
\end{proof}

The following lemma, which is a direct consequence of the previous results, will be useful later.
\begin{lemma}\label{lemma:bound_tang}
	Let Assumptions~\ref{assum:eq}--\ref{assum:bg} hold. Then for every $\thp\in\parspace$ and $y_0\in\set{Y}$, 
	\begin{equation}
		\tvnorm{\tang{\thp}{0}}\le2\eq^{-1}\bg.
	\end{equation}
		Moreover, there exist constants $\crect > 0$ and $\ctang>0$ depending only on $\eq$ and $\bg$, such that for every $t\in\nset$, $y_{1:t} \in \set{Y}^t$, $\thp\in\parspace$, $\mu\in\probmeas{\alg{X}}$, and $\mutilde\in\signmeasp{\alg{X}}$, 
	\begin{equation}
		\tvnorm{\rect{\thp}^t(\mu, \mutilde,y_{1:t})}\le \crect(\rhoq^t\tvnorm{\mutilde}+1). 
	\end{equation}
 	In particular, letting $\tang{\thp}{t}\eqdef\rect{\thp}(\filt{\thp}{0}, \tang{\thp}{0}, y_{1:t})$,
	\begin{equation}
		\tvnorm{\tang{\thp}{t}}\le\crect(\rhoq^t\tvnorm{\tang{\thp}{0}}+1)\le\ctang. 
	\end{equation}
\end{lemma}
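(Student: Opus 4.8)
The plan is to handle the three claims in turn: the first by a direct computation, the second via the decomposition in Lemma~\ref{lemma:gtil_htil}, and the third by combining the two. I begin by bounding $\tvnorm{\tang{\thp}{0}}$. Since the initial density $\hiddens{0}$ does not depend on $\thp$, the time-zero filter is $\filt{\thp}{0}f\propto\int f(x_0)\hiddens{0}(x_0)\emdens{\thp}(x_0,y_0)\,\refm(dx_0)$ (suitably normalised), so differentiating in $\thp$ by the quotient rule gives, for each coordinate of $\thp$ and each test function $f$ with $\norm{f}\le 1$, a sum of two terms, each of the form (an integral of $\nabla_\thp\emdens{\thp}$ against a probability density) divided by (the integral of $\emdens{\thp}$ against the same density). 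Bounding the denominator from below by $\eq$ via Assumption~\ref{assum:eq} (recall $\refm$ is a probability measure and $\hiddens{0}$ a probability density) and the numerator from above by $\bg$ via Assumption~\ref{assum:bg}, each term is at most $\bg\eq^{-1}$, whence $\tvnorm{\tang{\thp}{0}}\le 2\bg\eq^{-1}$.

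For the second claim I would invoke Lemma~\ref{lemma:gtil_htil} to write $\rect{\thp}^t(\mu,\mutilde,y_{1:t})=\gtilde{\thp}{t}(\mu,\mutilde,y_{1:t})+\sum_{s=1}^{t}\gtilde{\thp}{t-s}(\mu_s,\htilde{\thp}{s}(\mu,y_{1:s}),y_{s+1:t})$ and apply the triangle inequality for $\tvnorm{\cdot}$. The leading term is controlled by Lemma~\ref{lemma:gtil_forg}(i), giving $2\eq^{-4}\kq\rhoq^t\tvnorm{\mutilde}$. For the $s$-th summand, Lemma~\ref{lemma:gtil_forg}(i) yields the factor $2\eq^{-4}\kq\rhoq^{t-s}\tvnorm{\htilde{\thp}{s}(\mu,y_{1:s})}$, and Lemma~\ref{lemma:htil_forg}(i) bounds $\tvnorm{\htilde{\thp}{s}(\mu,y_{1:s})}$ by $4\bg\eq^{-3}$ uniformly in $s$; summing the geometric series $\sum_{s=1}^{t}\rhoq^{t-s}\le(1-\rhoq)^{-1}$ then bounds the entire sum by the constant $8\bg\eq^{-7}\kq(1-\rhoq)^{-1}$. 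Taking $\crect$ to be the larger of $2\eq^{-4}\kq$ and this constant yields $\tvnorm{\rect{\thp}^t(\mu,\mutilde,y_{1:t})}\le\crect(\rhoq^t\tvnorm{\mutilde}+1)$, with $\crect$ depending only on $\eq$ and $\bg$ since $\kq$ and $\rhoq$ do.

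Finally, the third claim follows by specialising the second to $\mu=\filt{\thp}{0}\in\probmeas{\alg{X}}$ and $\mutilde=\tang{\thp}{0}\in\signmeasp{\alg{X}}$---legitimate because, by the Remark following Lemma~\ref{lemma:filt_tang}, $\tang{\thp}{t}=\rect{\thp}^t(\filt{\thp}{0},\tang{\thp}{0},y_{1:t})$---and then using $\rhoq^t\le 1$ together with the first claim to absorb the bound into $\ctang\eqdef\crect(2\bg\eq^{-1}+1)$. I do not anticipate a serious obstacle: the only real work is bookkeeping the constants and observing that the forgetting factor $\rhoq$ makes the geometric sum in the second step converge uniformly in $t$, which is exactly what converts the per-step perturbations $\htilde{\thp}{s}$ into a time-uniform bound.
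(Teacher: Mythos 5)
Your proposal is correct and follows essentially the same route as the paper's proof: the quotient-rule computation with Assumptions~\ref{assum:eq}--\ref{assum:bg} for $\tvnorm{\tang{\thp}{0}}\le 2\eq^{-1}\bg$, the decomposition of Lemma~\ref{lemma:gtil_htil} combined with Lemma~\ref{lemma:gtil_forg}(i), Lemma~\ref{lemma:htil_forg}(i), and the geometric series to get $\crect$, and the specialisation $\mu=\filt{\thp}{0}$, $\mutilde=\tang{\thp}{0}$ with $\rhoq^t\le 1$ to obtain $\ctang=\crect(2\eq^{-1}\bg+1)$. The only cosmetic difference is that the paper uses the single dominating constant $\crect=8\eq^{-7}\bg\kq(1-\rhoq)^{-1}$ for both terms rather than a maximum, which changes nothing.
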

\begin{proof}
	Using Lemmas~\ref{lemma:gtil_htil}--\ref{lemma:htil_forg}, we may obtain the bound 
	\begin{align}
		\tvnorm{\rect{\thp}^t(\mu, \mutilde,y_{1:t})}
		&\le \tvnorm{\gtilde{\thp}{t}(\mu, \mutilde,y_{1:t})}+\sum_{s=1}^{t}\tvnorm{\gtilde{\thp}{t-s}(\recf{\thp}(\mu,y_{1:s}),\htilde{\thp}{s}(\mu, y_{1:s}), y_{s+1:t})}
		\\&\le 2\eq^{-4}\kq\rhoq^t\tvnorm{\mutilde}+ \sum_{s=1}^{t}2\eq^{-4}\kq\rhoq^{t-s}\tvnorm{\htilde{\thp}{s}(\mu, y_{1:s})} \\
		&\le 2\eq^{-4}\kq\rhoq^t\tvnorm{\mutilde}+ 8\eq^{-7}\bg\kq\sum_{s=0}^{t-1}\rhoq^{s}
		\\
		&\le 8\eq^{-7}\bg\kq(1-\rhoq)^{-1}\left(\rhoq^t\tvnorm{\mutilde}+1\right).
	\end{align} 
	Now, for all $\thp\in\parspace$, 
	\begin{multline}
		\tvnorm{\tang{\thp}{0}}= \int\norm{\nabla_{\thp}\left(\frac{ \hiddens{0}(x)\emdens{\thp}(x,y)}{\int\hiddens{0}(x')\emdens{\thp}(x',y)\,\refm(dx') }\right)}\,\refm(dx)
		\\\le \frac{\int \hiddens{0}(x)\norm{\nabla_\thp\emdens{\thp}(x,y)}\,\refm(dx)}{\int\hiddens{0}(x')\emdens{\thp}(x',y)\,\refm(dx') }+\frac{\int \hiddens{0}(x)\emdens{\thp}(x,y)\,\refm(dx)}{\left(\int\hiddens{0}(x')\emdens{\thp}(x',y)\,\refm(dx')\right)^2 }\int\hiddens{0}(x')\norm{\nabla_\thp\emdens{\thp}(x',y)}\,\refm(dx') 
		\\\le 2\eq^{-1}\int \hiddens{0}(x)\norm{\nabla_{\thp}\emdens{\thp}(x,y)}\,\refm(dx)\le 2\eq^{-1}\bg.
	\end{multline}
	Hence,
	\begin{equation}
	\tvnorm{\tang{\thp}{t}}=\tvnorm{\rect{\thp}^t(\filt{\thp}{0}, \tang{\thp}{0},y_{1:t})}\le8\eq^{-7}\bg\kq(1-\rhoq)^{-1}(2\eq^{-1}\bg+1).
	\end{equation}
    The proof is concluded by letting $\crect\eqdef8\eq^{-7}\bg\kq(1-\rhoq)^{-1}$ and $\ctang\eqdef\crect (2\eq^{-1}\bg+1)$.
\end{proof}

\begin{proposition}[exponential forgetting of the tangent filter]\label{prop:forg_tang}
	Let Assumptions~\ref{assum:eq}--\ref{assum:bg} hold.  Then there exists $\kqt>1$, depending only on $\eq$ and $\bg$, such that for every $t\in\nset$, $y_{1:t} \in \set{Y}^t$, $\thp\in\parspace$, $(\mu,\mu')\in \probmeas{\alg{X}}^2$, and $(\mutilde,\mutilde')\in \signmeasp{\alg{X}}^2$, 
	\begin{equation}
		\tvnorm{\rect{\thp}^{t}(\mu,\mutilde, y_{1:t})-\rect{\thp}^{t}(\mu',\mutilde', y_{1:t})}
		\le\eq^{-4}\kq\rhoq^t\tvnorm{\mutilde-\mutilde'}+\kqt \rhoq^{t} \tvnorm{\mu-\mu'}(\tvnorm{\mutilde'}+1)t. 
	\end{equation}
\end{proposition}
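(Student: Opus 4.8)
The plan is to reduce the claim to the per-step contraction estimates already collected in Lemmas~\ref{lemma:gtil_htil}--\ref{lemma:htil_forg}, exploiting the additive decomposition of the tangent-filter recursion. Writing $\mu_s \eqdef \recf{\thp}^s(\mu, y_{1:s})$ and $\mu_s' \eqdef \recf{\thp}^s(\mu', y_{1:s})$, I would apply Lemma~\ref{lemma:gtil_htil} to both $(\mu,\mutilde)$ and $(\mu',\mutilde')$ and subtract, so that
\[
\rect{\thp}^{t}(\mu,\mutilde, y_{1:t})-\rect{\thp}^{t}(\mu',\mutilde', y_{1:t}) = \Delta_0 + \sum_{s=1}^{t}\Delta_s,
\]
where $\Delta_0 \eqdef \gtilde{\thp}{t}(\mu,\mutilde,y_{1:t})-\gtilde{\thp}{t}(\mu',\mutilde',y_{1:t})$ is the \emph{head} contribution and $\Delta_s \eqdef \gtilde{\thp}{t-s}(\mu_s,\htilde{\thp}{s}(\mu, y_{1:s}), y_{s+1:t})-\gtilde{\thp}{t-s}(\mu_s',\htilde{\thp}{s}(\mu', y_{1:s}), y_{s+1:t})$. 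The proof then consists in bounding $\Delta_0$ and each $\Delta_s$ separately and observing that the summand bounds are uniform in $s$.

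For the head term I would use that $\gtilde{\thp}{t}$ is linear in its signed-measure argument to split $\Delta_0 = \gtilde{\thp}{t}(\mu,\mutilde-\mutilde',y_{1:t}) + [\gtilde{\thp}{t}(\mu,\mutilde',y_{1:t})-\gtilde{\thp}{t}(\mu',\mutilde',y_{1:t})]$. The first piece is controlled by Lemma~\ref{lemma:gtil_forg}(i), producing the leading term $\eq^{-4}\kq\rhoq^t\tvnorm{\mutilde-\mutilde'}$ (remaining multiplicative constants being absorbed into $\kqt$), while the second piece is controlled by Lemma~\ref{lemma:gtil_forg}(iii), yielding a contribution of order $\rhoq^t\tvnorm{\mu-\mu'}\tvnorm{\mutilde'}$. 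This is where the factor $\tvnorm{\mutilde'}$ in the statement originates.

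For the summands I would again split by a single triangle inequality, varying one argument of $\gtilde{\thp}{t-s}$ at a time. Varying the $\htilde$-argument and invoking Lemma~\ref{lemma:gtil_forg}(ii) followed by the Lipschitz bound Lemma~\ref{lemma:htil_forg}(ii) gives a factor $\rhoq^{t-s}\cdot\rhoq^{s-1}=\rhoq^{t-1}$; varying the filter-argument and invoking Lemma~\ref{lemma:gtil_forg}(iii) together with the uniform bound Lemma~\ref{lemma:htil_forg}(i) on $\tvnorm{\htilde{\thp}{s}(\mu',y_{1:s})}$ and the filter contraction of Proposition~\ref{prop:forg_filt} ($\tvnorm{\mu_s-\mu_s'}\le\kq\rhoq^s\tvnorm{\mu-\mu'}$) gives a factor $\rhoq^{t-s}\cdot\rhoq^{s}=\rhoq^{t}$. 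Crucially, in both cases the powers of $\rhoq$ telescope to an exponent independent of $s$, so each $\tvnorm{\Delta_s}\le C\rhoq^{t-1}\tvnorm{\mu-\mu'}$ for a constant $C$ depending only on $\eq$ and $\bg$; summing over $s\in\{1,\dots,t\}$ therefore contributes the linear factor $t$ and, after absorbing $\rhoq^{-1}$ into $\kqt$, the term $\kqt\rhoq^{t}\tvnorm{\mu-\mu'}\, t$. Combining with the head contribution (whose $\tvnorm{\mutilde'}$-part may be majorised by $t$ since $t\ge1$ in the nontrivial cases, the case $t=0$ being immediate from $\eq^{-4}\kq\ge1$) yields the factor $(\tvnorm{\mutilde'}+1)t$ and completes the proof.

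The main obstacle is the bookkeeping in the summation step: one must verify that the two geometric rates --- the tail rate $\rhoq^{t-s}$ coming from the forgetting of $\gtilde{\thp}{t-s}$ over $y_{s+1:t}$ and the head rate ($\rhoq^{s-1}$ or $\rhoq^{s}$) coming from the forgetting of $\htilde$ or of the filter over $y_{1:s}$ --- combine to a single rate that does not depend on $s$, so that the sum over the $t$ terms inflates the bound only by the linear factor $t$ rather than destroying the geometric decay. Keeping careful track of which contributions carry $\tvnorm{\mutilde'}$ (only the head term) and which carry the additive constant (only the summands, via the uniform bound on $\htilde$) is what produces the precise $(\tvnorm{\mutilde'}+1)$ shape of the stated estimate.
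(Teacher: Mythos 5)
Your proof follows essentially the same route as the paper's: the decomposition of Lemma~\ref{lemma:gtil_htil}, the per-step bounds of Lemma~\ref{lemma:gtil_forg}, Lemma~\ref{lemma:htil_forg}, and Proposition~\ref{prop:forg_filt}, the telescoping of the geometric rates to an $s$-independent exponent, and the summation over $s$ producing the linear factor $t$. If anything, your head-term split (via linearity in the signed-measure argument, which places $\tvnorm{\mutilde'}$ rather than $\tvnorm{\mutilde}$ in the cross term) matches the stated bound more directly than the split as written in the paper's own proof.
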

\begin{proof}
	Using Lemma~\ref{lemma:gtil_htil} and Lemma~\ref{lemma:gtil_forg}, we may write 
	\begin{align}
		\lefteqn{\tvnorm{	\rect{\thp}^{t}(\mu,\mutilde, y_{1:t})-\rect{\thp}^{t}(\mu',\mutilde', y_{1:t}) }} \hspace{10mm}
		\\&\le\tvnorm{\gtilde{\thp}{t}(\mu,\mutilde,y_{1:t})-\gtilde{\thp}{t}(\mu',\mutilde,y_{1:t})}+\tvnorm{\gtilde{\thp}{t}(\mu',\mutilde,y_{1:t})-\gtilde{\thp}{t}(\mu',\mutilde',y_{1:t})}
		\\&\hspace{1cm}+\sum_{s=1}^{t}\tvnorm{\gtilde{\thp}{t-s}(\recf{\thp}^s(\mu, y_{1:s}),\htilde{\thp}{s}(\mu, y_{1:s}), y_{s:t})-\gtilde{\thp}{t-s}(\recf{\thp}^s(\mu', y_{1:s}),\htilde{\thp}{s}(\mu, y_{1:s}), y_{s:t})}
		\\&\hspace{1cm}+\sum_{s=1}^{t}\tvnorm{\gtilde{\thp}{t-s}(\recf{\thp}^s(\mu', y_{1:s}),\htilde{\thp}{s}(\mu, y_{1:s}), y_{s:t})-\gtilde{\thp}{t-s}(\recf{\thp}^s(\mu', y_{1:s}),\htilde{\thp}{s}(\mu', y_{1:s}), y_{s:t})}
		\\&\le 2\eq^{-8}\kq\rhoq^t \tvnorm{\mu-\mu'}\tvnorm{\mutilde'}+2\eq^{-4}\kq\rhoq^t\tvnorm{\mutilde-\mutilde'}
		\\&\hspace{1cm}+\sum_{s=1}^{t-1} 2\eq^{-8}\kq\rhoq^{t-s} \tvnorm{\recf{\thp}^s(\mu, y_{1:s})-\recf{\thp}^s(\mu', y_{1:s})}\tvnorm{\htilde{\thp}{s}(\mu, y_{1:s})}
		\\&\hspace{1cm}+\sum_{s=1}^{t-1}2\eq^{-4}\kq\rhoq^{t-s} \tvnorm{\htilde{\thp}{s}(\mu, y_{1:s})-\htilde{\thp}{s}(\mu', y_{1:s})}
		+\tvnorm{\htilde{\thp}{t}(\mu, y_{1:t})-\htilde{\thp}{t}(\mu', y_{1:t})}.
	\end{align}
	Now, applying Proposition~\ref{prop:forg_filt} and Lemma~\ref{lemma:htil_forg} yields 
	\begin{align}
		\lefteqn{\tvnorm{	\rect{\thp}^{t}(\mu,\mutilde, y_{1:t})-\rect{\thp}^{t}(\mu',\mutilde', y_{1:t}) }} \hspace{-10mm} \\
		&\le		2\eq^{-8}\kq\rhoq^t \tvnorm{\mu-\mu'}\tvnorm{\mutilde'}+\eq^{-4}\kq\rhoq^t\tvnorm{\mutilde-\mutilde'}
		\\&\hspace{1cm}+\sum_{s=1}^{t-1} 2\eq^{-8}\kq\rhoq^{t-s} \kq\rhoq^s\tvnorm{\mu-\mu'}4\bg \eq^{-3}
		\\&\hspace{1cm}+\sum_{s=1}^{t-1}\eq^{-4}\kq\rhoq^{t-s} 10\bg\eq^{-7}\kq\rhoq^{s-1}\tvnorm{\mu-\mu'}+10\bg\eq^{-7}\kq\rhoq^{t-1}\tvnorm{\mu-\mu'}
		\\&=\eq^{-4}\kq\rhoq^t\tvnorm{\mutilde-\mutilde'}+\tvnorm{\mu-\mu'}\left(2\eq^{-8}\kq\rhoq^t \tvnorm{\mutilde'}+ (t-1)8\eq^{-11}\bg\kq\rhoq^t\right.
		\\&\hspace{1cm}\left.+(t-1)10\bg\eq^{-11}\kq^2\rhoq^{t-1}+10\bg\eq^{-7}\kq\rhoq^{t-1}\right)
		\\&=\eq^{-4}\kq\rhoq^t\tvnorm{\mutilde-\mutilde'}+2\eq^{-8}\kq\rhoq^t \tvnorm{\mu-\mu'}
		\\&\hspace{1cm}\times\left(\tvnorm{\mutilde'}+ (t-1)4\eq^{-3}\bg+(t-1)5\bg\eq^{-3}\kq\rhoq^{-1}+5\bg\eq\rhoq^{-1}\right)
		\\&\le \eq^{-4}\kq\rhoq^t\tvnorm{\mutilde-\mutilde'}+2\eq^{-8}\kq\rhoq^t \tvnorm{\mu-\mu'}(\tvnorm{\mutilde'}+t9\bg\eq^{-3}\kq\rhoq^{-1})
		\\&\le \eq^{-4}\kq\rhoq^t\tvnorm{\mutilde-\mutilde'}+18\eq^{-11}\kq^2\bg\rhoq^{t-1} \tvnorm{\mu-\mu'}(\tvnorm{\mutilde'}+1)t.
	\end{align}
	Finally, the proof is completed by defining $\kqt\eqdef18\eq^{-11}\kq^2\bg\rhoq^{-1}$.
\end{proof}

\subsection{Construction and ergodicity of the extended chain}\label{subsec:ergodicity}
In this section we construct the extended Markov chain, which includes the data-generating SSM, the filter, and the tangent filter, and thus evolves on the product space 
\begin{equation}
	(\set{Z}, \alg{Z})\eqdef(\set{X}\times \set{Y} \times \probmeas{\alg{X}}\times \signmeaszp{\alg{X}}, \alg{X}\tensprod\alg{Y}\tensprod\probmeasalg{\alg{X}}\tensprod\signmeaszpalg{\alg{X}} ),
\end{equation}
where $(\signmeaszp{\alg{X}}, \signmeaszpalg{\alg{X}})$ is the $p$-fold product of $(\signmeasz{\alg{X}}, \signmeaszalg{\alg{X}})$, the latter being the measurable space of finite signed measures $\tilde{\mu}$ such that $\int\tilde{\mu}(dx)=0$. In fact, 
since we will be dealing exclusively with tangent filter measures, this property is always fulfilled. Note that for all $\mu\in\probmeas{\alg{X}}$, $\mutilde\in\signmeaszp{\alg{X}}$, and $y\in\set{Y}$, it is easily checked that $\rect{\thp}(\mu,\mutilde,y)\1{\set{X}}=0$; thus the tangent filter recursion is zero-mean preserving.

We begin by assuming that the law of the complete data is governed by an unspecified SSM, as stated below.
\begin{assumption}\label{assum:ssm_unspec}
	The observed data stream $(Y_t)_{t \in \nset}$ is the output of an SSM $(X_t, Y_t)_{t \in \nset}$ on $(\set{X} \times \set{Y}, \alg{X} \tensprod \alg{Y})$ with some state and observation transition kernels $\hidkertrue(x,dx')$ and $\emkertrue(x,dy)$, respectively. These kernels have transition densities $\hiddenstrue(x,x')$ and $\emdenstrue(x,y)$ with respect to $\refm$ and $\refg$, respectively. Furthermore, we let $\skertrue:(\set{X}\times\set{Y})\times (\alg{X}\tensprod\alg{Y})\to (0,1)$ be the product kernel $\hidker{}\tensprod\emker{}$, defined in the same way as its parametric counterpart in \eqref{def:sker}. The latent transition density satisfies $\eq\le \hiddenstrue(x,x')\le\eq^{-1}$ for all $(x,x')\in\set{X}^2$, where $\eq$ is the same as in Assumption~\ref{assum:eq}.
\end{assumption}
Now we introduce the Markov kernel of the extended chain, which is, for every $z= (x_1, y_1, \mu,\mutilde)\in\set{Z}$ and $f\in\bmf{\alg{Z}}$, given by
\begin{equation}
	\tz{\thp}f(z) = \int f(z') \,\tz{\thp}(z,dz')= \int f(x_2, y_2, \recf{\thp}(\mu,y_1), \rect{\thp}(\mu,\mutilde,y_1))\,\skertrue((x_1,y_1),(dx_2,dy_2)).
\end{equation}
For $t\in \nsetpos$, we let $\tz{\thp}^t$ be the $t$-skeleton, \ie, $\tz{\thp}^1=\tz{\thp}$ and, recursively, $\tz{\thp}^{t+1}f(z)=\int \tz{\thp}^tf(z') \,\tz{\thp}(z,dz')$ for $z\in\set{Z}$ and $f\in\lip{\alg{Z}}$. By convention, we let $\tz{\thp}^0f(z)=\delta_zf= f(z)$. It follows that
\begin{equation}
	\tz{\thp}^tf(z)=\int\cdots\int f(x_{t+1}, y_{t+1}, \recf{\thp}^{t}(\mu,y_{1:t}), \rect{\thp}^{t}(\mu,\mutilde,y_{1:t}))\,\prod_{s=1}^{t} \skertrue((x_{s}, y_{s}),(dx_{s+1},dy_{s+1})).
\end{equation}
In the following we will denote by $(\z{t}{\thp})_{t\in\nset}$ the extended Markov chain governed by the kernel $\tz{\thp}$, where $\z{t}{\thp}= (X_{t+1},Y_{t+1}, \recf{\thp}^{t}(\filt{\thp}{0},Y_{1:t}),\rect{\thp}^{t}(\filt{\thp}{0},\tang{\thp}{0},Y_{1:t}))$; note that $\filt{\thp}{0}$ and $\tang{\thp}{0}$ both depend on the initial observation $Y_0$. In the following assumption, we formalise how the chain is initialised.
\begin{assumption}\label{assum:init}
	The extended chain is initialised by applying a kernel $\boldsymbol{\chi}_\thp:(\set{X}\times\set{Y})\times \alg{Z}\to [0,1]$ given by, for $(x_0,y_0)\in\set{X}\times\set{Y}$ and $f\in\bmf{\alg{Z}}$,
	\begin{equation}
		\boldsymbol{\chi}_\thp f (x_0,y_0)\eqdef\int f(x_1,y_1,\filt{\thp}{0},\tang{\thp}{0})\,\skertrue((x_0,y_0),(dx_1,dy_1)),
	\end{equation}
	recalling that $\filt{\thp}{0}$ and $\tang{\thp}{0}$ are deterministic maps defined in \eqref{eq:filt_def}. The initial data $(X_0,Y_0)$ is distributed according to $\xinit\tensprod\emker{}$.
\end{assumption}
\begin{remark}
	If we at any point in time $t\in\nsetpos$ take the conditional expectation of the Markov chain w.r.t. the initial state $\z{0}{\thp}$, the latter includes information about the first two observed data points $Y_{0:1}$.
\end{remark}

Before we can establish the ergodicity of the extended chain, we need the following lemma, which establishes the ergodicity of the data-generating process.
\begin{lemma}\label{lemma:erg_sker}
	Let Assumption~\ref{assum:ssm_unspec} hold. Then there exists $\sigma\in\probmeas{\alg{X}\tensprod\alg{Y}}$ such that for all $(x,y)\in\set{X}\times\set{Y}$ and $t\in \nsetpos$, 
	\begin{equation}
		\tvnorm{\skertrue^t((x,y),\cdot)-\sigma }\le(1-\eq)^t.
	\end{equation}
\end{lemma}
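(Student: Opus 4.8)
The plan is to establish a one-step uniform (Doeblin) minorisation for $\skertrue$ and to turn it into a Dobrushin-type contraction in total variation; geometric ergodicity and the existence of $\sigma$ are then immediate. The crucial structural observation is that $\skertrue((x,y),\cdot)$ does not depend on the observation coordinate $y$: by Assumption~\ref{assum:ssm_unspec} and Remark~\ref{rem:mu:probability} (so that $\refm$ is a probability measure), for every $A\in\alg{X}\tensprod\alg{Y}$,
\begin{equation}
\skertrue((x,y),A)=\iint \1{A}(x',y')\,\hiddenstrue(x,x')\,\refm(dx')\,\emkertrue(x',dy')\ge \eq\,\kappa(A),
\end{equation}
where $\kappa\in\probmeas{\alg{X}\tensprod\alg{Y}}$ is the probability measure $\kappa(A)\eqdef\iint \1{A}(x',y')\,\refm(dx')\,\emkertrue(x',dy')$ (its total mass is $\int\refm(dx')\,\emkertrue(x',\set{Y})=1$) and the inequality uses $\hiddenstrue(x,x')\ge\eq$. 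This is a uniform minorisation with constant $\eq$ and minorising measure $\kappa$, valid for all $(x,y)\in\set{X}\times\set{Y}$.

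Next I would write the splitting $\skertrue((x,y),\cdot)=\eq\,\kappa+(1-\eq)\kernel{Q}((x,y),\cdot)$, where $\kernel{Q}((x,y),\cdot)\eqdef(1-\eq)^{-1}(\skertrue((x,y),\cdot)-\eq\,\kappa)$ is a genuine Markov kernel: it is nonnegative by the minorisation, and of total mass one since both $\skertrue((x,y),\cdot)$ and $\kappa$ are probability measures. For any finite signed measure $\eta$ on $\alg{X}\tensprod\alg{Y}$ of total mass zero, the $\kappa$-term cancels, so that $\eta\skertrue=(1-\eq)\,\eta\kernel{Q}$, and hence $\tvnorm{\eta\skertrue}=(1-\eq)\tvnorm{\eta\kernel{Q}}\le(1-\eq)\tvnorm{\eta}$, the last step because a Markov kernel is non-expansive for the total-variation norm. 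This contraction on zero-mass measures is the only substantive estimate in the proof.

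From it, existence and uniqueness of $\sigma$ follow by a standard Banach-space argument: applying the bound to $\eta=\delta_{(x,y)}-\delta_{(x',y')}$ shows that $(\delta_{(x,y)}\skertrue^t)_{t}$ is Cauchy in total variation, uniformly in the starting point, so it converges to a limit $\sigma\in\probmeas{\alg{X}\tensprod\alg{Y}}$ that is independent of $(x,y)$ and $\skertrue$-invariant (one may alternatively invoke \citet{meyn:tweedie:2009}). Taking then $\eta=\delta_{(x,y)}-\sigma$, which has zero mass and satisfies $\sigma\skertrue^t=\sigma$, and iterating $t$ times gives
\begin{equation}
\tvnorm{\skertrue^t((x,y),\cdot)-\sigma}=\tvnorm{(\delta_{(x,y)}-\sigma)\skertrue^t}\le(1-\eq)^t\,\tvnorm{\delta_{(x,y)}-\sigma},
\end{equation}
which is the asserted geometric rate. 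As a by-product, the $y$-independence yields the explicit forms $\skertrue^t((x,y),\cdot)=\hidkertrue^t(x,\cdot)\tensprod\emkertrue$ and $\sigma=\pi\tensprod\emkertrue$, with $\pi$ the invariant law of the hidden chain $\hidkertrue$, which is convenient for the later analysis.

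The lemma is essentially routine given the minorisation, so the only point requiring care is pinning down the constant so that the right-hand side is exactly $(1-\eq)^t$. The cleanest route is to run the associated Doeblin coupling: at each step the two chains draw the common point from $\kappa$ (and coalesce) with probability $\eq$, independently across steps, so they differ after $t$ steps with probability at most $(1-\eq)^t$, which bounds the $\sup_A|\cdot|$ form of the total variation by $(1-\eq)^t$. Thus the genuine content is spotting the Doeblin structure and verifying that $\kappa$ is a probability measure (via $\refm$ being one); the geometric rate $(1-\eq)^t$ is exactly what the ergodicity results of Section~\ref{subsec:ergodicity} require.
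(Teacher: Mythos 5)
Your proof hinges on exactly the same key observation as the paper's: the one-step Doeblin minorisation $\skertrue((x,y),\cdot)\ge\eq\,\kappa(\cdot)$ with minorising probability measure $\kappa=\refm\tensprod\emkertrue$ (the paper writes $\nu_1=\eq\kappa$, and both arguments rely on Remark~\ref{rem:mu:probability} to normalise $\refm$). The difference is in how the minorisation is converted into geometric ergodicity: the paper disposes of this step by citing \citet[Theorem~16.2.4(v)]{meyn:tweedie:2009}, whereas you derive it from scratch via the splitting $\skertrue=\eq\kappa+(1-\eq)\kernel{Q}$, the resulting total-variation contraction on zero-mass signed measures, a Cauchy argument giving existence and invariance of $\sigma$, and a Doeblin coupling for the constant. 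Your route is longer but self-contained and makes the mechanism explicit; it also yields the correct by-products $\skertrue^t((x,y),\cdot)=(\hidkertrue^t\tensprod\emkertrue)(x,\cdot)$ and $\sigma=\pi\tensprod\emkertrue$, which follow from the fact that $\skertrue$ does not depend on the $y$-coordinate. The paper's version is a one-line appeal to a standard theorem.

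One quantitative caveat, which you partially flag and which applies equally to the paper: with the paper's convention $\tvnorm{\mu}=\abs{\mu}(\set{S})$, your contraction step gives $\tvnorm{\skertrue^t((x,y),\cdot)-\sigma}\le(1-\eq)^t\,\tvnorm{\delta_{(x,y)}-\sigma}\le 2(1-\eq)^t$, and the coupling bounds the $\sup_A$ form by $(1-\eq)^t$, which is again $2(1-\eq)^t$ in the paper's norm. The factor $2$ cannot be removed from the minorisation alone, since the non-regenerated components of $\skertrue^t((x,y),\cdot)$ and $\sigma$ may be nearly mutually singular; the bound in \citet[Theorem~16.2.4]{meyn:tweedie:2009} carries the same factor $2$, so the constant displayed in Lemma~\ref{lemma:erg_sker} is itself off by (at most) a factor $2$ under the stated norm convention. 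This is immaterial for the paper: in its only use, the proof of Proposition~\ref{prop:ergod}, the lemma is needed merely to supply geometric decay up to a multiplicative constant.
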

\begin{proof}
	We first note that that the state space $\set{X}\times \set{Y}$ is $\nu_1$-small, where $\nu_1\in\meas{\alg{X}\tensprod\alg{Y}}$ is defined as $\nu_1(dx,dy) \eqdef \eq(\refm(dx) \tensprod \emkertrue(x,dy))$. In fact, for all $B\in\alg{X}\tensprod\alg{Y}$,
	\begin{equation}
		\int \1{B}(x',y')\,\skertrue((x,y),(dx',dy'))= \int \1{B}(x',y')\,\hidkertrue(x,dx')\,\emkertrue(x',dy')
		\ge  \eq\int \1{B}(x',y') \,\refm(dx')\,\emkertrue(x',dy').
	\end{equation}
	Then, by \citet[Theorem 16.2.4(\emph{v})]{meyn:tweedie:2009} it follows that for all $(x,y)\in \set{X}\times \set{Y}$,
	\begin{equation}
		\tvnorm{\skertrue^t((x,y),\cdot)-\sigma }\le (1-\nu_1(\set{X}\times\set{Y}))^t=(1-\eq)^t.
	\end{equation}
\end{proof}
We now establish a form of uniform geometric ergodicity of $(\z{t}{\thp})_{t\in\nset}$ for a certain class of measurable functions on $\set{Z}$, which are Lipschitz in the arguments $\mu$ and $\mutilde$. 
\begin{definition}\label{def:lipz}
	Let $\lip{\alg{Z}}$ be the set of vector-valued measurable functions on $\set{Z}$, for which there exists a positive constant $\lipz{f}$ such that for all $x\in\set{X}$, $y\in\set{Y}$, $(\mu,\mu')\in\probmeas{\alg{X}}^2$, and $(\mutilde,\mutilde')\in\signmeaszp{\alg{X}}^2$,
	\begin{itemize}
		\item[(i)] $\norm{f(x, y,  \mu,\mutilde)}
		\le  \lipz{f}(1+\tvnorm{\mutilde})$,
		\item[(ii)] $\norm{f(x, y,  \mu,\mutilde)-f(x, y,  \mu',\mutilde', )}\le  \lipz{f}\left(\tvnorm{\mutilde-\mutilde'}+(1+\tvnorm{\mutilde}+\tvnorm{\mutilde'})\tvnorm{\mu-\mu'}\right)$,
	\end{itemize}
\end{definition}
\begin{proposition}[Uniform ergodicity of the extended Markov chain]\label{prop:ergod}
	Let Assumptions~\ref{assum:eq}, \ref{assum:bg}, and \ref{assum:ssm_unspec} hold. Then there exist constants $\cq>0$ and $\cqt>0$, depending on $\eq,\bg$ only, such that for all
	$t\in\nsetpos$, $\thp\in\parspace$, $(z,z')\in\set{Z}^2$, and $f\in\lip{\alg{Z}}$,
	\begin{equation}
		\norm{\tz{\thp}^t f(z)-\tz{\thp}^tf(z')}\le \cq\lipz{f}(\tvnorm{\mutilde}+\tvnorm{\mutilde'}+1)\rhoq^{t/2}
	\end{equation}
	and 
	\begin{equation}
		\norm{\tz{\thp}^{t+1} f(z)-\tz{\thp}^tf(z)}\le \cqt(1-\rhoq^{1/2})\lipz{f}(\tvnorm{\mutilde}+1)\rhoq^{t/2}.
	\end{equation}
Moreover, there exists a kernel $\tconst{\thp}:\set{Z}\times\alg{Z}\to [0,1]$ such that for every $f\in\lip{\alg{Z}}$, $f_\thp\eqdef\tconst{\thp}f(z)$ is a constant and it holds that 
	\begin{equation}
		\norm{\tz{\thp}^t f(z)-f_\thp}\le \cqt\lipz{f}(\tvnorm{\mutilde}+1)\rhoq^{t/2}.
	\end{equation}
\end{proposition}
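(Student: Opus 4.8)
The plan is to exploit the two distinct mechanisms already established: the exponential forgetting of the filter and tangent filter (Propositions~\ref{prop:forg_filt} and~\ref{prop:forg_tang}), which controls the dependence of $\recf{\thp}^t(\mu,y_{1:t})$ and $\rect{\thp}^t(\mu,\mutilde,y_{1:t})$ on the initial measures $(\mu,\mutilde)$ and on early observations, and the uniform geometric ergodicity of the data-generating chain $\skertrue$ (Lemma~\ref{lemma:erg_sker}), which controls the dependence on the initial observation pair. Writing $z=(x_1,y_1,\mu,\mutilde)$ and $z'=(x_1',y_1',\mu',\mutilde')$, I would first introduce the intermediate point $z''=(x_1,y_1,\mu',\mutilde')$ and split $\norm{\tz{\thp}^tf(z)-\tz{\thp}^tf(z')}$ into a measure-forgetting term $\norm{\tz{\thp}^tf(z)-\tz{\thp}^tf(z'')}$ and an observation-mixing term $\norm{\tz{\thp}^tf(z'')-\tz{\thp}^tf(z')}$.

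For the first term the observation-path law is common (both copies start from $(x_1,y_1)$), so the difference moves inside the path integral and is estimated pathwise by the Lipschitz property (ii) of Definition~\ref{def:lipz}. Applying Proposition~\ref{prop:forg_filt} to $\tvnormsmall{\recf{\thp}^t(\mu,\cdot)-\recf{\thp}^t(\mu',\cdot)}$, Proposition~\ref{prop:forg_tang} to the tangent-filter difference, and Lemma~\ref{lemma:bound_tang} to bound $\tvnormsmall{\rect{\thp}^t(\mu,\mutilde,\cdot)}$ and $\tvnormsmall{\rect{\thp}^t(\mu',\mutilde',\cdot)}$ yields a pathwise bound of the form $C\lipz{f}\,t\,\rhoq^t(\tvnorm{\mutilde}+\tvnorm{\mutilde'}+1)$; integrating preserves it, and the linear prefactor is absorbed via $t\rhoq^t\le C\rhoq^{t/2}$ (since $\sup_s s\,\rhoq^{s/2}<\infty$), giving the desired $\rhoq^{t/2}$ rate.

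The second term is the crux and the main obstacle: here $(\mu',\mutilde')$ is common, yet $f$ still depends on the \emph{entire} observation history through $\recf{\thp}^t$ and $\rect{\thp}^t$, so the endpoint ergodicity of $\skertrue$ cannot be invoked directly. I would resolve this by a time split at $\lceil t/2\rceil$ combined with forgetting. Using the minorization $\skertrue((x,y),\cdot)\ge \eq(\refm\tensprod\emkertrue)$ underlying Lemma~\ref{lemma:erg_sker}, couple the two observation chains started from $(x_1,y_1)$ and $(x_1',y_1')$ so that they coalesce before time $\lceil t/2\rceil$ with probability at least $1-(1-\eq)^{\lceil t/2\rceil}$; on the coalescence event the two paths share a common tail $y_{\lceil t/2\rceil+1:t}$. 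Decomposing $\recf{\thp}^t(\mu',y_{1:t})=\recf{\thp}^{\lceil t/2\rceil}(\recf{\thp}^{\lfloor t/2\rfloor}(\mu',\cdot),y_{\lceil t/2\rceil+1:t})$ and likewise for $\rect{\thp}^t$, Propositions~\ref{prop:forg_filt}--\ref{prop:forg_tang} then show that the two filter/tangent values agree up to a geometrically small error in $t$, the differing initial segments being forgotten; off the coalescence event the difference is controlled by the uniform bound (i) of Definition~\ref{def:lipz} together with Lemma~\ref{lemma:bound_tang}, contributing the factor $(1-\eq)^{\lceil t/2\rceil}$. Using $1-\eq\le\rhoq$ to merge the mixing and forgetting rates, and absorbing the subexponential prefactors carried by Proposition~\ref{prop:forg_tang} into the geometric decay, produces the bound $\rhoq^{t/2}$; this absorption, which fixes the exponent at $t/2$ rather than $t$, is the most delicate point of the constant bookkeeping.

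Finally, the last two bounds follow routinely from the first. For the one-step increment I would use that $\tz{\thp}$ is Markov, so $\tz{\thp}^{t+1}f(z)-\tz{\thp}^tf(z)=\int(\tz{\thp}^tf(z')-\tz{\thp}^tf(z))\,\tz{\thp}(z,dz')$; every $z'$ in the support has tangent component $\rect{\thp}(\mu,\mutilde,y_1)$, whose total variation is at most $\crect(\tvnorm{\mutilde}+1)$ by Lemma~\ref{lemma:bound_tang}, so the first bound gives $\norm{\tz{\thp}^{t+1}f(z)-\tz{\thp}^tf(z)}\le C\lipz{f}(\tvnorm{\mutilde}+1)\rhoq^{t/2}$, which I rewrite with the prefactor $\cqt(1-\rhoq^{1/2})$. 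Telescoping over $t,t+1,\dots$ shows that $(\tz{\thp}^tf(z))_t$ is Cauchy in the Euclidean space in which $f$ takes its values, so the limit $f_\thp(z)\eqdef\lim_t\tz{\thp}^tf(z)$ exists and, by the first bound, is independent of $z$; defining $\tconst{\thp}$ to be the corresponding constant (stationary) kernel and summing the geometric tail $\sum_{k\ge t}\rhoq^{k/2}=\rhoq^{t/2}/(1-\rhoq^{1/2})$ yields the stated convergence $\norm{\tz{\thp}^tf(z)-f_\thp}\le\cqt\lipz{f}(\tvnorm{\mutilde}+1)\rhoq^{t/2}$.
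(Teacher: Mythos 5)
Your proposal is correct in outline and follows the paper's skeleton for most of the statement: the decomposition of $\tz{\thp}^t f(z)-\tz{\thp}^t f(z')$ via the intermediate point $z''=(x_1,y_1,\mu',\mutilde')$ is exactly the paper's two-term split; the measure-forgetting term is handled identically (Definition~\ref{def:lipz}(ii) pathwise, then Propositions~\ref{prop:forg_filt}--\ref{prop:forg_tang} and Lemma~\ref{lemma:bound_tang}, then absorption of the linear factor via $\sup_s s\rhoq^{s/2}<\infty$); and your arguments for the one-step increment (condition on the first move, bound $\tvnormsmall{\rect{\thp}(\mu,\mutilde,y_1)}$ by Lemma~\ref{lemma:bound_tang}) and for the existence and constancy of $f_\thp$ (telescoping, geometric tail) are verbatim the paper's. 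Where you genuinely depart is the observation-mixing term, and your departure is an improvement. The paper disposes of this term in one step: it bounds $\abs{f}$ by Definition~\ref{def:lipz}(i) plus Lemma~\ref{lemma:bound_tang} and then applies Lemma~\ref{lemma:erg_sker} through $\tvnormsmall{\skertrue^t((x_1,y_1),\cdot)-\sigma}+\tvnormsmall{\skertrue^t((x_1',y_1'),\cdot)-\sigma}\le 2(1-\eq)^t$; that inequality treats the integrand as a function of the endpoint $(x_{t+1},y_{t+1})$ only, whereas it actually depends on the whole observation path through $\recf{\thp}^t$ and $\rect{\thp}^t$ (for path laws of a Markov chain, the full-path total-variation distance does not decay in $t$, so endpoint ergodicity alone cannot close this step). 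Your remark that "endpoint ergodicity cannot be invoked directly" identifies precisely this, and your Doeblin coupling built from the minorization $\skertrue((x,y),\cdot)\ge\eq\,(\refm\tensprod\emkertrue)$, combined with a time split and filter/tangent-filter forgetting along the common tail, closes the step rigorously; in this respect your route is more careful than the paper's own. One bookkeeping caveat, which you yourself flag as the delicate point: with the split exactly at $\lceil t/2\rceil$, Proposition~\ref{prop:forg_tang} contributes a factor linear in $t$ multiplying $\rhoq^{\lfloor t/2\rfloor}$, and $t\rhoq^{t/2}$ cannot be absorbed into $C\rhoq^{t/2}$, so as written you obtain a geometric rate $\rhoq^{ct}$ for every $c<1/2$ rather than $\rhoq^{t/2}$ itself. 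This is immaterial for every downstream use of the proposition (only geometric decay is exploited), and the stated rate is recovered by splitting instead at $\alpha t$ for any $\alpha\in[\tfrac{\log\rhoq}{2\log(1-\eq)},\tfrac12)$, a nonempty interval since $1-\eq<\rhoq$ strictly for $\eq\in(0,1)$.
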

\begin{proof}
	Let $z=(x_1,y_1,\mu,\mutilde)$ and $z'=(x_1',y_1',\mu',\mutilde')$. For every $f\in\lip{\alg{Z}}$, we write
	\begin{multline}\label{eq:t_diff}
		\tz{\thp}^t f(z)-\tz{\thp}^t f(z') =\int\cdots\int  \left( f(x_{t+1}, y_{t+1}, \recf{\thp}^{t}(\mu,y_{1:t}), \rect{\thp}^{t}(\mu,\mutilde,y_{1:t}))\right.
		\\ \left.- f(x_{t+1}, y_{t+1}, \recf{\thp}^{t}(\mu',y_{1:t}), \rect{\thp}^{t}(\mu',\mutilde',y_{1:t}))\right)\prod_{s=1}^{t} \skertrue((x_{s}, y_{s}),(dx_{s+1},dy_{s+1}))
		\\ +\int f(x_{t+1}, y_{t+1}, \recf{\thp}^{t}(\mu',y_{1:t}), \rect{\thp}^{t}(\mu',\mutilde',y_{1:t}))
		\\\times\left(\skertrue((x_{1}, y_{1}),(dx_{2},dy_{2}))-\skertrue((x_{1}', y_{1}'),(dx_2,dy_2))\right)\prod_{s=2}^{t}\skertrue((x_{s}, y_{s}),(dx_{s+1},dy_{s+1})).
	\end{multline}
	Let us focus on the first term in \eqref{eq:t_diff}: by Definition~\ref{def:lipz}, Lemma~\ref{lemma:bound_tang}, Proposition~\ref{prop:forg_filt}, and Proposition~\ref{prop:forg_tang}, it holds that 
	\begin{align}
		&\hspace{-1cm}\norm{f(x_{t+1}, y_{t+1}, \recf{\thp}^{t}(\mu,y_{1:t}), \rect{\thp}^{t}(\mu,\mutilde,y_{1:t}))- f(x_{t+1}, y_{t+1}, \recf{\thp}^{t}(\mu',y_{1:t}), \rect{\thp}^{t}(\mu',\mutilde',y_{1:t}))}
		\\&\le \lipz{f}\tvnorm{\rect{\thp}^{t}(\mu,\mutilde,y_{1:t})-\rect{\thp}^{t}(\mu',\mutilde',y_{1:t})} \\&\hspace{1cm}+\lipz{f}(1+\tvnorm{\rect{\thp}^{t}(\mu,\mutilde,y_{1:t})}+\tvnorm{\rect{\thp}^{t}(\mu',\mutilde',y_{1:t})})\tvnorm{\recf{\thp}^{t}(\mu,y_{1:t})-\recf{\thp}^{t}(\mu',y_{1:t})}
		\\&\le \lipz{f}(\eq^{-4}\kq\rhoq^{t}\tvnorm{\mutilde-\mutilde'}+\kqt \rhoq^{t} \tvnorm{\mu-\mu'}(\tvnorm{\mutilde'}+1)t)
		\\&\hspace{1cm}+\lipz{f}(1+\crect(\rhoq^t\tvnorm{\mutilde}+1)+\crect(\rhoq^t\tvnorm{\mutilde'}+1))\kq\rhoq^{t}\tvnorm{\mu-\mu'}
		\\&\le \lipz{f}t\rhoq^t\left(2\kqt+(1+2\crect)2\kq +(\eq^{-4}\kq+2\kq\crect\rhoq^t)\tvnorm{\mutilde}+(2\kq \crect\rhoq^t+\eq^{-4}\kq+2\kqt)\tvnorm{\mutilde'}\right)
		\\&\le (2\kqt+4\crect\kq +2\eq^{-4}\kq)(1+\tvnorm{\mutilde}+\tvnorm{\mutilde'})\lipz{f}t\rhoq^t,
	\end{align}
	where we used that $\tvnorm{\mu-\mu'}\le 2$. Now we return to the main expression \eqref{eq:t_diff} and write, using Lemma~\ref{lemma:erg_sker},
	\begin{align}
		\norm{\tz{\thp}^t f(z)-\tz{\thp}^t f(z')}
		&\le (2\kqt+4\crect\kq +2\eq^{-4}\kq)(1+\tvnorm{\mutilde}+\tvnorm{\mutilde'})\lipz{f}t\rhoq^t+ \lipz{f}(1+\tvnorm{\rect{\thp}^{t}(\mu',\mutilde',y_{1:t})})
		\\
        &\quad \times\left(\int \abs{\skertrue^t-\sigma}((x_1,dy_1),(x_{t+1},dy_{t+1}))+\int\abs{\skertrue^t-\sigma}((x_1',y_1'),(dx_{t+1},dy_{t+1}))\right) \\
        &\le  (2\kqt+4\crect\kq +2\eq^{-4}\kq)(1+\tvnorm{\mutilde}+\tvnorm{\mutilde'})\lipz{f}t\rhoq^t
		+2\lipz{f}\crect(\rhoq^t\tvnorm{\mutilde'}+1)(1-\eq)^t \\
        &\le \lipz{f}(1+\tvnorm{\mutilde}+\tvnorm{\mutilde'})\left((2\kqt+4\crect\kq +2\eq^{-4}\kq)\sup_{t'\in \nsetpos} t'\rhoq^{t'/2} +2\crect\right)\rhoq^{t/2},
	\end{align}
	since $\rhoq^{1/2}\ge(1-\eq)$ and $\sup_{t\in \nsetpos} t\rhoq^{t/2} <\infty$. Letting $\cq\eqdef(2\kqt+4\crect\kq +2\eq^{-4}\kq)\sup_{t'\in \nsetpos} t'\rhoq^{t'/2} +2 \crect$ we finally obtain
	\begin{equation}
		\norm{\tz{\thp}^t f(z)-\tz{\thp}^t f(z')}\le \cq\lipz{f}(\tvnorm{\mutilde}+\tvnorm{\mutilde'}+1)\rhoq^{t/2}.
	\end{equation}
	Now, to prove the second claim, we note that for $t\in \nsetpos$,
	\begin{multline}
		\norm{\tz{\thp}^{t+1}f(z)-\tz{\thp}^{t}f(z)}\le\int\norm{\tz{\thp}^{t}f(z')-\tz{\thp}^{t}f(z)}\,\tz{\thp}(z,dz')
		\le \cq\lipz{f}(\tvnorm{\mutilde}+\int\tvnorm{\mutilde'}\,\tz{\thp}(z,dz')+1)\rhoq^{t/2}
		\\=
		\cq\lipz{f}\left(\tvnorm{\mutilde}+\tvnorm{\rect{\thp}(\mu,\mutilde, y_1)}+1\right)\rhoq^{t/2}.
	\end{multline}
	By Lemma~\ref{lemma:bound_tang}
	\begin{equation}
		\tvnorm{\rect{\thp}(\mu,\mutilde, y_1)}
	\le \crect\left(\tvnorm{\mutilde}+1\right), 
	\end{equation}
	which implies that 
	\begin{equation}
		\norm{\tz{\thp}^{t+1}f(z)-\tz{\thp}^{t}f(z)}\le 2\cq\lipz{f}\crect( \tvnorm{\mutilde}+1)\rhoq^{t/2}.
	\end{equation}
	We now define, for $f\in\lip{\alg{Z}}$, the kernel 
	\begin{equation}\label{eq:def_flim}
		\tconst{\thp}f(z)=\delta_zf+\sum_{t=0}^{\infty}\left(\tz{\thp}^{t+1}f(z)-\tz{\thp}^{t}f(z)\right),
	\end{equation}
	for which 
	\begin{multline}
		\norm{\tz{\thp}^{t}f(z)-\tconst{\thp}f(z)}= \norm{-\sum_{s=t}^{\infty}\left(\tz{\thp}^{s+1}f(z)-\tz{\thp}^{s}f(z)\right)}\le \sum_{s=t}^{\infty}\norm{\tz{\thp}^{s+1}f(z)-\tz{\thp}^{s}f(z)}
		\\\le 2\cq\lipz{f}\crect(\tvnorm{\mutilde}+1)\sum_{s=t}^{\infty}\rhoq^{s/2}=\cqt\lipz{f}(\tvnorm{\mutilde}+1)\rhoq^{t/2}, 
	\end{multline}
	where we have denoted $\cqt\eqdef(1-\rhoq^{1/2})^{-1} 2\crect\cq.$ Finally, it remains to prove that $\tconst{\thp}f(z)$ is constant in $z$. For this purpose, write 
	\begin{align}
		\norm{\tconst{\thp}f(z)-\tconst{\thp}f(z')}
		&\le\inf_{t\in \nsetpos}\{ \norm{\tz{\thp}^t f(z)-\tz{\thp}^t f(z')}+\norm{\tconst{\thp}f(z)-\tz{\thp}^{t}f(z)}+\norm{\tz{\thp}^{t}f(z')-\tconst{\thp}f(z')}\}
		\\
        &\le \lipz{f}(\cq(\tvnorm{\mutilde}+\tvnorm{\mutilde'}+1)+\cqt(\tvnorm{\mutilde}+1)+\cqt(\tvnorm{\mutilde'}+1))\inf_{t\in \nsetpos} \rhoq^{t/2} \\
        &=0,
	\end{align}
	which implies that there exists $f_\thp\in\rset^p$ such that $\tconst{\thp}f(z)=f_\thp$ for all $z\in \set{Z}$.
\end{proof}

Using the previous result, we now establish a strong law of large numbers (LLN) for the given Markov chain.
\begin{proposition}
\label{prop:slln}
	Let Assumptions~\ref{assum:eq}, \ref{assum:bg}, \ref{assum:ssm_unspec}, and \ref{assum:init} hold. Then for every $\thp\in\parspace$ and $f\in\lip{\alg{Z}}$, 
	\begin{equation}
	\lim_{t\to \infty}\frac{1}{T}\sum_{t=0}^{T-1}	\E\left[f(\z{t}{\thp})\right]= f_\thp
	\end{equation}
	and, $\prob$-a.s., 
	\begin{equation}
		\lim_{t\to \infty}\frac{1}{T}\sum_{t=0}^{T-1}f(\z{t}{\thp})=f_\thp. 
	\end{equation}
\end{proposition}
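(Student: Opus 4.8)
The plan is to solve the Poisson equation associated with the kernel $\tz{\thp}$ and then combine a martingale decomposition with the geometric ergodicity established in Proposition~\ref{prop:ergod}. First, for $f\in\lip{\alg{Z}}$ I would introduce the candidate solution
$$
\hat{f}(z)\eqdef\sum_{t=0}^{\infty}\left(\tz{\thp}^t f(z)-f_\thp\right),
$$
which is well defined because Proposition~\ref{prop:ergod} gives $\norm{\tz{\thp}^t f(z)-f_\thp}\le\cqt\lipz{f}(\tvnorm{\mutilde}+1)\rhoq^{t/2}$ for $z=(x,y,\mu,\mutilde)$, whence $\norm{\hat{f}(z)}\le\cqt\lipz{f}(\tvnorm{\mutilde}+1)/(1-\rhoq^{1/2})$. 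A reindexing of the series then yields the Poisson equation $\hat{f}(z)-\tz{\thp}\hat{f}(z)=f(z)-f_\thp$.

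The convergence-in-mean claim is the easy half. By the Markov property, $\E[f(\z{t}{\thp})]=\E[\tz{\thp}^t f(\z{0}{\thp})]$, and the tangent-filter component of $\z{0}{\thp}$ is $\tang{\thp}{0}$, which by Lemma~\ref{lemma:bound_tang} satisfies $\tvnorm{\tang{\thp}{0}}\le 2\eq^{-1}\bg$. Hence Proposition~\ref{prop:ergod} gives $\norm{\E[f(\z{t}{\thp})]-f_\thp}\le\cqt\lipz{f}(2\eq^{-1}\bg+1)\rhoq^{t/2}\to 0$ geometrically, so the Cesàro averages converge to $f_\thp$ as well.

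For the almost-sure statement I would use the Poisson equation to write, telescoping,
$$
\frac{1}{T}\sum_{t=0}^{T-1}\left(f(\z{t}{\thp})-f_\thp\right)=\frac{1}{T}\left(\hat{f}(\z{0}{\thp})-\hat{f}(\z{T}{\thp})\right)+\frac{1}{T}\sum_{t=0}^{T-1}M_{t+1},
$$
where $M_{t+1}\eqdef\hat{f}(\z{t+1}{\thp})-\tz{\thp}\hat{f}(\z{t}{\thp})$ is a martingale-difference sequence for the natural filtration. The crucial point is that the tangent-filter component of $\z{t}{\thp}$ is $\tang{\thp}{t}$, which by Lemma~\ref{lemma:bound_tang} is bounded by $\ctang$ uniformly in $t$; thus $\norm{\hat{f}(\z{t}{\thp})}\le\cqt\lipz{f}(\ctang+1)/(1-\rhoq^{1/2})$ uniformly in $t$, so the boundary term is $O(1/T)\to 0$ and the increments $M_{t+1}$ are uniformly bounded. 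Applying, componentwise, a strong law for martingale differences with uniformly bounded second moments (so that $\sum_{t}\E[\norm{M_{t+1}}^2]/t^2<\infty$) then forces $T^{-1}\sum_{t=0}^{T-1}M_{t+1}\to 0$ $\prob$-a.s., which completes the argument.

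The main obstacle—and precisely the reason the extended state is engineered to carry the tangent filter—is controlling the growth of $\hat{f}$ along the trajectory: a priori $\hat{f}$ is only linearly bounded in $\tvnorm{\mutilde}$, so the whole argument hinges on the uniform-in-$t$ bound $\tvnorm{\tang{\thp}{t}}\le\ctang$ from Lemma~\ref{lemma:bound_tang}, which simultaneously tames the boundary term $\hat{f}(\z{T}{\thp})/T$ and bounds the martingale increments. Verifying measurability and integrability of $\hat{f}$ and checking the precise hypotheses of the martingale strong law are then routine given this uniform bound.
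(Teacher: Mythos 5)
Your proof is correct, but it takes a genuinely different route from the paper's. The paper follows \citet{breiman:1960}: it introduces, for each fixed lag $s$, the martingale difference sequence $\mds{t}{s}=\tz{\thp}^{s}f(\z{t-s}{\thp})-\tz{\thp}^{s+1}f(\z{t-s-1}{\thp})$, applies the martingale strong law (Lemma~\ref{lemma:martdiffseq}) to each lag separately, uses the telescoping identity $f(\z{t}{\thp})-\tz{\thp}^{s+1}f(\z{t-s-1}{\thp})=\sum_{s'=0}^{s}\mds{t}{s'}$ to replace the path average of $f$ by the path average of $\tz{\thp}^{s+1}f$, and only then invokes the uniform ergodicity of Proposition~\ref{prop:ergod} to send the lag-averaged terms to $f_\thp$. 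You instead resum this whole family of corrections into an explicit Poisson solution $\hat{f}=\sum_{t\ge 0}(\tz{\thp}^t f-f_\thp)$, whose absolute convergence is exactly what Proposition~\ref{prop:ergod} provides, and then perform a single telescoping martingale decomposition with boundary term $O(1/T)$. Both arguments hinge on the same two ingredients---the geometric contraction of Proposition~\ref{prop:ergod} and the uniform-in-time bound $\tvnorm{\tang{\thp}{t}}\le\ctang$ of Lemma~\ref{lemma:bound_tang}, which you correctly identify as the crux---but your route is more compact: it needs only one application of the martingale SLLN, and your hypothesis (uniformly bounded increments, hence $\sum_t\E[\norm{M_{t+1}}^2]/t^2<\infty$) is actually cleaner than the uniform-integrability condition under which the paper states Lemma~\ref{lemma:martdiffseq}. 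What the paper's approach buys in exchange is that it never has to manipulate an infinite series of iterates under the kernel, so no interchange of $\tz{\thp}$ with an infinite sum ever arises.

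One step you wave through deserves a sentence in a full write-up: the ``reindexing'' that yields $\hat{f}-\tz{\thp}\hat{f}=f-f_\thp$ requires exchanging $\tz{\thp}$ with the infinite sum defining $\hat{f}$. This is legitimate by dominated convergence, because the partial sums are dominated by $\cqt\lipz{f}(\tvnorm{\mutilde'}+1)/(1-\rhoq^{1/2})$ and Lemma~\ref{lemma:bound_tang} gives $\int\tvnorm{\mutilde'}\,\tz{\thp}(z,dz')=\tvnorm{\rect{\thp}(\mu,\mutilde,y)}\le\crect(\tvnorm{\mutilde}+1)<\infty$; but it should be stated, since it is the only place where integrability of $\hat{f}$ under the kernel---rather than merely along the trajectory---is used.
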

\begin{proof}
	We proceed as in \citet{breiman:1960}. First, note that for all $(t,t_0)\in\nset^2$, $\E\left[f(\z{t_0+t}{\thp})\middle\vert \z{t_0}{\thp}\right]= \tz{\thp}^{t} f(\z{t_0}{\thp})$. By  Lemma~\ref{lemma:bound_tang},
	\begin{multline}
		\norm{\tz{\thp}^{t}f(\z{t_0}{\thp})}\le \int \norm{f(z)}\,\tz{\thp}^{t}(\z{t_0}{\thp},dz)\le \lipz{f}\int (1+\tvnorm{\mutilde}) \,\tz{\thp}^{t}(\z{t_0}{\thp},dz)
		\\= \lipz{f}\int \left(1+\tvnorm{\rect{\thp}^{t_0+t}(\filt{\thp}{0},\tang{\thp}{0},(Y_{1:t_0+1},y_{t+2:t_0+t}))}\right)\,\skertrue((X_{t_0+1}, Y_{t_0+1}),(dx_{t_0+2},dy_{t_0+2}))
		\\\times \prod_{s=2}^{t} \skertrue((x_{t_0+s}, y_{t_0+s}),(dx_{t_0+s+1},dy_{t_0+s+1}))\le \lipz{f}(1+\ctang)<\infty. 
	\end{multline}
	Now, by Proposition~\ref{prop:ergod}, for all $\thp\in\parspace$ and $(s_0,t,T)\in\nset^3$,
	\begin{multline}\label{eq:conv_exp}
		\norm{\frac{1}{T}\sum_{s=0}^{T-1}\tz{\thp}^{s+s_0} f(\z{t}{\thp})-f_\thp}\le \frac{1}{T}\sum_{s=0}^{T-1} \norm{\tz{\thp}^{s+s_0} f(\z{t}{\thp})-f_\thp}
		\\\le\frac{1}{T}\cqt\lipz{f} \left(\tvnorm{\tang{\thp}{t}}+1\right)\rhoq^{s_0/2}\sum_{s=0}^{T-1}\rhoq^{s/2}\le \frac{1}{T}\cqt\lipz{f}(\ctang+1)\rhoq^{s_0/2}\frac{1-\rhoq^{T/2}}{1-\rhoq^{1/2}}, 
	\end{multline}
	where the right-hand-side tends to zero as $T$ tends to infinity. Here we used again Lemma~\ref{lemma:bound_tang} to bound the total variation of $\tang{\thp}{t}$. This implies that 
	\begin{equation}
		\lim_{T \to \infty} \frac{1}{T}\sum_{s=0}^{T-1}\E\left[f(\z{s}{\thp})\right]=\lim_{T \to \infty}\E\left[\frac{1}{T}\sum_{s=0}^{T-1}\tz{\thp}^s f( \z{0}{\thp})\right] = f_\thp
	\end{equation}
	uniformly in $\thp$. Now, define, for $0\le s<t$,
	\begin{equation}
		\mds{t}{s}=\tz{\thp}^{s}f(\z{t-s}{\thp})-\tz{\thp}^{s+1}f(\z{t-s-1}{\thp})
	\end{equation}
	and $\mds{t}{s}=0$ for $s\ge t$; then note that
	\begin{multline}
		\E\left[\mds{t}{s}\mid \mds{t-1}{s},\dots, \mds{1}{s}\right]
		=\E\left[\E\left[\mds{t}{s}\mid \z{t-s-1}{\thp},\z{t-s-2}{\thp},\dots,\z{0}{\thp}\right]\mid \mds{t-1}{s},\dots, \mds{1}{s}\right]
		\\=\E\left[\E\left[\mds{t}{s}\mid \z{t-s-1}{\thp}\right]\mid \mds{t-1}{s},\dots, \mds{1}{s}\right]=0,
	\end{multline}
	where we used, first, the tower property, second, that $(\z{t}{\thp})_{t\in\nset}$ is a Markov chain, and, third, the fact that 
	\begin{equation}
		\E\left[\mds{t}{s}\mid \z{t-s-1}{\thp}\right]=\tz{\thp}\tz{\thp}^{s}f(\z{t-s-1}{\thp})-\tz{\thp}^{s+1}f(\z{t-s-1}{\thp})=0.
	\end{equation}
	This implies that $(\mds{t}{s})_{t\in\nset}$ is a martingale difference sequence, which, since  
	\begin{equation}
		\E\left[\lVert\mds{t}{s}\rVert\right]\le\E\left[\norm{\tz{\thp}^{s}f(\z{t-s}{\thp})}+\norm{\tz{\thp}^{s+1}f(\z{t-s-1}{\thp})}\right]\le 2\lipz{f}(1+\ctang),
	\end{equation}
	is uniformly integrable. Then by Lemma~\ref{lemma:martdiffseq}, for all $s\in\nset$, 
	\begin{equation}\label{eq:conv_mds}
		\lim_{T\to\infty}\frac{1}{T}\sum_{t=0}^{T-1}\mds{t}{s}=0, \quad\prob\text{-a.s.}
	\end{equation}
	 Now write
	\begin{equation}
		f(\z{t}{\thp})-\tz{\thp}^{s+1}f(\z{t-s-1}{\thp})=\sum_{s'=0}^{s}\tz{\thp}^{s'}f(\z{t-s'}{\thp})-\tz{\thp}^{s'+1}f(\z{t-s'-1}{\thp})=\sum_{s'=0}^{s}\mds{t}{s'},
	\end{equation}
	so that for every $s\in\nset$, 
	\begin{align}
		\lefteqn{\norm{\frac{1}{T}\sum_{t=0}^{T-1}f(\z{t}{\thp})-\frac{1}{T}\sum_{t=0}^{T-1}\tz{\thp}^{s+1}f(\z{t}{\thp})}} \hspace{10mm}
		\\
        &=\norm{\frac{1}{T}\sum_{t=0}^{s}f(\z{t}{\thp})+\frac{1}{T}\sum_{t=s+1}^{T-1}f(\z{t}{\thp})-\frac{1}{T}\sum_{t=s+1}^{T-1}\tz{\thp}^{s+1}f(\z{t-s-1}{\thp})-\frac{1}{T}\sum_{t=T}^{T+s}\tz{\thp}^{s+1}f(\z{t-s-1}{\thp})}
		\\
        &\le \frac{1}{T}\sum_{t=0}^{s}\norm{f(\z{t}{\thp})}+\sum_{s'=0}^{s}\norm{\frac{1}{T}\sum_{t=s+1}^{T-1}\mds{t}{s'}}+\frac{1}{T}\sum_{t=T}^{T+s}\norm{\tz{\thp}^{s+1}f(\z{t-s-1}{\thp})}
		\\
        &\le \frac{s+1}{T}\lipz{f}(1+\ctang)+\sum_{s'=0}^{s}\norm{\frac{1}{T}\sum_{t=s+1}^{T-1}\mds{t}{s'}}+\frac{s+1}{T}\lipz{f}(1+\ctang).
	\end{align}
	Letting $T\to \infty$ and using \eqref{eq:conv_mds}, we obtain
	\begin{equation}
		\norm{\frac{1}{T}\sum_{t=0}^{T-1}f(\z{t}{\thp})-\frac{1}{T}\sum_{t=0}^{T-1}\tz{\thp}^{s+1}f(\z{t}{\thp})}\to 0, \quad\prob\text{-a.s.} 
	\end{equation}
	Since the previous limit holds for every $s\in\nset$, it holds for the average of $S\in\nsetpos$ elements, \ie,
	\begin{equation}
		\lim_{T\to\infty}\norm{\frac{1}{T}\sum_{t=0}^{T-1}f(\z{t}{\thp})-\frac{1}{T}\sum_{t=0}^{T-1}\frac{1}{S}\sum_{s=0}^{S-1}\tz{\thp}^{s+1}f(\z{t}{\thp})}\to 0, \quad\prob\text{-a.s.}
	\end{equation}
	Finally, for every $S\in\nsetpos$, we may write
	\begin{equation}
		\norm{\frac{1}{T}\sum_{t=0}^{T-1}f(\z{t}{\thp})-f_\thp}\le \norm{\frac{1}{T}\sum_{t=0}^{T-1}f(\z{t}{\thp})-\frac{1}{T}\sum_{t=0}^{T-1}\frac{1}{S}\sum_{s=0}^{S-1}\tz{\thp}^{s+1}f(\z{t}{\thp})}+\frac{1}{T}\sum_{t=0}^{T-1}\norm{\frac{1}{S}\sum_{s=0}^{S-1}\tz{\thp}^{s+1}f(\z{t}{\thp})-f_\thp},
	\end{equation}
	and by \eqref{eq:conv_exp}, we can, for every $\varepsilon>0$, chose $S$ so large that the right-hand term is smaller than $\varepsilon$. This proves that, $\prob$-a.s.,
	\begin{equation}
		\lim_{T\to\infty}\frac{1}{T}\sum_{t=0}^{T-1}f(\z{t}{\thp})=f_\thp.
	\end{equation}
	
\end{proof}

\subsection{Existence of the \colbo}\label{subsec:mean_field}
In this section we establish the existence of the \colbo, the contrast function, and their gradients. We begin by letting $z=(x,y,\mu,\mutilde)$ and redefining $\vfunc{\thp},\grad{\thp},\likfunc{\thp},$ and $\gradlik{\thp}$ as measurable functions on $\set{Z}$:
\begin{align}
	\vfunc{\thp}(z)&=\vfunc{\thp}(y,\mu)
	\eqdef\E_{(\mu\tensprod\indmeas)^{\tensprod \M}}\left[\log \left(\frac{1}{\M}\sum_{i=1}^{\M}\wgtfunc{\thp}(X^i,y,\auxrv^i)\right)\right], 
	\\\grad{\thp}(z)&=	\grad{\thp}(y,\mu,\mutilde)
	\eqdef
	\E_{(\mu\tensprod\indmeas)^{\tensprod \M}}\left[\frac{\sum_{i=1}^{\M}\nabla_\thp\wgtfunc{\thp}(X^i,y,\auxrv^i)}{\sum_{i=1}^{\M}\wgtfunc{\thp}(X^i,y,\auxrv^i)}\right]
	\\&\hspace{3cm}+\M\int \E_{(\mu\tensprod\indmeas)^{\tensprod (\M-1)}}\left[\log \left(\frac{1}{\M}\wgtfunc{\thp}(x,y,\auxrvb)+\frac{1}{\M}\sum_{i=1}^{\M-1}\wgtfunc{\thp}(X^i,y,\auxrv^i)\right)\right]\,\indmeas(d\auxrvb)\,\mutilde(dx), 
	\\\likfunc{\thp}(z)&=\likfunc{\thp}(y,\mu)\eqdef\log\int \emdens{\thp}( x',y)\,\hidker{\thp}( x,dx')\,\mu(dx),\label{eq:deflikfunc}
	\\\gradlik{\thp}(z)&=\gradlik{\thp}(y,\mu,\mutilde)
	\eqdef\frac{\int\nabla_{\thp} \{\emdens{\thp}(x',y)\hiddens{\thp}(x,x')\}\,\refm(dx')\,\mu(dx)+\int\emdens{\thp}(x',y)\,\hidker{\thp}(x,dx')\,\mutilde(dx)}{\int \emdens{\thp}(x',y)\,\hidker{\thp}(x,dx')\,\mu(dx)}.\label{eq:defgradlik}
\end{align}

In the following we establish strong law of large numbers for path averages of $\vfunc{\thp},\grad{\thp},\likfunc{\thp},$ and $\gradlik{\thp}$.
This result follows directly from Proposition~\ref{prop:slln} if we are able to show that the functions defined above are in $\lip{\alg{Z}}$, which requires some additional assumptions listed below. 
\begin{assumption}\label{assum:bounds_r_grad}
	The constants $\eq,\bg$ already defined in Assumptions~\ref{assum:eq}--\ref{assum:bg} satisfy the following additional properties: for all $\thp \in \parspace$, $(x,x')\in\set{X}^2$, $y\in\set{Y}$, and $\auxrvb\in\set{E}$, 
	$$
	\eq\le\propdens{\thp}(x,x',y)\le\eq^{-1},
	$$
	$$
	\max \left\{\norm{\nabla_\thp \hiddens{\thp}(x,\repfunc{\thp}(x,y,\auxrvb))},\norm{\nabla_\thp \emdens{\thp}(\repfunc{\thp}(x,y,\auxrvb),y)},\norm{\nabla_{\thp}\propdens{\thp}(x,\repfunc{\thp}(x,y,\auxrvb),y)}\right\}\le\bg.
	$$
\end{assumption}
The following lemma extends these uniform bounds to the weight function.
\begin{lemma}\label{lemma:boundw}
	Let Assumptions \ref{assum:eq}--\ref{assum:bg} and \ref{assum:bounds_r_grad} hold. Then $\eq^3\le\wgtfunc{\thp}(x,y,\auxrvb)\le\eq^{-3}$ and there exists $\bgw\in[1,\infty)$ such that or all $\thp \in \parspace$, $x\in\set{X}$, $y\in\set{Y}$, and $\auxrvb\in\set{E}$, 
	\begin{equation}\norm{\nabla_{\thp}\wgtfunc{\thp}(x,y,\auxrvb)}\le\bgw. 
	\end{equation}
\end{lemma}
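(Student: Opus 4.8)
The plan is to treat $\wgtfunc{\thp}$ as a ratio of three scalar functions of $\thp$ and to control it, together with its gradient, by combining the two-sided density bounds of Assumptions~\ref{assum:eq} and \ref{assum:bounds_r_grad} with the gradient bounds of the latter. For fixed $(x,y,\auxrvb)$, I would introduce the shorthand $a(\thp)\eqdef\hiddens{\thp}(x,\repfunc{\thp}(x,y,\auxrvb))$, $b(\thp)\eqdef\emdens{\thp}(\repfunc{\thp}(x,y,\auxrvb),y)$, and $c(\thp)\eqdef\propdens{\thp}(x,\repfunc{\thp}(x,y,\auxrvb),y)$, so that $\wgtfunc{\thp}(x,y,\auxrvb)=a(\thp)b(\thp)/c(\thp)$.

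For the two-sided bound, I first note that Assumption~\ref{assum:eq} applies to $\hiddens{\thp}$ and $\emdens{\thp}$ at \emph{every} pair of arguments, in particular at the reparameterised point $\repfunc{\thp}(x,y,\auxrvb)$, giving $\eq\le a,b\le\eq^{-1}$, while Assumption~\ref{assum:bounds_r_grad} gives $\eq\le c\le\eq^{-1}$. Hence $\eq^3=\eq^2/\eq^{-1}\le\wgtfunc{\thp}\le\eq^{-2}/\eq=\eq^{-3}$, which is the first claim.

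For the gradient bound, the key observation is that Assumption~\ref{assum:bounds_r_grad} already controls the \emph{total} $\thp$-gradients of the composed maps $a$, $b$, and $c$, that is, $\norm{\nabla_\thp a}\vee\norm{\nabla_\thp b}\vee\norm{\nabla_\thp c}\le\bg$, with the dependence of $\repfunc{\thp}$ on $\thp$ already folded into these bounds. I would then apply the product and quotient rules for total derivatives to write $\nabla_\thp\wgtfunc{\thp}=((\nabla_\thp a)b+a\nabla_\thp b)/c-ab\,\nabla_\thp c/c^2$, take the max norm, and invoke the triangle inequality together with the bounds above to obtain $\norm{\nabla_\thp\wgtfunc{\thp}}\le 2\eq^{-2}\bg+\eq^{-4}\bg\le 3\eq^{-4}\bg$, using $\eq\in(0,1)$ in the last step. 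Setting $\bgw\eqdef 3\eq^{-4}\bg$ and noting that $\eq\in(0,1)$ and $\bg\ge 1$ force $\bgw\ge 3\ge 1$ yields $\bgw\in[1,\infty)$, completing the argument.

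The calculation itself is elementary; the only point requiring care is not to double-count the chain rule through $\repfunc{\thp}$. Because the assumptions are phrased directly in terms of the composed gradients, which are exactly the quantities $\nabla_\thp a$, $\nabla_\thp b$, $\nabla_\thp c$ appearing after differentiating the composition, no separate control of $\nabla_\thp\repfunc{\thp}$ is needed, and the only potential obstacle — bounding $\nabla_\thp\repfunc{\thp}$ — never actually arises.
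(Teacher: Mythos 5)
Your proof is correct and follows essentially the same route as the paper's: the two-sided bound comes directly from the density bounds in Assumptions~\ref{assum:eq} and \ref{assum:bounds_r_grad}, and the gradient bound from the quotient rule combined with the assumed bounds on the composed gradients, exactly as in the paper (which obtains $\bgw = 2\eq^{-2}\bg+\eq^{-4}\bg$, marginally sharper than your $3\eq^{-4}\bg$, but the difference is immaterial). Your closing remark about not double-counting the chain rule through $\repfunc{\thp}$ is also the correct reading of Assumption~\ref{assum:bounds_r_grad}, and it is precisely how the paper's proof uses it.
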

\begin{proof}
	The first bound follows immediately from the definition of $\wgtfunc{\thp}$ and the assumed bounds on $\hiddens{\thp}$, $\emdens{\thp}$, and $\propdens{\thp}$. For the latter we write,
	\begin{multline}
		\norm{\nabla_{\thp}\wgtfunc{\thp}(x,y,\auxrvb)}
		\le\frac{\emdens{\thp}(\repfunc{\thp}(x,y,\auxrvb),y)\norm{\nabla_\thp\hiddens{\thp}(x,\repfunc{\thp}(x,y,\auxrvb))}+\hiddens{\thp}(x,\repfunc{\thp}(x,y,\auxrvb))\norm{\nabla_{\thp}\emdens{\thp}(\repfunc{\thp}(x,y,\auxrvb),y)}}{\propdens{\thp}(x,\repfunc{\thp}(x,y,\auxrvb),y)}
		\\+\frac{\hiddens{\thp}(x,\repfunc{\thp}(x,y,\auxrvb))\emdens{\thp}(\repfunc{\thp}(x,y,\auxrvb),y)}{\propdens{\thp}(x,\repfunc{\thp}(x,y,\auxrvb),y)^2}\norm{\nabla_{\thp}\propdens{\thp}(x,\repfunc{\thp}(x,y,\auxrvb),y)}\le 2\eq^{-2}\bg+\eq^{-4}\bg\eqqcolon\bgw.
	\end{multline}
\end{proof}
We are now ready to prove that $\vfunc{\thp},\grad{\thp},\likfunc{\thp},$ and $\gradlik{\thp}$ are all in $\lip{\alg{Z}}$.
\begin{lemma}\label{lemma:func_in _lipz}
	Let Assumptions \ref{assum:eq}, \ref{assum:bg}, and \ref{assum:bounds_r_grad} hold. Then for every $\thp\in\parspace$ and $\M\ge 2$ it holds that $\vfunc{\thp},\grad{\thp},\likfunc{\thp},$ and $\gradlik{\thp}$ are all in $\lip{\alg{Z}}$.
\end{lemma}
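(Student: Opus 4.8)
The plan is to verify, for each of the four functions, the two defining conditions of Definition~\ref{def:lipz}: the linear-growth bound (i) and the joint Lipschitz bound (ii). Throughout I rely on the uniform weight bounds $\eq^3\le\wgtfunc{\thp}\le\eq^{-3}$ and $\norm{\nabla_\thp\wgtfunc{\thp}}\le\bgw$ of Lemma~\ref{lemma:boundw}, together with the fact that $\refm$ is a probability measure (Remark~\ref{rem:mu:probability}). The two ``analytic'' functions $\likfunc{\thp}$ and $\gradlik{\thp}$ are handled by elementary ratio estimates, whereas the two ``Monte Carlo'' functions $\vfunc{\thp}$ and $\grad{\thp}$, being integrals against the product measures $(\mu\tensprod\indmeas)^{\tensprod\M}$, require a telescoping (hybrid) argument over the $\M$ independent coordinates.

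For $\likfunc{\thp}(y,\mu)=\log\int\emdens{\thp}(x',y)\,\hidker{\thp}(x,dx')\,\mu(dx)$ the integrand is the bounded map $x\mapsto\lker{\thp}{y}\1{\set{X}}(x)\in[\eq^2,\eq^{-2}]$, so the argument of the logarithm lies in $[\eq^2,\eq^{-2}]$; this gives (i) at once (there is no $\mutilde$-dependence), and (ii) follows by combining the Lipschitz constant $\eq^{-2}$ of $\log$ on $[\eq^2,\eq^{-2}]$ with $\abs{(\mu-\mu')\lker{\thp}{y}\1{\set{X}}}\le\eq^{-2}\tvnorm{\mu-\mu'}$. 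For $\gradlik{\thp}$ I write it as the ratio $(\mu\lkertang{\thp}{y}\1{\set{X}}+\mutilde\lker{\thp}{y}\1{\set{X}})/(\mu\lker{\thp}{y}\1{\set{X}})$: the denominator is bounded below by $\eq^2$, Lemma~\ref{lemma:bound_lkertang} bounds $\tvnorm{\mu\lkertang{\thp}{y}}$ by $2\bg\eq^{-1}$, and $\abs{\mutilde\lker{\thp}{y}\1{\set{X}}}\le\eq^{-2}\tvnorm{\mutilde}$, yielding (i). For (ii) I use the standard identity $N/D-N'/D'=(N-N')/D-N'(D-D')/(DD')$ and estimate each piece with the same bounds; the term carrying $\tvnorm{\mutilde'}$ multiplies $\tvnorm{\mu-\mu'}$ and so lands precisely in the cross term $(1+\tvnorm{\mutilde}+\tvnorm{\mutilde'})\tvnorm{\mu-\mu'}$ of Definition~\ref{def:lipz}(ii).

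The substantive work is (ii) for $\vfunc{\thp}$ and $\grad{\thp}$. Since the argument of every logarithm here is an average of weights in $[\eq^3,\eq^{-3}]$, all these functions are bounded, which gives (i) for $\vfunc{\thp}$ directly; for $\grad{\thp}$ the first (``score'') term is bounded by $\bgw\eq^{-3}$ (numerator $\le\M\bgw$, denominator $\ge\M\eq^3$) and the second term is linear in $\mutilde$ with a bounded kernel $\Psi_\mu(x)$, namely the inner $(\M-1)$-fold expectation appearing in the definition of $\grad{\thp}$, whence the $\tvnorm{\mutilde}$-growth. To get the Lipschitz estimates I replace $\mu$ by $\mu'$ one coordinate at a time inside the $\M$-fold (resp. $(\M-1)$-fold) product expectation; each swap contributes $\int g\,d(\mu-\mu')$, where $g$ is the conditional expectation of the integrand in a single coordinate. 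Since $\mu,\mu'$ are probability measures and $\mutilde,\mutilde'$ are zero-mean (they lie in $\signmeaszp{\alg{X}}$), I bound each increment by $\tfrac12\,\mathrm{osc}(g)\,\tvnorm{\mu-\mu'}$, the point being that the one-coordinate oscillation of $\log\bigl(\tfrac1\M\sum_i\wgtfunc{\thp}\bigr)$ is controlled because the sum is at least $\M\eq^3$. For $\grad{\thp}$ I additionally use $\mutilde\1{\set{X}}=0$ to centre $\Psi_\mu$ before integrating against $\mutilde$, and split the second term as $\M(\mutilde-\mutilde')\Psi_\mu+\M\mutilde'(\Psi_\mu-\Psi_{\mu'})$: the first summand gives the $\tvnorm{\mutilde-\mutilde'}$ contribution, and the second, via the telescoping bound on $\Psi_\mu-\Psi_{\mu'}$, gives the $\tvnorm{\mutilde'}\tvnorm{\mu-\mu'}$ cross term.

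The main obstacle is exactly this last cross term for $\grad{\thp}$: one must simultaneously track the change of the sampling measure $\mu$ inside a product expectation and the linear dependence on $\mutilde$, and check that the resulting estimate factors as $(1+\tvnorm{\mutilde}+\tvnorm{\mutilde'})\tvnorm{\mu-\mu'}$ rather than, say, producing a $\tvnorm{\mutilde}\tvnorm{\mu-\mu'}^2$ or an uncontrolled constant. For the bare membership claim any finite (possibly $\M$-dependent) constant $\lipz{f}$ suffices, so the crude sup-norm telescoping already closes the argument; the finer oscillation bookkeeping, which keeps the constants uniform in $\M$, is what will later feed the $\mathcal{O}(\M^{-1})$ results.
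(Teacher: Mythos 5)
Your proposal is correct and follows essentially the same route as the paper's proof: verify conditions (i)--(ii) of Definition~\ref{def:lipz} for each of the four functions using the uniform weight bounds of Lemma~\ref{lemma:boundw}, elementary ratio estimates for $\likfunc{\thp}$ and $\gradlik{\thp}$, a coordinate-by-coordinate telescoping of $(\mu\tensprod\indmeas)^{\tensprod\M}$ for $\vfunc{\thp}$ and $\grad{\thp}$, the zero-mean property $\mutilde\1{\set{X}}=0$ to centre the logarithmic term, and the split $\M(\mutilde-\mutilde')\Psi_\mu+\M\mutilde'(\Psi_\mu-\Psi_{\mu'})$ for the cross term. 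The only differences are cosmetic: the paper is content with $\M$-dependent Lipschitz constants (e.g.\ $\M\log\eq^{-3}$ and $2\M\eq^{-6}\bgw$), which Definition~\ref{def:lipz} permits, whereas your oscillation bookkeeping would sharpen these to constants uniform in $\M$ --- a refinement the lemma does not require (and which, incidentally, is not how the paper obtains its $\mathcal{O}(\M^{-1})$ bias bounds, which come from a separate argument).
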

\begin{proof}
	We begin with $\vfunc{\thp}$, noting that
	\begin{equation}
		\abs{\vfunc{\thp}(z)}\le\E_{(\mu\tensprod\indmeas)^{\tensprod \M}}\left[\abs{\log \left(\frac{1}{\M}\sum_{i=1}^{\M}\wgtfunc{\thp}(X^i,y,\auxrv^i)\right)}\right]\le \log \eq^{-3}\le \log\eq^{-3}(1+\tvnorm{\mutilde}),
	\end{equation}
	since by Lemma~\ref{lemma:boundw}, $\eq^3\le \M^{-1}\sum_{i=1}^{\M}\wgtfunc{\thp}(X^i,y,\auxrv^i)\le\eq^{-3}$. To check condition (ii) of Definition~\ref{def:lipz}, we write
	\begin{align}
		\abs{\vfunc{\thp}(x,y,\mu,\mutilde)-\vfunc{\thp}(x,y,\mu',\mutilde')}
		&\le\int\abs{\log \left(\frac{1}{\M}\sum_{i=1}^{\M}\wgtfunc{\thp}(x^i,y,\auxrvb^i)\right)}\indmeas^{\tensprod\M}(d\auxrvb^{1:\M})\abs{\mu^{\tensprod\M}-\mu'^{\tensprod\M}}(dx^{1:\M})
		\\
        &\le \log\eq^{-3}\int\sum_{i=1}^\M\abs{\mu-\mu'}(dx^i)\prod_{j=1}^{i-1}\mu(dx^j)\prod_{j=i+1}^{\M}\mu(dx^j)
		\\
        &\le \M \log\eq^{-3}\tvnorm{\mu-\mu'}.
	\end{align}
	We continue with $\grad{\thp}$, focusing first on the first term of its definition. Using twice Lemma~\ref{lemma:boundw},
	\begin{equation}\norm{\E_{(\mu\tensprod\indmeas)^{\tensprod \M}}\left[\frac{\sum_{i=1}^{\M}\nabla_\thp\wgtfunc{\thp}(X^i,y,\auxrv^i)}{\sum_{i=1}^{\M}\wgtfunc{\thp}(X^i,y,\auxrv^i)}\right]}\le \E_{(\mu\tensprod\indmeas)^{\tensprod \M}}\left[\frac{\sum_{i=1}^{\M}\norm{\nabla_\thp\wgtfunc{\thp}(X^i,y,\auxrv^i)}}{\sum_{i=1}^{\M}\wgtfunc{\thp}(X^i,y,\auxrv^i)}\right]\le \eq^{-3}\bgw
	\end{equation}
	and  
	\begin{multline}\norm{\E_{(\mu\tensprod\indmeas)^{\tensprod \M}}\left[\frac{\sum_{i=1}^{\M}\nabla_\thp\wgtfunc{\thp}(X^i,y,\auxrv^i)}{\sum_{i=1}^{\M}\wgtfunc{\thp}(X^i,y,\auxrv^i)}\right]-\E_{(\mu'\tensprod\indmeas)^{\tensprod \M}}\left[\frac{\sum_{i=1}^{\M}\nabla_\thp\wgtfunc{\thp}(X^i,y,\auxrv^i)}{\sum_{i=1}^{\M}\wgtfunc{\thp}(X^i,y,\auxrv^i)}\right]}
		\\\le \int \frac{\sum_{i=1}^{\M}\norm{\nabla_\thp\wgtfunc{\thp}(x^i,y,\auxrvb^i)}}{\sum_{i=1}^{\M}\wgtfunc{\thp}(x^i,y,\auxrvb^i)}\prod_{i=1}^{\M}\indmeas(d\auxrvb^i)\sum_{j=1}^{\M}\abs{\mu-\mu'}(dx^j)\prod_{i'=1}^{j-1}\mu(dx^{i'})\prod_{i''=j+1}^{\M}\mu'(dx^{i''})
		\le \M \eq^{-3}\bgw\tvnorm{\mu-\mu'}.
	\end{multline}
	For the second term of $\grad{\thp}$ we note that, since $\int \mutilde(dx)=0$, for $\M\ge2$, 
	\begin{multline}
		\M\int \E_{(\mu\tensprod\indmeas)^{\tensprod (\M-1)}}\left[\log \left(\frac{1}{\M}\wgtfunc{\thp}(x,y,\auxrvb)+\frac{1}{\M}\sum_{i=1}^{\M-1}\wgtfunc{\thp}(X^i,y,\auxrv^i)\right)\right]\,\indmeas(d\auxrvb)\,\mutilde(dx)
		\\=	\M\int \E_{(\mu\tensprod\indmeas)^{\tensprod (\M-1)}}\left[\log \left(1+\frac{\wgtfunc{\thp}(x,y,\auxrvb)}{\sum_{i=1}^{\M-1}\wgtfunc{\thp}(X^i,y,\auxrv^i)}\right)\right]\,\indmeas(d\auxrvb)\,\mutilde(dx)
		\\+\M \E_{(\mu\tensprod\indmeas)^{\tensprod (\M-1)}}\left[\log \left(\frac{1}{\M}\sum_{i=1}^{\M-1}\wgtfunc{\thp}(X^i,y,\auxrv^i)\right)\right]\int\indmeas(d\auxrvb)\int\mutilde(dx),
	\end{multline}
	where the second term vanishes.  Then taking the norm and using again Lemma~\ref{lemma:boundw}, we obtain 
	\begin{multline}
		\norm{\M\int \E_{(\mu\tensprod\indmeas)^{\tensprod (\M-1)}}\left[\log \left(1+\frac{\wgtfunc{\thp}(x,y,\auxrvb)}{\sum_{i=1}^{\M-1}\wgtfunc{\thp}(X^i,y,\auxrv^i)}\right)\right]\,\indmeas(d\auxrvb)\,\mutilde(dx)}
		\\\le \M\log\left(\left(1+\frac{\eq^{-6}}{\M-1}\right)\right)\tvnorm{\mutilde}\le \frac{\M\eq^{-6}}{\M-1}\tvnorm{\mutilde}\le 2\eq^{-6}\tvnorm{\mutilde}.
	\end{multline}
	It remains to prove that part (ii) of Definition~\ref{def:lipz} is satisfied. For this purpose, write 
	\begin{multline}
		\left \| \M\int \E_{(\mu\tensprod\indmeas)^{\tensprod (\M-1)}}\left[\log \left(\frac{1}{\M}\wgtfunc{\thp}(x,y,\auxrvb)+\frac{1}{\M}\sum_{i=1}^{\M-1}\wgtfunc{\thp}(X^i,y,\auxrv^i)\right)\right]\,\indmeas(d\auxrvb)\,\mutilde(dx)\right.
		\\\left.-\M\int \E_{(\mu'\tensprod\indmeas)^{\tensprod (\M-1)}}\left[\log \left(\frac{1}{\M}\wgtfunc{\thp}(x,y,\auxrvb)+\frac{1}{\M}\sum_{i=1}^{\M-1}\wgtfunc{\thp}(X^i,y,\auxrv^i)\right)\right]\,\indmeas(d\auxrvb)\,\mutilde'(dx) \right\|
		\\\le \norm{\M\int \E_{(\mu\tensprod\indmeas)^{\tensprod (\M-1)}}\left[\log \left(1+\frac{\wgtfunc{\thp}(x,y,\auxrvb)}{\sum_{i=1}^{\M-1}\wgtfunc{\thp}(X^i,y,\auxrv^i)}\right)\right]\,\indmeas(d\auxrvb)\,(\mutilde-\mutilde')(dx)}
		\\+\int \left|\M\E_{(\mu\tensprod\indmeas)^{\tensprod (\M-1)}}\left[\log \left(1+\frac{\wgtfunc{\thp}(x,y,\auxrvb)}{\sum_{i=1}^{\M-1}\wgtfunc{\thp}(X^i,y,\auxrv^i)}\right)\right]\right.
		\\\left.-\M\E_{(\mu'\tensprod\indmeas)^{\tensprod (\M-1)}}\left[\log \left(1+\frac{\wgtfunc{\thp}(x,y,\auxrvb)}{\sum_{i=1}^{\M-1}\wgtfunc{\thp}(X^i,y,\auxrv^i)}\right)\right]\right|\,\indmeas(d\auxrvb)\,\abs{\mutilde'}(dx)
		\\\le 2\eq^{-6}\tvnorm{\mutilde-\mutilde'}+2\eq^{-6}\M \tvnorm{\mu-\mu'}\tvnorm{\mutilde'}.
	\end{multline}
	Finally, we have
	\begin{equation}\label{eq:boundH}
		\norm{\grad{\thp}(x, y,  \mu,\mutilde)}\le \eq^{-3}\bgw +2\eq^{-6}\tvnorm{\mutilde}\le 2\bgw\eq^{-6}(1+\tvnorm{\mutilde})
	\end{equation}
	and
	\begin{align}
		\norm{\grad{\thp}(x, y,  \mu,\mutilde)-\grad{\thp}(x, y,  \mu',\mutilde')}
		&\le  \M \eq^{-3}\bgw\tvnorm{\mu-\mu'} +2\eq^{-6}\tvnorm{\mutilde-\mutilde'}+2\eq^{-6}\M \tvnorm{\mu-\mu'}\tvnorm{\mutilde}
		\\&\le 2\eq^{-6}\tvnorm{\mutilde-\mutilde'}+2\M\eq^{-6}\bgw (1+\tvnorm{\mutilde'})\tvnorm{\mu-\mu'}
		\\&\le 2\M\eq^{-6}\bgw\left(\tvnorm{\mutilde-\mutilde'}+(1+\tvnorm{\mutilde}+\tvnorm{\mutilde'})\tvnorm{\mu-\mu'}\right).
	\end{align}
	For $\likfunc{\thp}$, we observe that $\abs{\likfunc{\thp}(x,y,\mu,\mutilde)}\le \log\eq^{-1}\le\log \eq^{-1}(1+\tvnorm{\mutilde})$ and 
	\begin{equation}
		\abs{\likfunc{\thp}(x,y,\mu,\mutilde)-\likfunc{\thp}(x,y,\mu',\mutilde')}
		\le \eq^{-1}\int \emdens{\thp}( x',y)\,\hidker{\thp}( x,dx')\,\abs{\mu-\mu'}(dx)\le \eq^{-2}\tvnorm{\mu-\mu'}.
	\end{equation}
	It remains to show that $\gradlik{\thp}\in\lipz{\alg{Z}}$. Indeed,  
	\begin{multline}
		\norm{\gradlik{\thp}(x,y,\mu,\mutilde)}
		\le\left(\int \emdens{\thp}(x',y)\,\hidker{\thp}(x,dx')\,\mu(dx)\right)^{-1}\left(\int\emdens{\thp}(x',y)\,\hidker{\thp}(x,dx')\,\abs{\mutilde}(dx)\right.
		\\\left.+\int \left(\emdens{\thp}(x',y)\norm{\nabla_\thp\hiddens{\thp}(x,x')}+\hiddens{\thp}(x,x')\norm{\nabla_\thp\emdens{\thp}(x',y)}\right)\,\refm(dx')\,\mu(dx)\right)
		\\\le \eq^{-1}\left(\eq^{-1}\tvnorm{\mutilde}+\eq^{-1}\bg+\bg\right)\le 2\eq^{-2}\bg(1+\tvnorm{\mutilde})
	\end{multline}
	and
	\begin{multline}
		\norm{\gradlik{\thp}(x,y,\mu,\mutilde)-\gradlik{\thp}(x,y,\mu',\mutilde')}
		\le \frac{\int\norm{\nabla_{\thp} \left(\emdens{\thp}(x',y)\hiddens{\thp}(x,x')\right)}\,\refm(dx')\,\abs{\mu-\mu'}(dx)}{\int \emdens{\thp}(x',y)\,\hidker{\thp}(x,dx')\,\mu(dx)}
		\\+\frac{\int\norm{\nabla_{\thp} \left(\emdens{\thp}(x',y)\hiddens{\thp}(x,x')\right)}\,\refm(dx')\,\mu'(dx)\int\emdens{\thp}(x',y)\,\hidker{\thp}(x,dx')\,\abs{\mu'-\mu}(dx)}{\int \emdens{\thp}(x',y)\,\hidker{\thp}(x,dx')\,\mu'(dx)\int \emdens{\thp}(x',y)\,\hidker{\thp}(x,dx')\,\mu(dx)}
		\\+ \frac{\int\emdens{\thp}(x',y)\,\hidker{\thp}(x,dx')\,\abs{\mutilde-\mutilde'}(dx)}{\int \emdens{\thp}(x',y)\,\hidker{\thp}(x,dx')\,\mu(dx)}
		+\frac{\int\emdens{\thp}(x',y)\,\hidker{\thp}(x,dx')\,\abs{\mutilde'}(dx)\int\emdens{\thp}(x',y)\,\hidker{\thp}(x,dx')\,\abs{\mu'-\mu}(dx)}{\int \emdens{\thp}(x',y)\,\hidker{\thp}(x,dx')\,\mu'(dx)\int \emdens{\thp}(x',y)\,\hidker{\thp}(x,dx')\,\mu(dx)}
		\\\le2\eq^{-2}\bg \tvnorm{\mu-\mu'}+2\eq^{-4}\bg\tvnorm{\mu-\mu'}+ \eq^{-2}\tvnorm{\mutilde-\mutilde'}+\eq^{-4}\tvnorm{\mutilde'}\tvnorm{\mu-\mu'}
		\\\le 2\eq^{-4}\bg\left(\tvnorm{\mutilde-\mutilde'}+(1+\tvnorm{\mutilde}+\tvnorm{\mutilde'})\tvnorm{\mu-\mu'}\right).
	\end{multline}
	This completes the proof. 
\end{proof}	

	We are finally ready to prove the existence of the contrast function $\contr(\thp)$ and the COLBO $\lyap(\thp)$ as well as their gradients as $\prob$-a.s. limits.
	\begin{proposition}\label{prop:all_conv_slln}
		Let Assumptions \ref{assum:eq}, \ref{assum:bg}, \ref{assum:ssm_unspec}, \ref{assum:bounds_r_grad}, and \ref{assum:init} hold. Then there exist two real-valued differentiable functions $\lyap$ and $\contr$ on $\parspace$ such that for every $\thp\in\parspace$ and $\M\ge2$, $\prob$-a.s., 
		\begin{align}
			&\lim_{T \to \infty}\frac{1}{T}\sum_{t=0}^{T-1}\vfunc{\thp}(\z{t}{\thp})=\contr^\M(\thp),& &\lim_{T \to \infty}\frac{1}{T}\sum_{t=0}^{T-1}\grad{\thp}(\z{t}{\thp})=\nabla_\thp\contr^\M(\thp),
			\\
        &\lim_{T \to \infty}\frac{1}{T}\sum_{t=0}^{T-1}\likfunc{\thp}(\z{t}{\thp})=\contr(\thp),& &\lim_{T \to \infty}\frac{1}{T}\sum_{t=0}^{T-1}\gradlik{\thp}(\z{t}{\thp})=\nabla_\thp\contr(\thp).
		\end{align}
		Moreover, the same limits hold when the terms of each sum are replaced by their expectations. 
		\end{proposition}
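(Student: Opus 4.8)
The plan is to obtain all four limits as an immediate application of the strong law of large numbers in Proposition~\ref{prop:slln}, and then to upgrade the two gradient limits to genuine gradients of the two scalar limits by a differentiation-under-the-limit argument. First I would invoke Lemma~\ref{lemma:func_in _lipz}, which guarantees that for every $\thp \in \parspace$ and $\M \ge 2$ the four maps $\vfunc{\thp}, \grad{\thp}, \likfunc{\thp}, \gradlik{\thp}$ belong to $\lip{\alg{Z}}$. Fixing $\thp$ and applying Proposition~\ref{prop:slln} to each of these (with the parameter held at the same $\thp$ that drives the chain $(\z{t}{\thp})_{t}$) yields, $\prob$-a.s.\ and in expectation, the convergence of the four path averages to the constants $\tconst{\thp}\vfunc{\thp}$, $\tconst{\thp}\grad{\thp}$, $\tconst{\thp}\likfunc{\thp}$, $\tconst{\thp}\gradlik{\thp}$, respectively. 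I would then define $\contr^\M(\thp) \eqdef \tconst{\thp}\vfunc{\thp}$ and $\contr(\thp) \eqdef \tconst{\thp}\likfunc{\thp}$ (scalar, since $\vfunc{\thp}$ and $\likfunc{\thp}$ are scalar) and write $g^\M(\thp) \eqdef \tconst{\thp}\grad{\thp}$, $g(\thp) \eqdef \tconst{\thp}\gradlik{\thp}$ for the two vector limits. This already establishes every displayed limit except the identification $g^\M = \nabla_\thp\contr^\M$ and $g = \nabla_\thp\contr$, together with the differentiability of $\contr^\M$ and $\contr$.

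The crux is precisely this identification, i.e.\ interchanging $\lim_T$ with $\nabla_\thp$. I would route the argument through expectations, where uniform-in-$\thp$ convergence is available. The key pathwise identity, established in the derivation of $\grad{\thp}$ in Section~\ref{sec:method} together with Lemma~\ref{lemma:filt_tang} (which gives $\tang{\thp}{t} = \nabla_\thp\filt{\thp}{t}$), is that $\thp \mapsto \vfunc{\thp}(\z{t}{\thp}) = \vfunc{\thp}(Y_{t+1}, \filt{\thp}{t})$ is differentiable with total derivative $\nabla_\thp \vfunc{\thp}(\z{t}{\thp}) = \grad{\thp}(\z{t}{\thp})$ (note that $Y_{t+1}$ is $\thp$-free), and likewise $\nabla_\thp\likfunc{\thp}(\z{t}{\thp}) = \gradlik{\thp}(\z{t}{\thp})$. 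Since $\tvnorm{\tang{\thp}{t}} \le \ctang$ by Lemma~\ref{lemma:bound_tang}, the $\lip{\alg{Z}}$-bound of Lemma~\ref{lemma:func_in _lipz} gives $\norm{\grad{\thp}(\z{t}{\thp})} \le \lipz{\grad{\thp}}(1 + \ctang)$ uniformly in $t$ and $\thp$, so dominated convergence lets me differentiate under the expectation: setting $\bar S_T(\thp) \eqdef T^{-1}\sum_{t=0}^{T-1}\E[\vfunc{\thp}(\z{t}{\thp})]$, I obtain $\nabla_\thp \bar S_T(\thp) = T^{-1}\sum_{t=0}^{T-1}\E[\grad{\thp}(\z{t}{\thp})] \eqdef \bar G_T(\thp)$.

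Finally I would exploit that the convergence-rate bound in the proof of Proposition~\ref{prop:slln} depends on the test function only through its Lipschitz constant, and that the constants $\lipz{\vfunc{\thp}}, \lipz{\grad{\thp}}$ produced in the proof of Lemma~\ref{lemma:func_in _lipz} depend only on $\eq, \bg$ and $\M$, hence are uniform in $\thp$. Consequently $\bar S_T \to \contr^\M$ and $\nabla_\thp \bar S_T = \bar G_T \to g^\M$ both uniformly on $\parspace$, so the classical theorem on uniform convergence of derivatives (applied on a convex neighbourhood of each point) shows that $\contr^\M$ is differentiable with $\nabla_\thp\contr^\M = g^\M$; the same reasoning with $\likfunc{\thp}, \gradlik{\thp}$ gives $\nabla_\thp\contr = g$. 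Since the $\prob$-a.s.\ and expectation limits in Proposition~\ref{prop:slln} coincide (both equalling the $\tconst{\thp}$-constant), this is exactly the claimed identification, and it simultaneously yields the expectation version of all four statements.

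The main obstacle is this last interchange of limit and gradient: everything hinges on the uniform-in-$\thp$ control of the convergence, which in turn relies on the $\thp$-independence of the Lipschitz constants delivered by Lemma~\ref{lemma:func_in _lipz} and on the uniform tangent-filter bound $\ctang$ from Lemma~\ref{lemma:bound_tang}. It is precisely because the a.s.\ path averages need not converge uniformly in $\thp$ that the argument must be carried out at the level of expectations, where Proposition~\ref{prop:slln} provides a rate uniform in $\thp$.
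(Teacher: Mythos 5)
Your proposal is correct and follows essentially the same route as the paper: membership of the four functions in $\lip{\alg{Z}}$ (Lemma~\ref{lemma:func_in _lipz}) plus the strong LLN of Proposition~\ref{prop:slln} gives all four limits, the exchange $\nabla_\thp\E[\vfunc{\thp}(\z{t}{\thp})]=\E[\grad{\thp}(\z{t}{\thp})]$ is justified by differentiating under the expectation (the paper does this by writing out the trajectory integral explicitly, you via the pathwise identity and domination from Lemma~\ref{lemma:bound_tang} --- the same idea), and the identification $\nabla_\thp\contr^\M=\lim_T \bar G_T$ is obtained exactly as in the paper from the uniform-in-$\thp$ convergence rate and the uniform convergence theorem for derivatives.
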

	\begin{proof}
		The limits follow from Proposition~\ref{prop:ergod} and Proposition~\ref{prop:slln}, respectively, since all  functions are in $\lip{\alg{Z}}$, as shown in Lemma~\ref{lemma:func_in _lipz}. It remains to show that the limits for $\grad{\thp}$ and $\gradlik{\thp}$ are the gradients of $\lyap(\thp)$ and $\contr(\thp)$, respectively. Indeed, we have
		\begin{equation}\label{eq:grad_vfunc}
			\frac{1}{T}\sum_{t=0}^{T-1}\nabla_\thp\E[\vfunc{\thp}(\z{t}{\thp})]=\frac{1}{T}\sum_{t=0}^{T-1}\nabla_\thp\E[\tz{\thp}^t\vfunc{\thp}(\z{0}{\thp})]
			=\frac{1}{T}\sum_{t=0}^{T-1}\E[\tz{\thp}^t\grad{\thp}(\z{0}{\thp})]=\frac{1}{T}\sum_{t=0}^{T-1}\E[\grad{\thp}(\z{t}{\thp})],
		\end{equation}
		which converges uniformly in $\thp$ as $T\to \infty$ by Proposition~\ref{prop:slln}. The second equality follows from
		\begin{multline}
			\nabla_\thp\E[\tz{\thp}^t\vfunc{\thp}(\z{0}{\thp})]= \nabla_{\thp}\int \vfunc{\thp}(x_{t+1},y_{t+1}, \recf{\thp}^{t}(\filt{\thp}{0},y_{1:t}),\rect{\thp}^{t}(\filt{\thp}{0},\tang{\thp}{0},y_{1:t}))
			\\\times\xinit(dx_0) \, \emker{}(x_0,dy_0)\prod_{s=0}^{t} \skertrue((x_{s}, y_{s}),(dx_{s+1},dy_{s+1}))
			\\=\int \grad{\thp}(x_{t+1},y_{t+1}, \recf{\thp}^{t}(\filt{\thp}{0},y_{1:t}),\rect{\thp}^{t}(\filt{\thp}{0},\tang{\thp}{0},y_{1:t}))\,\xinit(dx_0) \, \emker{}(x_0,dy_0)\prod_{s=0}^{t} \skertrue((x_{s}, y_{s}),(dx_{s+1},dy_{s+1}))
			\\=\E[\tz{\thp}^t\grad{\thp}(\z{0}{\thp})].
		\end{multline}
		Then, since
		\begin{equation}
			\lim_{T\to\infty}\frac{1}{T}\sum_{t=0}^{T-1}\E[\vfunc{\thp}(\z{t}{\thp})]=\contr^\M(\thp)
		\end{equation}
		uniformly in $\thp$ by Proposition~\ref{prop:slln}, the uniform convergence theorem states that the limit of \eqref{eq:grad_vfunc} is $\nabla_\thp\lyap(\thp)$. The same argument can be applied to $\gradlik{\thp}$, which concludes the proof.
	\end{proof}
	
	\subsection{Bias study}\label{subsec:bias}
	In this section we study the bias of the {\colbo} w.r.t. the contrast function and between the gradient functions $\grad{\thp}$ and $\gradlik{\thp}$.
	\begin{proposition}\label{prop:biascontr}
		Let Assumptions \ref{assum:eq}, \ref{assum:bg}, \ref{assum:ssm_unspec}, \ref{assum:bounds_r_grad}, and \ref{assum:init} hold. 
		Then for all $\thp\in\parspace$ and $\M\in\nsetpos$, $\contr(\thp)\ge \contr^{\M+1}(\thp)\ge \contr^\M(\thp)$. Moreover, 
		\begin{equation}
		\contr(\thp)-\lyap(\thp)=\frac{\eq^{-8}}{2\M}+\mathcal{O}\left(\frac{1}{\M^2}\right).
		\end{equation}
	\end{proposition}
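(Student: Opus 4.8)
The plan is to reduce both parts of the proposition to a single-step comparison between $\likfunc{\thp}(y,\mu)=\log\E_{\mu\tensprod\indmeas}[\wgtfunc{\thp}(X,y,\auxrv)]$ and its $M$-sample counterpart $\vfunc{\thp}(y,\mu)=\E_{(\mu\tensprod\indmeas)^{\tensprod \M}}[\log(\frac1\M\sum_{i=1}^\M\wgtfunc{\thp}(X^i,y,\auxrv^i))]$, and then to transfer the resulting pointwise estimate to the ergodic objectives. By Proposition~\ref{prop:all_conv_slln} we have $\contr(\thp)=\int\likfunc{\thp}(y,\mu)\,\bar{\Pi}_\thp(dy,d\mu)$ and $\lyap(\thp)=\contr^\M(\thp)=\int\vfunc{\thp}(y,\mu)\,\bar{\Pi}_\thp(dy,d\mu)$, where both integrands depend on the extended state only through $(y,\mu)$; hence the gap $\likfunc{\thp}-\vfunc{\thp}$ is itself a function of $(y,\mu)$, and any uniform pointwise bound transfers directly to $\contr-\lyap$. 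For fixed $(y,\mu)$ the weights $(\wgtfunc{\thp}(X^i,y,\auxrv^i))_i$ are i.i.d.\ with common mean $Z_\thp(y,\mu)\eqdef\E_{\mu\tensprod\indmeas}[\wgtfunc{\thp}(X,y,\auxrv)]$, so the classical IWAE inequalities of \citep[Theorem~1]{burda:2016} and \citep[Proposition~1]{nowozin:2018} show that $\vfunc{\thp}(y,\mu)$ increases with the sample size and is dominated by $\likfunc{\thp}(y,\mu)$ pointwise. Integrating against the probability measure $\bar{\Pi}_\thp$ preserves the ordering, which yields $\contr(\thp)\ge\contr^{M+1}(\thp)\ge\contr^\M(\thp)$.

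For the quantitative part I would expand the single-step gap. Writing $\hat Z_M=\frac1\M\sum_{i=1}^\M\wgtfunc{\thp}(X^i,y,\auxrv^i)$ and $\delta=(\hat Z_M-Z_\thp)/Z_\thp$, one has $\E[\delta]=0$ and $\E[\delta^2]=\V_{\mu\tensprod\indmeas}[\wgtfunc{\thp}]/(\M Z_\thp^2)$, while the higher central moments of $\hat Z_M$ are $\mathcal{O}(\M^{-2})$. Taylor-expanding $\log(1+\delta)$ and taking expectations (this is exactly the computation behind \citep[Proposition~1]{nowozin:2018}) gives
\begin{equation}
\likfunc{\thp}(y,\mu)-\vfunc{\thp}(y,\mu)=\frac{\V_{\mu\tensprod\indmeas}[\wgtfunc{\thp}(X,y,\auxrv)]}{2\M\,Z_\thp(y,\mu)^2}+\mathcal{O}(\M^{-2}).
\end{equation}

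Next I would bound the leading coefficient uniformly. By Lemma~\ref{lemma:boundw}, $\eq^3\le\wgtfunc{\thp}\le\eq^{-3}$, so $\V_{\mu\tensprod\indmeas}[\wgtfunc{\thp}]\le\E[\wgtfunc{\thp}^2]\le\eq^{-6}$. Moreover, since $Z_\thp(y,\mu)=\int(\int\emdens{\thp}(x',y)\,\hidker{\thp}(x,dx'))\,\mu(dx)$ with $\emdens{\thp}\ge\eq$ and $\hidker{\thp}(x,\cdot),\mu$ probability measures, we get $Z_\thp(y,\mu)\ge\eq$, i.e.\ $Z_\thp^{-2}\le\eq^{-2}$. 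Hence the leading coefficient is at most $\eq^{-8}/2$, uniformly in $(\thp,y,\mu)$. Integrating the single-step expansion against $\bar{\Pi}_\thp$ then gives $\contr(\thp)-\lyap(\thp)=\frac{1}{2\M}\int\V_{\mu\tensprod\indmeas}[\wgtfunc{\thp}]/Z_\thp^2\,\bar{\Pi}_\thp(dy,d\mu)+\mathcal{O}(\M^{-2})\le\eq^{-8}/(2\M)+\mathcal{O}(\M^{-2})$, which is the stated estimate.

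The main obstacle is ensuring the remainder is genuinely $\mathcal{O}(\M^{-2})$ \emph{uniformly} in $(\thp,y,\mu)$, because the Taylor expansion of $\log(1+\delta)$ is valid only away from $\delta=-1$ and requires control of the low-order moments of $\hat Z_M$. This is exactly where the two-sided weight bounds of Lemma~\ref{lemma:boundw} are essential: $\hat Z_M\ge\eq^3$ and $Z_\thp\le\eq^{-3}$ force $1+\delta=\hat Z_M/Z_\thp\ge\eq^6>0$, so $\log$ and all its derivatives stay bounded on the admissible range and the central moments obey $\E[|\hat Z_M-Z_\thp|^k]\le C_k\M^{-\lceil k/2\rceil}$ with constants $C_k$ depending only on $\eq$. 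A clean way to make this rigorous, avoiding a delicate termwise truncation of the log-series, is to write the remainder in Lagrange integral form and bound it directly using these uniform moment estimates; uniformity in $\thp$ and across the continuum of states $(y,\mu)$ then follows because every constant that appears depends on $\eq$ only.
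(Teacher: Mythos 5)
Your proof is correct and follows essentially the same route as the paper: both reduce the claim to a pointwise comparison of $\likfunc{\thp}$ and $\vfunc{\thp}$, invoke Burda's Theorem~1 for the monotonicity and Nowozin's Proposition~1 for the $\mathcal{O}(\M^{-1})$ expansion, and bound the leading coefficient by $\eq^{-8}/2$ via the weight bounds of Lemma~\ref{lemma:boundw} together with $\E_{\mu\tensprod\indmeas}[\wgtfunc{\thp}(X,y,\auxrv)]\ge\eq$. The only notable difference is the transfer to the asymptotic objectives: you integrate the pointwise expansion against the stationary marginal $\bar{\Pi}_\thp$, whereas the paper's appendix deliberately avoids constructing $\Pi_\thp$ and instead passes the uniform pointwise bounds through the Ces\`aro limits of Proposition~\ref{prop:all_conv_slln} (a trivially interchangeable step given that your bounds are uniform in $(\thp,y,\mu)$, and your discussion of the uniformity of the $\mathcal{O}(\M^{-2})$ remainder is in fact more explicit than the paper's).
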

	\begin{proof}
	To prove this we use the results of \citet{nowozin:2018}. First, we rewrite $\likfunc{\thp}$ as
	\begin{multline}
		\likfunc{\thp}(z)=\likfunc{\thp}(x,y,\mu,\mutilde)=\log\int \emdens{\thp}( x',y)\hiddens{\thp}( x,x')\,\refm(dx')\,\mu(dx)
		\\=\log\int \wgtfunc{\thp}( x,y,\auxrvb)\,\indmeas(d\auxrvb)\,\mu(dx)=\log\E_{\mu\tensprod\indmeas}\left[\wgtfunc{\thp}( X,y,\auxrv)\right],
	\end{multline}
	so that we may express
	\begin{equation}
		\likfunc{\thp}(z)-\vfunc{\thp}(z)=\log\E_{\mu\tensprod\indmeas}\left[\wgtfunc{\thp}( X,y,\auxrv)\right]-\E_{(\mu\tensprod\indmeas)^{\tensprod \M}}\left[\log \left(\frac{1}{\M}\sum_{i=1}^{\M}\wgtfunc{\thp}(X^i,y,\auxrv^i)\right)\right].
	\end{equation}
	Now, since all the moments of $\wgtfunc{\thp}( X,y,\auxrv)$ are finite, being $\wgtfunc{\thp}$ bounded, then we may apply \citet[Proposition~1]{nowozin:2018}. Thus, for all $\thp\in\parspace$ and $z\in\set{Z}$, 
	\begin{equation}
		\likfunc{\thp}(z)-\vfunc{\thp}(z)=\frac{\E_{\mu\tensprod\indmeas}[\left(\wgtfunc{\thp}(X,y,\auxrv)\right)^2]}{2\M\left(\E_{\mu\tensprod\indmeas}\left[\wgtfunc{\thp}( X,y,\auxrv)\right]\right)^2}+\mathcal{O}\left(\frac{1}{\M^2}\right)\le \frac{\eq^{-8}}{2\M}+\mathcal{O}\left(\frac{1}{\M^2}\right).
	\end{equation}
	Then, by Proposition~\ref{prop:all_conv_slln} we have
	\begin{equation}\label{eq:bias_vfunc}
		\contr(\thp)-\lyap(\thp)
		= \lim_{t\to\infty}\frac{1}{t}\sum_{s=0}^{t-1}\E\left[\likfunc{\thp}(\z{s}{\thp})-\vfunc{\thp}(\z{s}{\thp})\right]
		\le \frac{\eq^{-8}}{2\M}+\mathcal{O}\left(\frac{1}{\M^2}\right).
	\end{equation}
	The monotonicity in $\M$ follows from \citep[Theorem~1]{burda:2016}, which states that 
	\begin{equation}
		\likfunc{\thp}(z)\ge \likfunc{\thp}^{\M+1}(z)\ge \vfunc{\thp}(z)
	\end{equation}
	for all $z\in\set{Z}$ and $\M\in\nsetpos$; then the claim is proven by expressing $\contr(\thp)-\lyap(\thp)$ and $\contr^{\M+1}(\thp)-\lyap(\thp)$ in the same way as in \eqref{eq:bias_vfunc} and using that nonnegative sequences have nonnegative limits. 
 This concludes the proof.
	\end{proof}
	
	\begin{theorem}\label{thm:bias_grad_gen}
		Let Assumptions \ref{assum:eq}, \ref{assum:bg}, and \ref{assum:bounds_r_grad} hold. 
		Then, for all $\thp\in\parspace$, $z\in\set{Z}$, and $\M> \eq^{-6}+1$, 
	\begin{equation}
		\norm{\grad{\thp}(z)-\gradlik{\thp}(z)}\le \frac{\bq_1}{\M}		+\frac{(\bq_2\eq^{-3}+2\eq^{-6})\tvnorm{\mutilde}}{\M-1}+2\eq^{-6}\tvnorm{\mutilde}\sum_{j=2}^{\infty} \frac{1}{j+1}\left(\frac{ \eq^{-6}}{M-1}\right)^{j},
	\end{equation}
	where $\bq_1$ and $\bq_2$ are the constants provided in Lemma~\ref{lemma:bias_gen}, depending only on $\eq$ and $\bgw$.
	\end{theorem}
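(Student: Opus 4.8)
The plan is to express both $\grad{\thp}$ and $\gradlik{\thp}$ through the reparameterised weight $\wgtfunc{\thp}$ and then compare them term by term. Writing $\bar{w}_\thp(x,y)\eqdef\int\wgtfunc{\thp}(x,y,\auxrvb)\,\indmeas(d\auxrvb)$, the pushforward property of $\repfunc{\thp}$ gives the identity $\int\emdens{\thp}(x',y)\hiddens{\thp}(x,x')\,\refm(dx')=\bar{w}_\thp(x,y)$, and, since $\indmeas$ does not depend on $\thp$, also $\int\nabla_\thp\{\emdens{\thp}(x',y)\hiddens{\thp}(x,x')\}\,\refm(dx')=\int\nabla_\thp\wgtfunc{\thp}(x,y,\auxrvb)\,\indmeas(d\auxrvb)$. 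Substituting these into \eqref{eq:defgradlik} recasts $\gradlik{\thp}(z)$, with $z=(x,y,\mu,\mutilde)$, as
\[
\gradlik{\thp}(z)=\frac{\int\int\nabla_\thp\wgtfunc{\thp}(x,y,\auxrvb)\,\indmeas(d\auxrvb)\,\mu(dx)}{\int\bar{w}_\thp(x,y)\,\mu(dx)}+\frac{\int\bar{w}_\thp(x,y)\,\mutilde(dx)}{\int\bar{w}_\thp(x,y)\,\mu(dx)}.
\]
These two summands are precisely the $\M\to\infty$ limits of the two summands defining $\grad{\thp}(z)$, so I split the difference $\grad{\thp}(z)-\gradlik{\thp}(z)$ accordingly into a \emph{first-term bias} and a \emph{second-term bias}.

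The first-term bias is the deviation of the self-normalised average $\E_{(\mu\tensprod\indmeas)^{\tensprod\M}}[\sum_i\nabla_\thp\wgtfunc{\thp}(X^i,y,\auxrv^i)/\sum_i\wgtfunc{\thp}(X^i,y,\auxrv^i)]$ from the ratio of population means. This is the classical bias of a self-normalised importance-sampling estimator; expanding the ratio around its two population means and invoking the uniform bounds $\eq^3\le\wgtfunc{\thp}\le\eq^{-3}$ and $\norm{\nabla_\thp\wgtfunc{\thp}}\le\bgw$ of Lemma~\ref{lemma:boundw} shows it to be $\mathcal{O}(\M^{-1})$. This is exactly what Lemma~\ref{lemma:bias_gen} quantifies, contributing the term $\bq_1/\M$; crucially, it carries no $\tvnorm{\mutilde}$ factor, since neither quantity involves $\mutilde$.

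For the second-term bias I first use that $\mutilde\in\signmeaszp{\alg{X}}$ has zero total mass. Setting $S_{\M-1}\eqdef\sum_{i=1}^{\M-1}\wgtfunc{\thp}(X^i,y,\auxrv^i)$ and factoring the logarithm's argument as $\tfrac1\M S_{\M-1}(1+\wgtfunc{\thp}(x,y,\auxrvb)/S_{\M-1})$, the $\mutilde$-free summand $\log(S_{\M-1}/\M)$ integrates to zero against $\indmeas(d\auxrvb)\,\mutilde(dx)$, reducing the $\log$-term of $\grad{\thp}$ to $\M\int\int\E[\log(1+\wgtfunc{\thp}(x,y,\auxrvb)/S_{\M-1})]\,\indmeas(d\auxrvb)\,\mutilde(dx)$. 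Because $\wgtfunc{\thp}\le\eq^{-3}$ while $S_{\M-1}\ge(\M-1)\eq^3$, the ratio $z\eqdef\wgtfunc{\thp}(x,y,\auxrvb)/S_{\M-1}$ obeys $0\le z\le\eq^{-6}/(\M-1)$, which is strictly below $1$ exactly when $\M>\eq^{-6}+1$; this is the source of the hypothesis and legitimises the expansion $\log(1+z)=z-\sum_{k\ge1}(-1)^{k-1}z^{k+1}/(k+1)$. The leading contribution $\M\,\E[z]$ is matched against $\bar w_\thp/\int\bar w_\thp\,d\mu$: the residual reduces to the ratio gap $|\M\,\E[S_{\M-1}^{-1}]-(\int\bar w_\thp\,d\mu)^{-1}|$, of order $(\M-1)^{-1}$ and bounded by Lemma~\ref{lemma:bias_gen} through the constant $\bq_2$. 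The remainder $\M\,\E[\sum_{k\ge1}(-1)^{k-1}z^{k+1}/(k+1)]$ is bounded in norm by $\M\sum_{k\ge1}(k+1)^{-1}(\eq^{-6}/(\M-1))^{k+1}=\tfrac{\M}{\M-1}\eq^{-6}\sum_{j\ge1}(j+1)^{-1}(\eq^{-6}/(\M-1))^{j}$; using $\M/(\M-1)\le2$ and separating the lowest-order ($j=1$) term feeds a further $(\M-1)^{-1}$ contribution into the coefficient $\bq_2\eq^{-3}+2\eq^{-6}$, while the tail $j\ge2$ gives the infinite series $2\eq^{-6}\tvnorm{\mutilde}\sum_{j\ge2}(j+1)^{-1}(\eq^{-6}/(\M-1))^{j}$ after multiplying by $\wgtfunc{\thp}\le\eq^{-3}$ and integrating against $\indmeas(d\auxrvb)\,\abs{\mutilde}(dx)$ (the latter producing the $\tvnorm{\mutilde}$ factor). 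Collecting the three pieces gives the claimed bound.

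The main obstacle is the control of the two leading-order ratio biases to the correct $\mathcal{O}(\M^{-1})$ order—namely the self-normalised importance-sampling bias of the first term and the Jensen/ratio gap $|\M\,\E[S_{\M-1}^{-1}]-(\int\bar w_\thp\,d\mu)^{-1}|$ in the second—both of which are the content of Lemma~\ref{lemma:bias_gen}. Once these are in hand, the tail of the logarithmic expansion is routine, the only care needed being to keep $z<1$ so that the expansion and the resulting geometric-type series converge, which is guaranteed precisely by $\M>\eq^{-6}+1$.
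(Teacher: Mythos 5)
Your proposal is correct and follows essentially the same route as the paper's proof: you rewrite $\gradlik{\thp}$ through the pushforward identity for the reparameterised weights (the paper's Eq.~\eqref{eq:grad_predlik}), reduce the logarithmic term of $\grad{\thp}$ using the zero total mass of $\mutilde$ and expand $\log(1+z)$ with $z\le\eq^{-6}/(\M-1)<1$ (the paper's Lemma~\ref{lemma:taylor_log} and Eq.~\eqref{eq:grad_pelbo}), and then control the self-normalised importance-sampling bias and the reciprocal-mean gap exactly via the two claims of Lemma~\ref{lemma:bias_gen}, with the geometric-type tail handled as in Eq.~\eqref{eq:bias2}. The decomposition, the role of the hypothesis $\M>\eq^{-6}+1$, and the collection of constants all match the paper's argument.
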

	\begin{proof}
		By Lemma~\ref{lemma:taylor_log} and Fubini's Theorem we may write
		\begin{multline}\label{eq:grad_pelbo}
			\grad{\thp}(z)=\grad{\thp}(y,\mu,\mutilde)
			\\=\E_{(\mu\tensprod\indmeas)^{\tensprod \M}}\left[\frac{\sum_{i=1}^{\M}\nabla_\thp\wgtfunc{\thp}(X^i,y,\auxrv^i)/\M}{\sum_{i=1}^{\M}\wgtfunc{\thp}(X^i,y,\auxrv^i)/\M}\right]
			+\int \E_{(\mu\tensprod\indmeas)^{\tensprod (\M-1)}}\left[\frac{\M\wgtfunc{\thp}(x,y,\auxrvb)}{\sum_{i=1}^{\M-1}\wgtfunc{\thp}(X^i,y,\auxrv^i)}\right]\indmeas(d\auxrvb)\,\mutilde(dx)
			\\+\int \M\sum_{j=2}^{\infty}\frac{(-1)^{j+1}}{j}\E_{(\mu\tensprod\indmeas)^{\tensprod (\M-1)}}\left[\left(\frac{\wgtfunc{\thp}(x,y,\auxrvb)}{\sum_{i=1}^{\M-1}\wgtfunc{\thp}(X^i,y,\auxrv^i)}\right)^j\right]\indmeas(d\auxrvb)\,\mutilde(dx). 
		\end{multline}
		In order to be able to compare $\gradlik{\thp}$ to $\grad{\thp}$, we express the former as 
		\begin{multline}\label{eq:grad_predlik}
			\gradlik{\thp}(z)=\gradlik{\thp}(x,y,\mu,\mutilde)
			=\frac{\nabla_\thp\int\emdens{\thp}(x',y)\,\hidker{\thp}(x,dx')\,\mu(dx)+\int\emdens{\thp}(x',y)\,\hidker{\thp}(x,dx')\,\mutilde(dx)}{\int \emdens{\thp}(x',y)\,\hidker{\thp}(x,dx')\,\mu(dx)}
			\\=\frac{\int\nabla_\thp\wgtfunc{\thp}(x,y,\auxrvb)\,\indmeas(d\auxrvb)\,\mu(dx)+\int\wgtfunc{\thp}(x,y,\auxrvb)\,\indmeas(d\auxrvb)\,\mutilde(dx)}{\int \wgtfunc{\thp}(x,y,\auxrvb)\,\indmeas(d\auxrvb)\,\mu(dx)}
			\\=\frac{\E_{\mu\tensprod\indmeas}\left[\nabla_\thp\wgtfunc{\thp}(X,y,\auxrv)\right]}{\E_{\mu\tensprod\indmeas}\left[\wgtfunc{\thp}(X,y,\auxrv)\right]}+\frac{\int  \wgtfunc{\thp}(x,y,\auxrvb)\indmeas(d\auxrvb)\,\mutilde(dx)}{\E_{\mu\tensprod\indmeas}\left[\wgtfunc{\thp}(X,y,\auxrv)\right]}.
		\end{multline}
		To bound the bias, we begin with the first terms of \eqref{eq:grad_pelbo} and \eqref{eq:grad_predlik}, and by Lemma~\ref{lemma:bias_gen}, 
		\begin{equation}\label{eq:bias1}
			\norm{\frac{\E_{\mu\tensprod\indmeas}\left[\nabla_\thp\wgtfunc{\thp}(X,y,\auxrv)\right]}{\E_{\mu\tensprod\indmeas}\left[\wgtfunc{\thp}(X,y,\auxrv)\right]}-\E_{(\mu\tensprod\indmeas)^{\tensprod \M}}\left[\frac{\sum_{i=1}^{\M}\nabla_\thp\wgtfunc{\thp}(X^i,y,\auxrv^i)/\M}{\sum_{i=1}^{\M}\wgtfunc{\thp}(X^i,y,\auxrv^i)/\M}\right]}\le\frac{\bq_1}{\M}.
		\end{equation}
		The second term of \eqref{eq:grad_pelbo} is compared to the second term of \eqref{eq:grad_predlik}, and again by Lemma~\ref{lemma:bias_gen},
		\begin{multline}\label{eq:bias3}
			\norm{\int \left(\E_{(\mu\tensprod\indmeas)^{\tensprod (\M-1)}}\left[\frac{\wgtfunc{\thp}(x,y,\auxrvb)\M}{\sum_{i=1}^{\M-1}\wgtfunc{\thp}(X^i,y,\auxrv^i)}\right]-\frac{\wgtfunc{\thp}(x,y,\auxrvb)}{\E_{\mu\tensprod\indmeas}\left[\wgtfunc{\thp}(X,y,\auxrv)\right]}\right)\,\indmeas(d\auxrvb)\,\mutilde(dx)}
			\\\le\abs{\frac{1}{\E_{\mu\tensprod\indmeas}\left[\wgtfunc{\thp}(X,y,\auxrv)\right]}-\E_{(\mu\tensprod\indmeas)^{\tensprod (\M-1)}}\left[\frac{\M}{\sum_{i=1}^{\M-1}\wgtfunc{\thp}(X^i,y,\auxrv^i)}\right]}
			\norm{\int\wgtfunc{\thp}(x,y,\auxrvb)\,\indmeas(d\auxrvb)\,\mutilde(dx)}
			\\\le \frac{\bq_2\eq^{-3}\tvnorm{\mutilde}}{\M-1}.
		\end{multline}
			Finally, we turn to the last term of \eqref{eq:grad_pelbo} and proceed like 
		\begin{multline}\label{eq:bias2}
			\norm{\int\M\sum_{j=2}^{\infty}\frac{(-1)^{j+1}}{j}\E_{(\mu\tensprod\indmeas)^{\tensprod (\M-1)}}\left[\left(\frac{\wgtfunc{\thp}(x,y,\auxrvb)}{\sum_{i=1}^{\M-1}\wgtfunc{\thp}(X^i,y,\auxrv^i)}\right)^j\right]\indmeas(d\auxrvb)\,\mutilde(dx)}
			\\\le\int\abs{\M\sum_{j=2}^{\infty}\frac{(-1)^{j+1}}{j}\E_{(\mu\tensprod\indmeas)^{\tensprod (\M-1)}}\left[\left(\frac{\wgtfunc{\thp}(x,y,\auxrvb)}{\sum_{i=1}^{\M-1}\wgtfunc{\thp}(X^i,y,\auxrv^i)}\right)^j\right]}\indmeas(d\auxrvb)\,\abs{\mutilde}(dx)
			\\\le \tvnorm{\mutilde}\M\sum_{j=2}^{\infty} \frac{1}{j}\left(\frac{ \eq^{-6}}{\M-1}\right)^j\le2\eq^{-6}\tvnorm{\mutilde}\sum_{j=2}^{\infty} \frac{1}{j}\left(\frac{ \eq^{-6}}{M-1}\right)^{j-1}.
		\end{multline}
		Combining \eqref{eq:bias1},  \eqref{eq:bias3}, and \eqref{eq:bias2} we obtain
		\begin{align}
			\norm{\grad{\thp}(z)-\gradlik{\thp}(z)}
			&\le \norm{\E_{(\mu\tensprod\indmeas)^{\tensprod \M}}\left[\frac{\sum_{i=1}^{\M}\nabla_\thp\wgtfunc{\thp}(X^i,y,\auxrv^i)}{\sum_{i=1}^{\M}\wgtfunc{\thp}(X^i,y,\auxrv^i)}\right]-\frac{\E_{\mu\tensprod\indmeas}\left[\nabla_\thp\wgtfunc{\thp}(X,y,\auxrv)\right]}{\E_{\mu\tensprod\indmeas}\left[\wgtfunc{\thp}(X,y,\auxrv)\right]}} \\
			&\quad +\norm{\int \left(\E_{(\mu\tensprod\indmeas)^{\tensprod (\M-1)}}\left[\frac{\M\wgtfunc{\thp}(x,y,\auxrvb)}{\sum_{i=1}^{\M-1}\wgtfunc{\thp}(X^i,y,\auxrv^i)}\right]-\frac{\wgtfunc{\thp}(x,y,\auxrvb)}{\E_{\mu\tensprod\indmeas}\left[\wgtfunc{\thp}(X,y,\auxrv)\right]}\right)\indmeas(d\auxrvb)\,\mutilde(dx)}
			\\
            &\quad +\norm{\int \M\sum_{j=2}^{\infty}\frac{(-1)^{j+1}}{j}\left(\E_{(\mu\tensprod\indmeas)^{\tensprod (\M-1)}}\left[\frac{\wgtfunc{\thp}(x,y,\auxrvb)}{\sum_{i=1}^{\M-1}\wgtfunc{\thp}(X^i,y,\auxrv^i)}\right]\right)^j\,\indmeas(d\auxrvb)\,\mutilde(dx)}
			\\
            &\le \frac{\bq_1}{\M}
			+\frac{(\bq_2\eq^{-3}+2\eq^{-6})\tvnorm{\mutilde}}{\M-1}+2\eq^{-6}\tvnorm{\mutilde}\sum_{j=2}^{\infty} \frac{1}{j+1}\left(\frac{ \eq^{-6}}{M-1}\right)^{j},
		\end{align}
		which concludes the proof.
	\end{proof}
	\begin{corollary}\label{cor:biasmf}
		Let Assumptions \ref{assum:eq}, \ref{assum:bg}, \ref{assum:ssm_unspec}, \ref{assum:bounds_r_grad}, and \ref{assum:init} hold. 
		Then there exists a function $\biasfunc:\nset_{\ge2}\to \rsetpos$
    such that
		$$\biasfunc(\M)=\mathcal{O}\left(\frac{1}{\M-1}\right),$$
		and which satisfies, for all $\M\ge2$, $\thp\in\parspace$, $t\in \nset$, and $y_{1:t + 1} \in \set{Y}^{t + 1}$, 
		\begin{equation}
				\norm{\grad{\thp}(y_{t+1},\filt{\thp}{t},\tang{\thp}{t})-\gradlik{\thp}(y_{t+1},\filt{\thp}{t},\tang{\thp}{t})}\le\biasfunc(\M).
			\end{equation}
		Consequently, it also holds that
		\begin{equation}
				\norm{\nabla_\thp\contr(\thp)-\nabla_\thp\lyap(\thp)}\le\biasfunc(\M).
			\end{equation}
	\end{corollary}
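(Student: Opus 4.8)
The plan is to obtain the corollary directly from the pointwise bias bound of Theorem~\ref{thm:bias_grad_gen}, by supplying a uniform control on the only $z$-dependent quantity appearing there, namely $\tvnorm{\mutilde}$. I first note that both $\grad{\thp}(z)=\grad{\thp}(y,\mu,\mutilde)$ and $\gradlik{\thp}(z)=\gradlik{\thp}(y,\mu,\mutilde)$ are independent of the first coordinate $x$ of $z$, so the three-argument objects in the statement coincide with the four-argument ones evaluated at $z=(\cdot,y_{t+1},\filt{\thp}{t},\tang{\thp}{t})$; it therefore suffices to bound $\norm{\grad{\thp}(z)-\gradlik{\thp}(z)}$ with $\mu=\filt{\thp}{t}$ and $\mutilde=\tang{\thp}{t}$.

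First I would specialise Theorem~\ref{thm:bias_grad_gen} to these arguments. By Lemma~\ref{lemma:bound_tang}, $\tvnorm{\tang{\thp}{t}}\le\ctang$ uniformly in $t$, $\thp$, and the observation sequence. Substituting this into the right-hand side of Theorem~\ref{thm:bias_grad_gen} gives, for every $\M>\eq^{-6}+1$,
\begin{equation}
\norm{\grad{\thp}(y_{t+1},\filt{\thp}{t},\tang{\thp}{t})-\gradlik{\thp}(y_{t+1},\filt{\thp}{t},\tang{\thp}{t})}\le \frac{\bq_1}{\M}+\frac{(\bq_2\eq^{-3}+2\eq^{-6})\ctang}{\M-1}+2\eq^{-6}\ctang\sum_{j=2}^{\infty}\frac{1}{j+1}\left(\frac{\eq^{-6}}{\M-1}\right)^{j},
\end{equation}
a quantity independent of $t$, $\thp$, and the data. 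I would then define $\biasfunc(\M)$ to equal this right-hand side for $\M>\eq^{-6}+1$ and, for the finitely many values $2\le\M\le\eq^{-6}+1$, to equal the crude uniform bound obtained from $\grad{\thp},\gradlik{\thp}\in\lip{\alg{Z}}$ (Lemma~\ref{lemma:func_in _lipz}) via part~(i) of Definition~\ref{def:lipz} together with $\tvnorm{\tang{\thp}{t}}\le\ctang$. This makes the first claimed inequality hold for all $\M\ge2$.

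Next I would verify the rate $\biasfunc(\M)=\mathcal{O}(1/(\M-1))$. The first two terms are $\mathcal{O}(1/(\M-1))$ since $1/\M\le 1/(\M-1)$. For the series, the hypothesis $\M>\eq^{-6}+1$ forces $\eq^{-6}/(\M-1)<1$, so the tail converges and is controlled by its leading term $j=2$, which is $\mathcal{O}(1/(\M-1)^2)=o(1/(\M-1))$; the finite initial segment of small values of $\M$ does not affect the asymptotics. Hence $\biasfunc(\M)=\mathcal{O}(1/(\M-1))$.

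Finally, for the consequence on the asymptotic gradients, I would invoke Proposition~\ref{prop:all_conv_slln}, which represents $\nabla_\thp\contr(\thp)$ and $\nabla_\thp\lyap(\thp)$ as the ($\prob$-a.s.) ergodic limits of $T^{-1}\sum_{t=0}^{T-1}\gradlik{\thp}(\z{t}{\thp})$ and $T^{-1}\sum_{t=0}^{T-1}\grad{\thp}(\z{t}{\thp})$. Since the filter and tangent-filter coordinates of $\z{t}{\thp}$ are exactly $\filt{\thp}{t}$ and $\tang{\thp}{t}$, the first part of the corollary bounds each summand $\norm{\grad{\thp}(\z{t}{\thp})-\gradlik{\thp}(\z{t}{\thp})}$ by $\biasfunc(\M)$; moving the norm inside the average, using the triangle inequality, and letting $T\to\infty$ yields $\norm{\nabla_\thp\contr(\thp)-\nabla_\thp\lyap(\thp)}\le\biasfunc(\M)$. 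There is no genuine obstacle here, as the corollary is essentially an assembly of Theorem~\ref{thm:bias_grad_gen}, Lemma~\ref{lemma:bound_tang}, and Proposition~\ref{prop:all_conv_slln}; the only care required is the bookkeeping for small $\M$ in defining $\biasfunc$ and confirming that the power-series term enters only at order $(\M-1)^{-2}$.
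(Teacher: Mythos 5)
Your proposal is correct and follows essentially the same route as the paper's proof: specialise Theorem~\ref{thm:bias_grad_gen} at $(\mu,\mutilde)=(\filt{\thp}{t},\tang{\thp}{t})$ and uniformise via $\tvnorm{\tang{\thp}{t}}\le\ctang$ from Lemma~\ref{lemma:bound_tang}, patch the finitely many small values $2\le\M\le\eq^{-6}+1$ with the crude $\lip{\alg{Z}}$ bounds from Lemma~\ref{lemma:func_in _lipz}, and transfer the pointwise bound to the asymptotic gradients through the ergodic representation of Proposition~\ref{prop:all_conv_slln}. The only (immaterial) difference is that you pass to the limit along the a.s.\ path averages while the paper takes an infimum over $t$ of the averaged expectations; both are licensed by the same proposition.
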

	\begin{proof}
		By Theorem~\ref{thm:bias_grad_gen}, if $\M> \eq^{-6}+1$,
		\begin{equation}
			\norm{\grad{\thp}(y_{t+1},\filt{\thp}{t},\tang{\thp}{t})-\gradlik{\thp}(y_{t+1},\filt{\thp}{t},\tang{\thp}{t})}
			\le \frac{\bq_1}{\M}		+\frac{(\bq_2\eq^{-3}+2\eq^{-6})\tvnorm{\tang{\thp}{t}}}{\M-1}+2\eq^{-6}\tvnorm{\tang{\thp}{t}}\sum_{j=2}^{\infty} \frac{1}{j+1}\left(\frac{ \eq^{-6}}{M-1}\right)^{j},
		\end{equation}
        otherwise, inspecting the proof of Lemma~\ref{lemma:func_in _lipz},
        \begin{equation}
			\norm{\grad{\thp}(y_{t+1},\filt{\thp}{t},\tang{\thp}{t})-\gradlik{\thp}(y_{t+1},\filt{\thp}{t},\tang{\thp}{t})}
			\le (2\bgw\eq^{-6}+2\eq^{-2}\bg)(1+\tvnorm{\tang{\thp}{t}}).
		\end{equation}
		Note that by Lemma~\ref{lemma:bound_tang}, $\tvnorm{\tang{\thp}{t}}\le\ctang$. Hence, we may define
		\begin{multline}
			\biasfunc(\M)\eqdef 	(\bq_1+\bq_2+2)\ctang\frac{\eq^{-6}}{\M-1}+2\eq^{-6}\ctang\sum_{j=2}^{\infty} \frac{1}{j+1}\left(\frac{ \eq^{-6}}{M-1}\right)^{j}\1{\{\M>\eq^{-6}+1\}}
   \\+(2\bgw\eq^{-6}+2\eq^{-2}\bg)(1+\ctang)\1{\{\M\le\eq^{-6}+1\}},
		\end{multline}
		which is clearly $\mathcal{O}(1/(\M-1))$, proving that the first claim holds true. Finally, by applying Proposition~\ref{prop:all_conv_slln} we obtain that 
		\begin{multline}
			\norm{\nabla_{\thp}\contr(\thp)-\nabla_\thp\lyap(\thp)}\le\inf_{t\in \nsetpos}\left\{\norm{\nabla_{\thp}\contr(\thp)-\frac{1}{t}\sum_{s=0}^{t-1}\E\left[\gradlik{\thp}(\z{s}{\thp})\right]}\right.
			\\+ \norm{\frac{1}{t}\sum_{s=0}^{t-1}\E\left[\grad{\thp}(\z{s}{\thp})\right]- \nabla_\thp\lyap(\thp)}
			\left.+\frac{1}{t}\sum_{s=0}^{t-1}\E\left[\norm{\gradlik{\thp}(\z{s}{\thp})-\grad{\thp}(\z{s}{\thp})}\right]\right\}\le \biasfunc(\M),
		\end{multline}
		which concludes the proof.
	\end{proof}

	\subsection{Auxiliary results}\label{subsec:aux}
	\begin{lemma}[Convergence of martingale difference sequences]\label{lemma:martdiffseq}
		Let $(N_t)_{t\in\nsetpos}$ be a martingale difference sequence, \ie an adapted stochastic process such that $(N_t)_{t\in\nsetpos}$ is uniformly integrable and $\E[N_{t+1}\mid N_{t},\dots,N_0]=0$ for all $t\in\nset$. Then
		\begin{equation}
			\lim_{T\to\infty}\frac{1}{T}\sum_{t=1}^{T}N_t=0,\quad\prob\text{-a.s.}
		\end{equation}
	\end{lemma}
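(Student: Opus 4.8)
The plan is to reduce the Cesàro statement to the almost sure convergence of a single weighted martingale, which can then be handled by the classical $L^2$ martingale convergence theorem together with Kronecker's lemma. Since the conclusion is coordinatewise, it suffices to treat each of the $p$ components of $N_t$ separately, so I will argue as if $(N_t)_{t\in\nsetpos}$ were real-valued.

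First I would introduce the natural filtration $\mathcal{F}_t=\sigma(N_0,\dots,N_t)$ and form the weighted partial sums $M_T\eqdef\sum_{t=1}^{T}N_t/t$. Because $(N_t)$ is a martingale difference sequence, its increments are orthogonal in $L^2$: for $s<t$ one has $\E[N_sN_t]=\E[N_s\,\E[N_t\mid\mathcal{F}_{t-1}]]=0$, since $N_s$ is $\mathcal{F}_{t-1}$-measurable. Hence $(M_T)_{T\in\nsetpos}$ is itself a martingale with $\E[M_T^2]=\sum_{t=1}^{T}\E[N_t^2]/t^2$. Provided the normalised second moments are summable, i.e. $\sum_{t}\E[N_t^2]/t^2<\infty$, the martingale $(M_T)$ is bounded in $L^2$ and therefore converges almost surely (and in $L^2$) to a finite limit by the martingale convergence theorem. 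Applying Kronecker's lemma with the weights $b_t=t\uparrow\infty$ to the almost surely convergent series $\sum_t N_t/t$ then gives $\tfrac1T\sum_{t=1}^{T}N_t=\tfrac1{b_T}\sum_{t=1}^{T}b_t\,(N_t/t)\to0$ almost surely, which is the claim.

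The step I expect to be the genuine obstacle is securing the square-summability $\sum_t\E[N_t^2]/t^2<\infty$: uniform integrability by itself does not control second moments, and is not in general sufficient to force the almost sure Cesàro limit for an arbitrary martingale difference sequence. What closes the argument in the way the lemma is actually invoked is that the relevant differences are almost surely \emph{uniformly bounded}. Indeed, in the proof of Proposition~\ref{prop:slln} the increments are $\mds{t}{s}=\tz{\thp}^{s}f(\z{t-s}{\thp})-\tz{\thp}^{s+1}f(\z{t-s-1}{\thp})$, and Lemma~\ref{lemma:bound_tang} combined with the Lipschitz/growth bound of Definition~\ref{def:lipz} yields $\norm{\tz{\thp}^{s}f(z)}\le\lipz{f}(1+\ctang)$, whence $\norm{\mds{t}{s}}\le2\lipz{f}(1+\ctang)$ almost surely. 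This uniform bound immediately gives $\sup_t\E[N_t^2]<\infty$, so that $\sum_t\E[N_t^2]/t^2\le\sup_t\E[N_t^2]\cdot\sum_t t^{-2}<\infty$, restoring the hypothesis needed above. I would therefore carry the proof through under this explicit uniform $L^2$ control, after which orthogonality of the increments, $L^2$-boundedness of $(M_T)$, and Kronecker's lemma are entirely routine.
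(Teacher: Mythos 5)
You follow exactly the same route as the paper's own proof: form the weighted sums $M_T\eqdef\sum_{t=1}^{T}N_t/t$, obtain almost sure convergence of the series from a martingale convergence theorem, and finish with Kronecker's lemma. The substantive difference is that the paper's proof simply asserts ``it is easy to check that $(M_t)$ is a martingale; then by the martingale convergence theorem it has a limit $\prob$-almost surely'', with no verification of any boundedness hypothesis --- and that is precisely the step you isolated as the obstacle. Your diagnosis is correct: uniform integrability gives only $\sup_t\E\abs{N_t}<\infty$, the crude estimate $\E\abs{M_T}\le\sum_{t=1}^{T}\E\abs{N_t}/t$ diverges logarithmically, and neither $L^1$- nor $L^2$-boundedness of $(M_T)$ follows from the stated hypotheses. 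In fact the lemma is false as stated: for $t\ge 2$ take $(N_t)$ independent with $\prob(N_t=t)=\prob(N_t=-t)=1/(2t\log t)$ and $N_t=0$ otherwise. This is an adapted, uniformly integrable martingale difference sequence (conditional expectations reduce to unconditional ones, and $\E[\abs{N_t}\1{\{\abs{N_t}>K\}}]\le 1/\log K$ uniformly in $t$), yet $\sum_t\prob(\abs{N_t}=t)=\infty$, so by the second Borel--Cantelli lemma $\abs{N_t}=t$ infinitely often $\prob$-a.s., which makes consecutive Ces\`aro averages differ by one infinitely often and precludes $T^{-1}\sum_{t=1}^{T}N_t\to 0$.

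Your repair is also the right one, and it matches how the lemma is actually consumed. In the proof of Proposition~\ref{prop:slln} the differences are $\mds{t}{s}$, and the bound invoked there, $\norm{\tz{\thp}^{s}f(z)}\le\lipz{f}(1+\ctang)$ (from Lemma~\ref{lemma:bound_tang} together with Definition~\ref{def:lipz}), holds pointwise, so $\norm{\mds{t}{s}}\le 2\lipz{f}(1+\ctang)$ almost surely. This gives $\sup_t\E[N_t^2]<\infty$, hence $\sum_t\E[N_t^2]/t^2<\infty$ by comparison with $\sum_t t^{-2}$, so $(M_T)$ is an $L^2$-bounded martingale, converges $\prob$-a.s., and Kronecker's lemma concludes. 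The honest fix, which your write-up effectively carries out, is to restate Lemma~\ref{lemma:martdiffseq} under uniformly bounded (or uniformly $L^2$-bounded) differences --- a hypothesis the application supplies --- rather than mere uniform integrability; the paper's one-line proof glosses over exactly this point.
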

	\begin{proof}
		For every $t \in \nsetpos$, let $M_t\eqdef\sum_{s=1}^{t}N_s/s$. It is easy to check that $(M_t)_{t\in\nsetpos}$ is a martingale. Then by the \emph{martingale convergence theorem} it has a limit $\prob$-almost surely. Finally, by Kronecker's lemma, since $\sum_{t=1}^{T}N_t/t$ converges $\prob$-a.s. as $T\to \infty$, then 
		\begin{equation}
			\lim_{T\to\infty}\frac{1}{T}\sum_{t=1}^{T}N_t= 0,\quad \prob\text{-a.s.}
		\end{equation}
	\end{proof}
	\begin{lemma}\label{lemma:taylor_log}
		Let Assumptions \ref{assum:eq}--\ref{assum:bg}, and \ref{assum:bounds_r_grad} hold and let $\M>\eq^{-6}+1$. Then for all $z=(x,y,\mu,\mutilde)\in\set{Z}$ it holds
		\begin{multline}
			\int\M\log \left(1+\frac{\wgtfunc{\thp}(x,y,\auxrvb)}{\sum_{i=1}^{\M-1}\wgtfunc{\thp}(x^i,y,\auxrvb^i)}\right)\,\indmeas(d\auxrvb)\,\mutilde(dx)
			\\= \int  \M\sum_{k=1}^{\infty}\frac{(-1)^{k+1}}{k}\left(\frac{\wgtfunc{\thp}(x,y,\auxrvb)}{\sum_{i=1}^{\M-1}\wgtfunc{\thp}(x^i,y,\auxrvb^i)}\right)^k\,\indmeas(d\auxrvb)\,\mutilde(dx).
		\end{multline}
	\end{lemma}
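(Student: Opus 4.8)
The plan is to expand the logarithm in its power (Mercator) series and then to justify interchanging that series with the integration. First I would introduce the shorthand
\begin{equation}
u \eqdef \frac{\wgtfunc{\thp}(x, y, \auxrvb)}{\sum_{i=1}^{\M-1}\wgtfunc{\thp}(x^i, y, \auxrvb^i)}
\end{equation}
for the ratio appearing inside the logarithm. By Lemma~\ref{lemma:boundw}, every weight satisfies $\eq^3 \le \wgtfunc{\thp} \le \eq^{-3}$, so the numerator is at most $\eq^{-3}$ while the denominator is at least $(\M-1)\eq^3$; hence
\begin{equation}
0 < u \le \frac{\eq^{-6}}{\M - 1} \eqdef q.
\end{equation}
The hypothesis $\M > \eq^{-6} + 1$ is precisely what guarantees $q < 1$, uniformly in the integration variables $(x, \auxrvb, x^{1:\M-1}, \auxrvb^{1:\M-1})$ and in $\thp$.

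Next I would invoke, for $|u| < 1$, the standard expansion $\log(1 + u) = \sum_{k=1}^{\infty} (-1)^{k+1} u^k / k$. Since $0 < u \le q < 1$ on the whole domain of integration, each term obeys $\lvert (-1)^{k+1} u^k / k \rvert \le q^k / k$, and the resulting series of bounds is summable, with $\sum_{k=1}^{\infty} q^k / k = -\log(1 - q) < \infty$. Thus the series converges absolutely, and the convergence is dominated by a fixed summable sequence that does not depend on the integration variables.

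It then remains to move the infinite sum outside the integral. Here I would recall that, exactly as in the use of this identity inside $\grad{\thp}$, the denominator is integrated against the probability measure $(\mu \tensprod \indmeas)^{\tensprod(\M-1)}$, that $\indmeas$ is a probability measure, and that $\tvnorm{\mutilde} < \infty$, so all the measures involved are finite. Because the partial sums are dominated by the summable bound above, the dominated convergence theorem (equivalently, Fubini--Tonelli applied to counting measure on $\nsetpos$ and the finite product measure) permits exchanging summation and integration; multiplying through by the constant $\M$ then yields the stated identity. The hard part, and essentially the only delicate point, is securing the uniform strict bound $q < 1$: without it the series for $\log(1 + u)$ would neither converge nor admit a domination independent of the integration variables, which is precisely why the threshold $\M > \eq^{-6} + 1$ is imposed.
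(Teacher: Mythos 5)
Your proof is correct and follows essentially the same route as the paper: both rest on the uniform bound $\wgtfunc{\thp}(x,y,\auxrvb)/\sum_{i=1}^{\M-1}\wgtfunc{\thp}(x^i,y,\auxrvb^i)\le \eq^{-6}/(\M-1)<1$ (from Lemma~\ref{lemma:boundw} and $\M>\eq^{-6}+1$) and the Mercator expansion of $\log(1+u)$. You in fact supply more detail than the paper, which invokes the Taylor expansion without spelling out the dominated-convergence/Fubini step justifying the interchange of summation and integration.
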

	\begin{proof}
		The claim follows immediately from a Taylor expansion, $\log(1+x)$ being analytic when $\abs{x}<1$ and
		\begin{equation}
			\frac{\wgtfunc{\thp}(x,y,\auxrvb)}{\sum_{i=1}^{\M-1}\wgtfunc{\thp}(x^i,y,\auxrvb^i)}\le\frac{ \eq^{-6}}{\M-1}<1
		\end{equation}
        for $\M>\eq^{-6}+1$.
	\end{proof}
	\begin{lemma}\label{lemma:bias_gen}
		Let Assumptions \ref{assum:eq}, \ref{assum:bg}, and \ref{assum:bounds_r_grad} hold. Then for every $\mu\in\probmeas{\alg{X}}$, $y\in\set{Y}$, and $\M\ge2$ there exist constants $\bq_1 > 0$ and $\bq_2>0$, depending only on $\eq$ and $\bgw$, such that
		\begin{equation}
			\norm{\E_{(\mu\tensprod\indmeas)^{\tensprod \M}}\left[\frac{\sum_{i=1}^{\M}\nabla_\thp\wgtfunc{\thp}(X^i,y,\auxrv^i)}{\sum_{i=1}^{\M}\wgtfunc{\thp}(X^i,y,\auxrv^i)}\right]-\frac{\E_{\mu\tensprod\indmeas}\left[\nabla_\thp\wgtfunc{\thp}(X,y,\auxrv)\right]}{\E_{\mu\tensprod\indmeas}\left[\wgtfunc{\thp}(X,y,\auxrv)\right]}}\le\frac{\bq_1}{\M}
		\end{equation}
		and 
		\begin{equation}
			\abs{\frac{1}{\E_{\mu\tensprod\indmeas}\left[\wgtfunc{\thp}(X,y,\auxrv)\right]}-\E_{(\mu\tensprod\indmeas)^{\tensprod (\M-1)}}\left[\frac{\M}{\sum_{i=1}^{\M-1}\wgtfunc{\thp}(X^i,y,\auxrv^i)}\right]}\le \frac{\bq_2}{\M-1}.
		\end{equation}
	\end{lemma}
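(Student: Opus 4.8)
The plan is to view both estimators as functions of the i.i.d.\ bounded weights $W_i \eqdef \wgtfunc{\thp}(X^i, y, \auxrv^i)$ and gradients $\nabla_\thp W_i \eqdef \nabla_\thp\wgtfunc{\thp}(X^i, y, \auxrv^i)$, where the $(X^i,\auxrv^i)$ are i.i.d.\ from $\mu\tensprod\indmeas$. By Lemma~\ref{lemma:boundw} the deterministic bounds $\eq^3 \le W_i \le \eq^{-3}$ and $\norm{\nabla_\thp W_i} \le \bgw$ hold, so that $\bar W \eqdef \E[W_1] \in [\eq^3,\eq^{-3}]$, $\norm{\overline{\nabla W}} \le \bgw$ with $\overline{\nabla W}\eqdef\E[\nabla_\thp W_1]$, and the sample averages $S_k \eqdef k^{-1}\sum_{i=1}^k W_i$ satisfy $S_k \ge \eq^3$ and $\V(S_k) = \V(W_1)/k \le \eq^{-6}/k$. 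The central idea is to exploit that $S_k-\bar W$ and $T_M - \overline{\nabla W}$ (with $T_M \eqdef M^{-1}\sum_i \nabla_\thp W_i$) are centred, which upgrades crude $\mathcal{O}(M^{-1/2})$ estimates to the claimed $\mathcal{O}(M^{-1})$ ones.

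For the first bound I would re-centre the ratio estimator, so that (the first-order term vanishing by $\E[T_M - \overline{\nabla W}]=0$)
\begin{equation}
\E\!\left[\frac{T_M}{S_M}\right] - \frac{\overline{\nabla W}}{\bar W}
= \E\!\left[(T_M - \overline{\nabla W})\Big(\tfrac{1}{S_M} - \tfrac{1}{\bar W}\Big)\right]
+ \overline{\nabla W}\,\E\!\left[\tfrac{1}{S_M} - \tfrac{1}{\bar W}\right].
\end{equation}
In the first summand I insert $\tfrac{1}{S_M}-\tfrac{1}{\bar W}=(\bar W - S_M)/(S_M\bar W)$, bound the denominator below by $\eq^6$, and apply Cauchy--Schwarz to get $\eq^{-6}\sqrt{\E\norm{T_M-\overline{\nabla W}}^2}\sqrt{\V(S_M)}$; both factors being $\mathcal{O}(M^{-1/2})$ makes the product $\mathcal{O}(M^{-1})$. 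For the second summand I again use centredness to obtain the exact identity $\E[\tfrac{1}{S_M}-\tfrac{1}{\bar W}] = \E[(\bar W - S_M)^2/(S_M\bar W^2)] \le \eq^{-9}\V(S_M) \le \eq^{-15}/M$, which times $\norm{\overline{\nabla W}}\le\bgw$ is again $\mathcal{O}(M^{-1})$. Collecting the two contributions defines $\bq_1$.

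For the second bound I would apply the same reciprocal-moment identity to $S_{M-1}$: by Jensen $\E[1/S_{M-1}] \ge 1/\bar W$, and exactly as above $0 \le \E[1/S_{M-1}-1/\bar W] \le \eq^{-15}/(M-1)$. Writing $M/\sum_{i=1}^{M-1}W_i = \tfrac{M}{M-1}S_{M-1}^{-1}$ and subtracting gives
\begin{equation}
\frac{1}{\bar W} - \frac{M}{M-1}\E\!\left[\frac{1}{S_{M-1}}\right]
= -\frac{1}{(M-1)\bar W} - \frac{M}{M-1}\,\E\!\left[\frac{1}{S_{M-1}} - \frac{1}{\bar W}\right],
\end{equation}
using $1-\tfrac{M}{M-1}=-\tfrac{1}{M-1}$. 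The first term is at most $\eq^{-3}/(M-1)$, and the second at most $\tfrac{M}{M-1}\eq^{-15}/(M-1)\le 2\eq^{-15}/(M-1)$ since $M/(M-1)\le 2$ for $M\ge2$; hence $\bq_2=\eq^{-3}+2\eq^{-15}$ works.

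The main obstacle is the ratio term in the first bound: bounding $\E[T_M/S_M]-\overline{\nabla W}/\bar W$ naively yields only $\mathcal{O}(M^{-1/2})$, and the sharper rate hinges precisely on first re-centring to annihilate the first-order fluctuations and then pairing the two resulting centred $\mathcal{O}(M^{-1/2})$ factors through Cauchy--Schwarz. (Controlling $\E\norm{T_M-\overline{\nabla W}}^2$ in the max-norm costs a harmless factor of the parameter dimension $p$, which I would absorb into $\bq_1$.)
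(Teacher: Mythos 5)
Your proof is correct and follows essentially the same route as the paper's: after recentring to kill the first-order fluctuations, your two summands coincide (up to sign and rearrangement) with the terms the paper obtains from the identity $a/b-c/d=\tfrac{1}{d}(\tfrac{a}{b}(d-b)+(a-c))$, and both are then handled identically — a variance bound of order $M^{-1}$ on the quadratic term and Cauchy--Schwarz pairing of two centred $\mathcal{O}(M^{-1/2})$ factors on the cross term, with the same treatment of the second claim via the reciprocal-moment identity. The only blemish is the dimension factor $p$ you absorb into $\bq_1$ (the lemma asks for constants depending only on $\eq$ and $\bgw$); this is avoided by running your argument component-wise in $\thp$ and taking the maximum norm at the end, which is in fact also what is needed to make the paper's own max-norm Cauchy--Schwarz step fully rigorous.
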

	\begin{proof}
		We apply the identity $a/b-c/d=1/d(a/b(d-b)+(a-c))$ and write
		\begin{multline}
			\frac{\E_{\mu\tensprod\indmeas}\left[\nabla_\thp\wgtfunc{\thp}(X,y,\auxrv)\right]}{\E_{\mu\tensprod\indmeas}\left[\wgtfunc{\thp}(X,y,\auxrv)\right]}-\E_{(\mu\tensprod\indmeas)^{\tensprod \M}}\left[\frac{\sum_{i=1}^{\M}\nabla_\thp\wgtfunc{\thp}(X^i,y,\auxrv^i)/\M}{\sum_{i=1}^{\M}\wgtfunc{\thp}(X^i,y,\auxrv^i)/\M}\right]
			\\= \E_{(\mu\tensprod\indmeas)^{\tensprod \M}}\left[\frac{1}{\sum_{i=1}^{\M}\wgtfunc{\thp}(X^i,y,\auxrv^i)/\M}\left(\frac{\E_{\mu\tensprod\indmeas}\left[\nabla_{\thp}\wgtfunc{\thp}(X,y,\auxrv)\right]}{\E_{\mu\tensprod\indmeas}\left[\wgtfunc{\thp}(X,y,\auxrv)\right]}\right.\right.
			\\\left.\left.\left(\frac{1}{\M}\sum_{i=1}^{\M}\wgtfunc{\thp}(X^i,y,\auxrv^i)-\E_{\mu\tensprod\indmeas}\left[\wgtfunc{\thp}(X,y,\auxrv)\right]\right)
			+\left(\E_{\mu\tensprod\indmeas}\left[\nabla_\thp\wgtfunc{\thp}(X,y,\auxrv)\right]-\frac{1}{\M}\sum_{i=1}^{\M}\nabla_\thp\wgtfunc{\thp}(X^i,y,\auxrv^i)\right)\right)\right]
			\\=\E_{(\mu\tensprod\indmeas)^{\tensprod \M}}\left[\left(\frac{1}{\sum_{i=1}^{\M}\wgtfunc{\thp}(X^i,y,\auxrv^i)/\M}-\frac{1}{\E_{\mu\tensprod\indmeas}\left[\wgtfunc{\thp}(X,y,\auxrv)\right]}\right)\right.
			\\\times\left(\frac{\E_{\mu\tensprod\indmeas}\left[\nabla_{\thp}\wgtfunc{\thp}(X,y,\auxrv)\right]}{\E_{\mu\tensprod\indmeas}\left[\wgtfunc{\thp}(X,y,\auxrv)\right]}\frac{1}{\M}\sum_{i=1}^{\M}\left(\wgtfunc{\thp}(X^i,y,\auxrv^i)-\E_{\mu\tensprod\indmeas}\left[\wgtfunc{\thp}(X,y,\auxrv)\right]\right)\right.
			\\\left.\left.+\frac{1}{\M}\sum_{i=1}^{\M}\left(\E_{\mu\tensprod\indmeas}\left[\nabla_{\thp}\wgtfunc{\thp}(X,y,\auxrv)\right]-\nabla_{\thp}\wgtfunc{\thp}(X^i,y,\auxrv^i)\right)\right)\right],
		\end{multline}
		where we used $\E_{\mu\tensprod\indmeas}\left[\wgtfunc{\thp}(X,y,\auxrv)-\E_{\mu\tensprod\indmeas}\left[\wgtfunc{\thp}(X,y,\auxrv)\right]\right]=0$ and $\E_{\mu\tensprod\indmeas}\left[\nabla_{\thp}\wgtfunc{\thp}(X,y,\auxrv)-\E_{\mu\tensprod\indmeas}\left[\nabla_{\thp}\wgtfunc{\thp}(X,y,\auxrv)\right]\right]=0$ in the last equality. Using Lemma~\ref{lemma:boundw} and the Cauchy--Schwarz inequality,
		\begin{align}
			&\hspace{-1cm}\norm{\frac{\E_{\mu\tensprod\indmeas}\left[\nabla_\thp\wgtfunc{\thp}(X,y,\auxrv)\right]}{\E_{\mu\tensprod\indmeas}\left[\wgtfunc{\thp}(X,y,\auxrv)\right]}-\E_{(\mu\tensprod\indmeas)^{\tensprod \M}}\left[\frac{\sum_{i=1}^{\M}\nabla_\thp\wgtfunc{\thp}(X^i,y,\auxrv^i)/\M}{\sum_{i=1}^{\M}\wgtfunc{\thp}(X^i,y,\auxrv^i)/\M}\right]}
			\\&\le\E_{(\mu\tensprod\indmeas)^{\tensprod \M}}\left[\frac{\abs{\frac{1}{\M}\sum_{i=1}^{\M}\wgtfunc{\thp}(X^i,y,\auxrv^i)-\E_{\mu\tensprod\indmeas}\left[\wgtfunc{\thp}(X,y,\auxrv)\right]}}{	\E_{\mu\tensprod\indmeas}\left[\wgtfunc{\thp}(X,y,\auxrv)\right]\sum_{i=1}^{\M}\wgtfunc{\thp}(X^i,y,\auxrv^i)/\M}\right.
			\\&\hspace{1cm}\times\left.\frac{\E_{\mu\tensprod\indmeas}\left[\norm{\nabla_{\thp}\wgtfunc{\thp}(X,y,\auxrv)}\right]}{\E_{\mu\tensprod\indmeas}\left[\wgtfunc{\thp}(X,y,\auxrv)\right]}\abs{\frac{1}{\M}\sum_{i=1}^{\M}\wgtfunc{\thp}(X^i,y,\auxrv^i)-\E_{\mu\tensprod\indmeas}\left[\wgtfunc{\thp}(X,y,\auxrv)\right]}\right]
			\\&\hspace{1cm}+\E_{(\mu\tensprod\indmeas)^{\tensprod \M}}\left[\frac{\abs{\frac{1}{\M}\sum_{i=1}^{\M}\wgtfunc{\thp}(X^i,y,\auxrv^i)-\E_{\mu\tensprod\indmeas}\left[\wgtfunc{\thp}(X,y,\auxrv)\right]}}{	\E_{\mu\tensprod\indmeas}\left[\wgtfunc{\thp}(X,y,\auxrv)\right]\sum_{i=1}^{\M}\wgtfunc{\thp}(X^i,y,\auxrv^i)/\M}\right.
			\\&\hspace{1cm}\times\left.			\norm{\frac{1}{\M}\sum_{i=1}^{\M}\nabla_{\thp}\wgtfunc{\thp}(X^i,y,\auxrv^i)-\E_{\mu\tensprod\indmeas}\left[\nabla_{\thp}\wgtfunc{\thp}(X,y,\auxrv)\right]}\right]
			\\&\le \eq^{-9}\bgw\E_{(\mu\tensprod\indmeas)^{\tensprod \M}}\left[\left(\frac{1}{\M}\sum_{i=1}^{\M}\wgtfunc{\thp}(X^i,y,\auxrv^i)-\E_{\mu\tensprod\indmeas}\left[\wgtfunc{\thp}(X,y,\auxrv)\right]\right)^2\right]
			\\&\hspace{1cm}+\eq^{-6}\E_{(\mu\tensprod\indmeas)^{\tensprod \M}}\left[\left(\frac{1}{\M}\sum_{i=1}^{\M}\wgtfunc{\thp}(X^i,y,\auxrv^i)-\E_{\mu\tensprod\indmeas}\left[\wgtfunc{\thp}(X,y,\auxrv)\right]\right)^2\right]^{1/2}
			\\&\hspace{1cm}\times\E_{(\mu\tensprod\indmeas)^{\tensprod \M}}\left[\norm{\frac{1}{\M}\sum_{i=1}^{\M}\nabla_{\thp}\wgtfunc{\thp}(X^i,y,\auxrv^i)-\E_{\mu\tensprod\indmeas}\left[\nabla_\thp\wgtfunc{\thp}(X,y,\auxrv)\right]}^2\right]^{1/2}
			\\&\le \eq^{-9}\bgw\frac{1}{\M}\V_{\mu\tensprod\indmeas}(\wgtfunc{\thp}(X,y,\auxrv))+\eq^{-6}\frac{1}{\M}\V_{\mu\tensprod\indmeas}(\wgtfunc{\thp}(X,y,\auxrv))^{1/2}
			\\&\hspace{1cm}\times\E_{\mu\tensprod\indmeas}\left[\norm{\nabla_{\thp}\wgtfunc{\thp}(X,y,\auxrv)-\E_{\mu\tensprod\indmeas}\left[\nabla_\thp\wgtfunc{\thp}(X,y,\auxrv)\right]}^2\right]^{1/2}\le3 \eq^{-9}\bgw\frac{1}{\M}\eqqcolon\frac{\bq_1}{\M}.
		\end{align}
		Similarly, to establish the second claim of the lemma we proceed like
			\begin{align}
			&\hspace{-1cm}\abs{\frac{1}{\E_{\mu\tensprod\indmeas}\left[\wgtfunc{\thp}(X,y,\auxrv)\right]}-\E_{(\mu\tensprod\indmeas)^{\tensprod (\M-1)}}\left[\frac{\M/(\M-1)}{\sum_{i=1}^{\M-1}\wgtfunc{\thp}(X^i,y,\auxrv^i)/(\M-1)}\right]}
			\\&=\E_{(\mu\tensprod\indmeas)^{\tensprod (\M-1)}}\left[\left( \frac{1}{\sum_{i=1}^{\M-1}\wgtfunc{\thp}(X^i,y,\auxrv^i)/(\M-1)}-\frac{1}{\E_{\mu\tensprod\indmeas}\left[\wgtfunc{\thp}(X,y,\auxrv)\right]}\right)\right.
			\\&\hspace{1cm}\left.\times\frac{1}{\E_{\mu\tensprod\indmeas}\left[\wgtfunc{\thp}(X,y,\auxrv)\right]}\left(\frac{1}{\M-1}\sum_{i=1}^{\M-1}\wgtfunc{\thp}(X^i,y,\auxrv^i)-\E_{\mu\tensprod\indmeas}\left[\wgtfunc{\thp}(X,y,\auxrv)\right]\right) \right]
			\\&\hspace{1cm}+\E_{(\mu\tensprod\indmeas)^{\tensprod (\M-1)}}\left[\frac{1-\M/(\M-1)}{\sum_{i=1}^{\M-1}\wgtfunc{\thp}(X^i,y,\auxrv^i)/(\M-1)}\right]
			\\&\le\E_{(\mu\tensprod\indmeas)^{\tensprod (\M-1)}}\left[ \frac{\abs{\E_{\mu\tensprod\indmeas}\left[\wgtfunc{\thp}(X,y,\auxrv)\right]-\sum_{i=1}^{\M-1}\wgtfunc{\thp}(X^i,y,\auxrv^i)/(\M-1)}}{\E_{\mu\tensprod\indmeas}\left[\wgtfunc{\thp}(X,y,\auxrv)\right]\sum_{i=1}^{\M-1}\wgtfunc{\thp}(X^i,y,\auxrv^i)/(\M-1)}\right.
			\\&\hspace{1cm}\left.\times\frac{1}{\E_{\mu\tensprod\indmeas}\left[\wgtfunc{\thp}(X,y,\auxrv)\right]}\abs{\frac{1}{\M-1}\sum_{i=1}^{\M-1}\wgtfunc{\thp}(X^i,y,\auxrv^i)-\E_{\mu\tensprod\indmeas}\left[\wgtfunc{\thp}(X,y,\auxrv)\right]}\right] 
			\\&\hspace{1cm}+\E_{(\mu\tensprod\indmeas)^{\tensprod (\M-1)}}\left[\frac{1/(\M-1)}{\sum_{i=1}^{\M-1}\wgtfunc{\thp}(X^i,y,\auxrv^i)/(\M-1)}\right]
			\\&\le \eq^{-9}\E_{(\mu\tensprod\indmeas)^{\tensprod (\M-1)}}\left[\left(\frac{1}{\M-1}\sum_{i=1}^{\M-1}\wgtfunc{\thp}(X^i,y,\auxrv^i)-\E_{\mu\tensprod\indmeas}\left[\wgtfunc{\thp}(X,y,\auxrv)\right]\right)^2\right]
			\\&\hspace{1cm}+\eq^{-3}\frac{1}{\M-1}\le (\eq^{-15}+\eq^{-3})\frac{1}{\M-1}\eqqcolon\frac{\bq_2}{\M-1}.
		\end{align}
	\end{proof}

\section{Proof of Lemma~\ref{lemma:tfunc}}\label{app:tfunc}

\begin{proof}
	First note that the complete-data score function is additive, \ie it satisfies
	$$
	\nabla_\thp\log p_\thp(x_{0:t},y_{0:t})=\nabla_\thp\log\emdens{\thp}(y_0\mid x_0)+\sum_{s=0}^{t-1}\nabla_\thp\log\hiddens{\thp}(x_{s+1}\mid x_{s})+\nabla_\thp\log\emdens{\thp}(y_{s+1}\mid x_{s+1})
	$$
	for all $t\in\nset$, $x_{0:t}\in\set{X}^{t+1}$ and $y_{0:t}\in\set{Y}^{t+1}$. Now, for $t\in\nsetpos$ we write
	\begin{align}
		\tfunc{\thp}{t+1}(x_{t+1})&=\int \nabla_\thp\log p_\thp(x_{0:t+1},y_{0:t+1})p_\thp(x_{0:t}\mid y_{0:t},x_{t+1})\,dx_{0:t}
		\\&=\int \nabla_\thp\log p_\thp(x_{0:t+1},y_{0:t+1})p_\thp(x_{0:t-1}\mid y_{0:t},x_{t:t+1})p_\thp(x_{t}\mid y_{0:t},x_{t+1})\,dx_{0:t}
		\\&=\iint \nabla_\thp\log p_\thp(x_{0:t},y_{0:t})p_\thp(x_{0:t-1}\mid y_{0:t-1},x_{t})\,dx_{0:t-1}\,p_\thp(x_{t}\mid y_{0:t},x_{t+1})\,dx_t
		\\&\hspace{5mm}+\int\nabla_\thp\log\{\hiddens{\thp}(x_{t+1}\mid x_{t})\emdens{\thp}(y_{t+1}\mid x_{t+1})\}\int p_\thp(x_{0:t-1}\mid y_{0:t-1},x_{t})\,dx_{0:t-1}\,p_\thp(x_{t}\mid y_{0:t},x_{t+1})\,dx_{t}
		\\&=\int\left( \tfunc{\thp}{t}(x_t)+\nabla_\thp\log\{\hiddens{\thp}(x_{t+1}\mid x_{t})\emdens{\thp}(y_{t+1}\mid x_{t+1})\}\right)\frac{\filt{\thp}{t}(x_{t})\hiddens{\thp}(x_{t+1}\mid x_t)}{\int\filt{\thp}{t}(x)\hiddens{\thp}(x_{t+1}\mid x)\,dx}\,dx_{t}.
	\end{align}
	Hence, given $(\tfunc{\thp}{t},\filt{\thp}{t},y_{t+1})$, we define the recursive update
	\begin{equation}
		\rectsmooth{\thp}(\tfunc{\thp}{t},\filt{\thp}{t},y_{t+1})(x_{t+1})
		\eqdef\int\left( \tfunc{\thp}{t}(x_t)+\nabla_\thp\log\{\hiddens{\thp}(x_{t+1}\mid x_{t})\emdens{\thp}(y_{t+1}\mid x_{t+1})\}\right)\frac{\filt{\thp}{t}(x_{t})\hiddens{\thp}(x_{t+1}\mid x_t)}{\int\filt{\thp}{t}(x)\hiddens{\thp}(x_{t+1}\mid x)\,dx}\,dx_{t},
	\end{equation}
	which satisfies $\tfunc{\thp}{t+1}(x_{t+1})=\rectsmooth{\thp}(\tfunc{\thp}{t},\filt{\thp}{t},y_{t+1})(x_{t+1})$ for all $x_{t+1}\in\set{X}$. This proves (i).
	
	To prove (ii), we first note that for every $t\in\nset$ and every measurable function $f:\set{X}\to \rset$, 
	\begin{multline}
		\int f(x_t)\tang{\thp}{t}(x_{t})\,dx_t=\int f(x_t)\nabla_{\thp}\filt{\thp}{t}(x_t)\,dx_t= \int f(x_t) \nabla_\thp\frac{p_\thp(x_{0:t}, y_{0:t})}{p_\thp( y_{0:t})}\, dx_{0:t}
		\\=\int f(x_t) \nabla_\thp \log p_\thp(x_{0:t}, y_{0:t}) p_\thp(x_{0:t}\mid y_{0:t}) \, dx_{0:t}-\frac{\nabla_\thp p_\thp(y_{0:t})}{p_\thp(y_{0:t})} \int f(x_t)\filt{\thp}{t}(x_t)\,dx_t
		\\=\E_{\filt{\thp}{0:t}}[f(X_t)(\nabla_\thp\log p_\thp(X_{0:t}, y_{0:t})-\E_{\filt{\thp}{0:t}}[\nabla_\thp\log p_\thp(X_{0:t}, y_{0:t})])],
	\end{multline}
	which implies 
	\begin{multline}
		\grad{\thp}(y_{t+1},\filt{\thp}{t},\tang{\thp}{t})=\E_{(\filt{\thp}{t})^{\varotimes \M-1}\tensprod\filt{\thp}{0:t}\tensprod\indmeas^{\varotimes \M}}  \left[\frac{\sum_{i=1}^{\M}\nabla_\thp\wgtfunc{\thp}(X^i, y_{t+1}, \auxrv^i)}{\sum_{i'=1}^{\M}\wgtfunc{\thp}(X^{i'}, y_{t+1}, \auxrv^{i'})}\right.
		\\+\left.\M\log\left(\frac{1}{\M}\sum_{i=1}^{\M}\wgtfunc{\thp}(X^i, y_{t+1}, \auxrv^i)\right)\left(\nabla_\thp\log p_\thp(X_{0:t}^\M,y_{0:t})-\E_{\filt{\thp}{0:t}} [\nabla_\thp\log p_\thp(X_{0:t},y_{0:t})]\right)\right],
	\end{multline}
	where $(X^i)_{i=1}^{\M-1}$ are i.i.d. draws from $\filt{\thp}{t}$, the trajectory $X_{0:t}^\M$ is drawn from $\filt{\thp}{0:t}$, and the auxiliary variables $(\auxrv^i)_{i=1}^\M$ are i.i.d. draws from $\indmeas$. Now, by the definition of $\tfunc{\thp}{t}$, we have
	\begin{align}
		\E_{\filt{\thp}{0:t}}\left[f(X_{t})\nabla_\thp\log p_\thp(X_{0:t},y_{0:t})\right]
		&=\int f(x_{t})\nabla_\thp\log p_\thp(x_{0:t},y_{0:t})\filt{\thp}{0:t}(x_{0:t})\,dx_{0:t}
		\\
        &=\int f(x_{t})\nabla_\thp\log p_\thp(x_{0:t},y_{0:t})p_\thp(x_{0:t-1}\mid y_{0:t},x_{t})p_{\thp}(x_{t}\mid y_{0:t})\,dx_{0:t}
		\\
  &=\int f(x_{t})\int\nabla_\thp\log p_\thp(x_{0:t},y_{0:t})p_\thp(x_{0:t-1}\mid y_{0:t-1},x_{t})\,dx_{0:t-1}\,\filt{\thp}{t}(x_{t})\,dx_{t} \\
  &=\E_{\filt{\thp}{t}}[f(X_t)\tfunc{\thp}{t}(X_t)].
	\end{align}
	Thus, it finally holds that 
	\begin{multline}
		\grad{\thp}(y_{t+1},\filt{\thp}{t},\tang{\thp}{t})= \E_{(\filt{\thp}{t}\tensprod\indmeas)^{\varotimes \M}}  \left[\frac{\sum_{i=1}^{\M}\nabla_\thp\wgtfunc{\thp}(X^i, y_{t+1}, \auxrv^i)}{\sum_{i'=1}^{\M}\wgtfunc{\thp}(X^{i'}, y_{t+1}, \auxrv^{i'})}\right.
		\\+\left.\M\log\left(\frac{1}{\M}\sum_{i=1}^{\M}\wgtfunc{\thp}(X^i, y_{t+1}, \auxrv^i)\right)\left(\tfunc{\thp}{t}(X^\M)-\E_{\filt{\thp}{t}}[\tfunc{\thp}{t}(X)]\right)\right]=\gradt{\thp}(y_{t+1},\filt{\thp}{t},\tfunc{\thp}{t}).
	\end{multline}
\end{proof}

\numberwithin{algorithm}{section}
\section{The \texttt{AdaSmooth} algorithm}\label{app:adasmooth}
In this appendix we discuss how to produce recursively by means of SMC methods \citep{gordon:salmond:smith:1993,doucet:defreitas:gordon:2001,chopin:papaspiliopoulos:2020} the sample $\{(\epart{t+1}{i},\tstat{t+1}{i},\wgt{t+1}{i})\}_{i=1}^N$, given $\{(\epart{t}{i},\tstat{t}{i},\wgt{t}{i})\}_{i=1}^N$ together with the new observation $Y_{t+1}$ and the updated parameter $\thp_{t+1}$.
In a standard particle filter, the propagation of the particles is carried out by first resampling, with replacement, the particles $(\epart{t}{i})_{i=1}^N$ proportionally to the weights $(\wgt{t}{i})_{i=1}^N$ (the so-called \emph{selection} step) and then \emph{mutating} the resampled particles using some proposal distribution. Here we let the proposal be  $\propdens{\thp}$, which is progressively adapted over the iterations, but also other choices are possible. Then, after the recalculation of the importance weights, it remains to propagate the terms $(\tstat{t}{i})_{i=1}^N$. Several works on particle-based online additive smoothing have presented strategies for updating these statistics by approximating the recursion $\rectsmooth{\thp}$. In fact, this requires the computation of an expectation with respect to the so-called \emph{backward kernel}, whose density is proportional to $\filt{\thp}{t}(x_{t})\hiddens{\thp}(x_{t+1}\mid x_t)$; see Appendix~\ref{app:tfunc} for details. For each propagated particle $\epart{t+1}{i}$, $i \in \{1,\dots, N\}$, the backward kernel is translated into a categorical distribution with support on the previous particle cloud $(\epart{t}{j})_{j=1}^N$ and probabilities proportional to $\{\wgt{t}{j}\hiddens{\thp}(\epart{t+1}{i}\mid\epart{t}{j})\}_{j=1}^N$. \cite{delmoral:doucet:singh:2010} evaluate these expectations exactly, while the \texttt{PaRIS} algorithm \citep{olsson:westerborn:2017} employs a Monte Carlo approximation based on two samples at least (sufficient to guarantee long-term stability) from $\catdist(\{\wgt{t}{j}\hiddens{\thp}(\epart{t+1}{i}\mid\epart{t}{j})\}_{j=1}^N)$, for each $\epart{t+1}{i}$, $i \in \{1,\dots,N\}$. Both of these approaches have complexity $\mathcal{O}(N^2)$ per time step, due to the computation of the normalising constants, which is not desirable; therefore, the latter implements the backward sampling according to the accept-reject alternative suggested by \citet{douc:garivier:moulines:olsson:2009}, where the complexity of each draw does not depend on $N$. Still, \citet{dau:chopin:2023}, showed that in many realistic cases, the expected time to acceptance is infinite, which forces to have an early stopping rule for the rejection sampler and obtain the remaining draws from the exact distributions. Therefore, since backward sampling remains the bottleneck of this class of methods, the \texttt{AdaSmooth} algorithm  \citep{mastrototaro:olsson:alenlov:2021} proposes to reduce the frequency of this operation by combining a fast but unstable (as the asymptotic variance grows quadratically in time rather than linearly) naive forward smoother with the \texttt{PaRIS}. This is accomplished by applying sparsely the resampling and the backward sampling operations according to a schedule which is governed by two sequences $(\res_t)_{t\in\nset}\in\{0,1\}$ and $(\bs_t)_{t\in\nset}\in\{0,1\}$, indicating whether to perform resampling or backward sampling at each iteration, respectively. These sequences may be deterministic or adapted to the random variables generated by the algorithm, \eg, by monitoring the weights and the particle-path degeneracy, respectively. If the values equal to one appear regularly in the sequences, then the algorithm is proved to be stable \citep[see][for more details]{mastrototaro:olsson:alenlov:2021}. Algorithm~\ref{algo:adasmooth} illustrates the \texttt{AdaSmooth} update in the special case where the additive functional is the complete-data score of the SSM. 

\begin{algorithm}[htb]
	\caption{\texttt{AdaSmooth}}\label{algo:adasmooth}
	\begin{algorithmic}[1]
		\REQUIRE $\{(\epart{t}{i},\tstat{t}{i},\wgt{t}{i})\}_{i=1}^N$, $Y_{t+1}$, $\thp_{t+1}$.
		\FOR{$i \gets 1,\dots, N$}
		\IF{$\res_t$}
		\STATE draw $\I{t+1}{i}\sim\catdist(\{\wgt{t}{j}\}_{j=1}^N)$;
		\ELSE
		\STATE set $\I{t+1}{i}\gets i$;
		\ENDIF
		\STATE draw $\epart{t+1}{i}\sim \propdens{\thp_{t+1}}(\cdot\mid \epart{t}{\I{t+1}{i}},Y_{t+1})$;
		\STATE set $\wgt{t+1}{i}\gets (\wgt{t}{i})^{1-\res_t}\, \dfrac{\hiddens{\thp_{t+1}}(\epart{t+1}{i}\mid \epart{t}{\I{t+1}{i}})\emdens{\thp_{t+1}}(Y_{t+1}\mid \epart{t+1}{i})}{\propdens{\thp_{t+1}}(\epart{t+1}{i}\mid \epart{t}{\I{t+1}{i}},Y_{t+1})}$;
		\IF{$\res_t$ \AND $\bs_t$}
		\STATE draw $J_{t+1}^{i}\sim\catdist(\{\wgt{t}{j}\hiddens{\thp_{t+1}}(\epart{t+1}{i}\mid\epart{t}{j})\}_{j=1}^N)$;
		\STATE set $\begin{aligned}[t]
			\tstat{t+1}{i}\gets \frac{1}{2}&\big(\tstat{t}{\I{t+1}{i}}+\nabla_\thp\log\hiddens{\thp_{t+1}}(\epart{t+1}{i}\mid \epart{t}{\I{t+1}{i}})+\tstat{t}{J_{t+1}^{i}}+\nabla_\thp\log\hiddens{\thp_{t+1}}(\epart{t+1}{i}\mid \epart{t}{J_{t+1}^{i}})\big)
			\\&+\nabla_\thp\log\emdens{\thp_{t+1}}(Y_{t+1}\mid \epart{t+1}{i});
		\end{aligned}$
		\ELSE
		\STATE set $\tstat{t+1}{i}\gets\tstat{t}{\I{t+1}{i}}+\nabla_\thp\log\hiddens{\thp_{t+1}}(\epart{t+1}{i}\mid \epart{t}{\I{t+1}{i}})+\nabla_\thp\log\emdens{\thp_{t+1}}(Y_{t+1}\mid \epart{t+1}{i})$;
		\ENDIF
		\ENDFOR
		\RETURN $\{(\epart{t+1}{i},\tstat{t+1}{i},\wgt{t+1}{i})\}_{i=1}^N$.
	\end{algorithmic}
\end{algorithm}

\section{Additional numerical experiments}
In this appendix, we present additional results from the numerical experiments conducted on the SLAM model to further highlight the advantages of the {\OSIWAE} algorithm over {\RML} and {\OVSMC}. Figure~\ref{fig:slam_landmark} illustrates the Mean Absolute Error (MAE) averaged over 10 runs for the estimated positions of all landmarks. The dashed lines represent the minimum and maximum MAE observed across all runs at each timestep. As discussed in the main paper, we observe that after the initial phase where the proposal distribution is being learned, {\OSIWAE} achieves a lower MAE than both other algorithms. 

\begin{figure}[H]
    \centering   
    \includegraphics[width=1.\columnwidth]{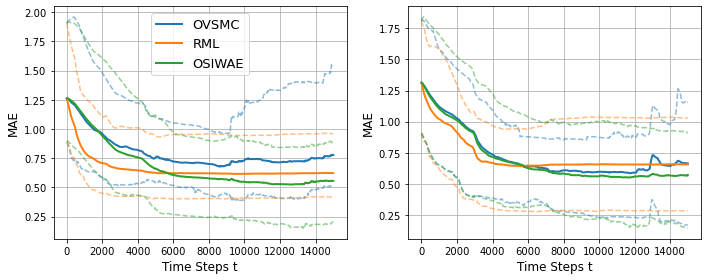}
    \caption{Mean absolute errors (MAEs) averaged over 10 runs for the estimated positions of $L = 8$ landmarks over time using {\OSIWAE}, {\RML}, and {\OVSMC} in a SLAM scenario with motion noise variance \( \sigma_{\text{motion}}^2 = 0.2 \) and observation noise variance \( \sigma_{\text{obs}}^2 = 0.1 \). The dashed lines indicate the minimum and maximum MAE across all runs.
    The proposal distribution \( r_\theta(\cdot \mid x_t, y_{t+1}) \) in both {\OSIWAE} and {\OVSMC} is learned using two distinct neural networks, each with one hidden layer of 12 nodes. All three methods employ \( N = 1000 \) particles, and {\OSIWAE} uses \( M = 1000 \) samples.
    Left panel: All three algorithms are run on the same data without any prior learning. 
    Right panel: A training run is first performed using {\smcsiwae} on a different dataset to learn the proposal distribution; subsequently, all three algorithms are applied to the same data. Each of the 10 runs is performed with the same observations and true landmark positions but with different initial landmark estimates.
    }
    \label{fig:slam_landmark}
\end{figure}

\end{document}